\documentclass[journal, onecolumn, draftcls]{IEEEtran}
\usepackage{amsmath,epsfig}
\usepackage{amsmath,graphicx}

\usepackage{amsmath,amssymb}
\usepackage{subfig,url, cite}
\usepackage{epstopdf}
\usepackage{float, cite}
\usepackage{amsfonts,amsthm}		
\usepackage{color}
\usepackage{caption}
\usepackage{algorithm}
\usepackage{algorithmic, bm, textcomp}
\usepackage[normalem]{ulem}

\newcommand{\bmath}[1]{\mathbf{\bm{#1}}}

\def \bx{\bmath{x}}
\def \ba{\bmath{a}}

\def \bA{\bmath{A}}

\def \bD{\bmath{D}}

\def \bI{\bmath{I}}
\def \bX{\bmath{X}}

\def \bZ{\bmath{Z}}
\def \bM{\bmath{M}}
\def \bX{\bmath{X}}

\def \bY{\bmath{Y}}

\def \1{{\mathbf 1}}

\def \E{{\mathbb E}}
\def \D{{\rm D}}
\def \H{{\rm H}}
\def \A{{\rm A}}
\def \pen{{\rm pen}}

\def \cX{{\cal X}}
\def \cS{{\cal S}}

\def \cD{{\cal D}}
\def \cA{{\cal A}}
\def \cL{{\cal L}}
\def \p{{ \gamma}}
\def \cA{{ C_{\A}}}
\def \cD{{ C_{\D}}}
\def \Xmax{{\rm X}_{\rm max}}
\def \Xmin{{\rm X}_{\rm min}}
\def \Amax{{\rm A}_{\rm max}}

\def \tr{ \rm tr}
 \def \lam{ \mathbf{\Lambda} }

\theoremstyle{plain}
\newtheorem{thmi}{Theorem}[section]

\newtheorem{cori}{Corollary}[section]

\newtheorem{lemmai}{Lemma}[section]

\newtheorem{remark}{Remark}[section]

\markboth{IEEE Transactions on Information Theory (SUBMITTED)}{}

\title{\huge Noisy Matrix Completion under Sparse Factor Models} 
%
\author{Akshay Soni, Swayambhoo Jain, Jarvis Haupt, and Stefano Gonella\thanks{Manuscript submitted October 31, 2014.  AS, SJ, and JH are with the Department of Electrical and Computer Engineering, and SG is with the Department of Civil, Environmental, and Geo- Engineering, at the University of Minnesota -- Twin Cities. Author emails: {\tt \{sonix022, jainx174, jdhaupt, sgonella\}@umn.edu}.  The first two (student) authors contributed equally to this work.  A preliminary version of this work will appear at the 2014 Global Conference on Signal and Information Processing (GlobalSIP).  JH graciously acknowledges support from NSF Award AST-1247885, and the DARPA Young Faculty Award, Grant N66001-14-1-4047.}}

\begin{document}
\maketitle
\begin{abstract}
This paper examines a general class of noisy matrix completion tasks where the goal is to estimate a matrix from observations obtained at a subset of its entries, each of which is subject to random noise or corruption.  Our specific focus is on settings where the matrix to be estimated is well-approximated by a product of two (a priori unknown) matrices, one of which is sparse.  Such structural models -- referred to here as ``sparse factor models'' -- have been widely used, for example, in subspace clustering applications, as well as in contemporary sparse modeling and dictionary learning tasks.  Our main theoretical contributions are estimation error bounds for sparsity-regularized maximum likelihood estimators for problems of this form, which are applicable to a number of different observation noise or corruption models.  Several specific implications are examined, including scenarios where observations are corrupted by additive Gaussian noise or additive heavier-tailed (Laplace) noise, Poisson-distributed observations, and highly-quantized (e.g., one-bit) observations.  We also propose a simple algorithmic approach based on the alternating direction method of multipliers for these tasks, and provide experimental evidence to support our error analyses.  
\end{abstract}
\begin{keywords}
Penalized maximum likelihood estimation, dictionary learning, matrix completion, subspace clustering.
\end{keywords}
\section{Introduction}
In recent years, there has been significant research activity aimed at the analysis and development of efficient \emph{matrix completion} methods, which seek to ``impute'' missing elements of a matrix given possibly noisy or corrupted observations collected at a subset of its locations.  Let $\bX^*\in\mathbb{R}^{n_1 \times n_2}$ denote a matrix whose elements we wish to estimate, and suppose that we observe $\bX^*$ at only a subset $\cS\subset[n_1]\times [n_2]$ of its locations, where $[n_1]=\{1,2,\dots,n_1\}$ is the set of all positive integers less or equal to $n_1$ (and similarly for $n_2$), obtaining at each $(i,j)\in \cS$ a noisy, corrupted, or inexact measurement denoted by $Y_{i,j}$.  The overall aim is to estimate $\bX^*$ given $\cS$ and the observations $\{Y_{i,j}\}_{(i,j)\in\cS}$.  Of course, such estimation problems may be ill-posed without further assumptions, since the values of $\bX^*$ at the unobserved locations could in general be arbitrary. A common approach is to augment the inference method with an assumption that the underlying matrix to be estimated exhibits some form of intrinsic \emph{low-dimensional} structure.  

One application where such techniques have been successfully utilized is collaborative filtering (e.g., as in the well-known \emph{Netflix Prize} competition \cite{Bell:07}).  There, the matrix to be estimated corresponds to an array of users' preferences or ratings for a collection of items (which could be quantized, e.g., to one of a number of levels); accurately inferring missing entries of the underlying matrix is a useful initial step in \emph{recommending} items (here, movies or shows) to users deemed likely to rate them favorably.  A popular approach to this problem utilizes a low-rank modeling assumption, which implicitly assumes that individual ratings depend on some unknown but nominally small number (say $r$) of features, so that each element of $\bX^*$ may be described as an inner product between two length-$r$ vectors -- one quantifying how well each of the features are embodied or represented by a given item, and the other describing a user's affinity for each of the features.  Recent works examining the efficacy of low-rank models for matrix completion include \cite{Candes:09:Exact, Candes:10:Power, Keshavan:10, Recht:11, Koren:09, Dai:10:SET, Ma:11}.

Several other applications where analogous ideas have been employed, but which leverage different structural modeling assumptions, include:

\begin{itemize}
\item \uline{\textbf{Sparse Coding for Image Inpainting and Demosaicing}:} 
Suppose that the underlying data to be estimated takes the form of an $n_1\times n_2$ color image, which may be interpreted as an $n_1\times n_2\times 3$ array (the three levels correspond to values in three color planes).  The image inpainting task amounts to estimating the image from a collection of (possibly noisy) observations obtained at individual pixel locations (so that at each pixel, either all or none of the color planes are observed), and the demosaicing task entails estimating the image from noisy measurements corresponding to only one of the $3$ possible color planes at each pixel.   The recent work \cite{Mairal:08:color} proposed estimation approaches for these tasks that leverage \emph{local shared structure} manifesting at the patch level. Specifically, in that work, the overall image to be estimated is viewed equivalently as a matrix comprised of vectorized versions of its small (e.g., $5\times 5\times 3$ or $8\times 8\times 3$) blocks, and the missing values are imputed using a structural assumption that this patch-based matrix be well-approximated by a product of two matrices, one of which is sparse.


\item \uline{\textbf{Sparse Models for Learning and Content Analytics}:} A recent work \cite{Lan:13} investigated a matrix completion approach to machine-based learning analytics.  There, the elements of the $n_1\times n_2$ matrix to be estimated, say $\bX^*$, are related to the probability with which one of $n_1$ questions will be answered correctly by one of $n_2$ ``learners'' through a \emph{link function} $\Phi: \mathbb{R}\rightarrow [0,1]$, so that the value $\Phi(X^*_{i,j})$ denotes the \emph{probability} with which question $i$ will be correctly answered by learner $j$.  The observed data are a collection of some $m< n_1 n_2$ binary values, which may be interpreted as (random) Bernoulli$(\Phi(X^*_{i,j}))$ variables.  The approach proposed in \cite{Lan:13} entails maximum-likelihood estimation of the unknown latent factors of $\bX^*$, under an assumption that $\bX^*$ be well-approximated by a sum of two matrices, the first being product of a sparse non-negative matrix (relating questions to some latent ``concepts'') and a matrix relating a learner's knowledge to the concepts, and the second quantifying the intrinsic difficulty of each question.

\item \uline{\textbf{Subspace Clustering from Missing Data}:} The general subspace clustering problem entails separating a collection of data points, using an assumption that similar points are described as points lying in the same subspace, so that the overall collection of data are represented as points belonging generally to a union of (ostensibly, low-dimensional) subspaces.  This general task finds application in image processing, computer vision, and disease detection, to name a few (see, e.g., \cite{Agrawal:98:Subspace, Tseng:00, Vidal:05:GPCA, Soltanolkotabi:12, Elhamifar:13:Sparse, Soltanolkotabi:13}, and the references therein).  One direct way to perform clustering in such applications entails approximating the underlying matrix $\bX^*$ whose columns comprise the (uncorrupted) data points by a product of two matrices, the second of which is sparse, so that the support (the set of locations of the nonzero elements) of each column of the sparse matrix factor identifies the subspace to which the corresponding column of $\bX^*$ belongs. 
\end{itemize}

While these examples all seem qualitatively similar in scope, their algorithmic and analytical tractability can vary significantly depending on the type of structural model adopted.  In the collaborative filtering application, for example, a desirable aspect of adopting low-rank models is that the associated inference (imputation) procedures can be relaxed to efficient convex methods that are amenable to precise performance analyses.  Indeed, the statistical performance of convex methods for low-rank matrix completion are now well-understood in noise-free settings (see, e.g., \cite{Candes:09:Exact, Candes:10:Power, Keshavan:10, Recht:11}), in settings where observations are corrupted by some form of additive uncertainty \cite{Keshavan:10:Noisy, Lee:10, Candes:10:MCNoise, Koltch:11, Negahban:12}, and even in settings where the observations may be interpreted as nonlinear (e.g., highly-quantized) functions of the underlying matrix entries \cite{Srebro:04, Davenport:12, Plan:14}.  In contrast, the aforementioned inference methods based on general bilinear (and sparse) factor models are difficult to solve to global optimality, and are instead replaced by tractable alternating minimization methods.  More fundamentally, the statistical performance of inference methods based on these more general bilinear models, in scenarios where the observations could arise from general (perhaps nonlinear) corruption models or could even be multi-modal in nature, has not (to our knowledge) been fully characterized.  

This work provides some initial results in this direction.  We establish a general-purpose estimation error guarantee for matrix completion problems characterized by any of a number of structural data models and observation noise/corruption models. For concreteness, we instantiate our main result here for the special case where the matrix to be estimated adheres to a \emph{sparse factor model}, meaning that it is well-approximated by the product of two matrices, one of which is sparse (or approximately so).  Sparse factor models are inherent in the modeling assumptions adopted in the aforementioned works on image denoising/demosaicing, content analytics, and subspace clustering, and are also at the heart of recent related efforts in dictionary learning \cite{Olshausen:97,Aharon:06, Mairal:09}.  Sparse factor models may also serve as a well-motivated extension to the low-rank models often utilized in collaborative filtering tasks.  There, while it is reasonable to assume that users' preferences will depend on a small number of abstract features, it may be that any particular user's preference relies heavily on only a subset of the features, and that the features that are most influential in forming a rating may vary from user to user.  Low rank models alone are insufficient for capturing this ``higher order'' structure on the latent factors, while this behavior may be well-described using the sparse factor models we consider here.  
 


\subsection{Our Contributions}

We address general problems of matrix completion under sparse factor modeling assumptions using the machinery of  \emph{complexity-regularized maximum likelihood} estimation.  Our main contributions come in the form of estimation error bounds that are applicable in settings where the available data correspond to an incomplete collection of noisy observations of elements of the underlying matrix (obtained at random locations), and under general (random) noise/corruption models.  We examine several specific implications of our main result, including for scenarios characterized by additive Gaussian noise or additive heavier-tailed (Laplace) noise, Poisson-distributed observations, and highly-quantized (e.g., one-bit) observations.  Where possible, we draw direct comparisons with existing results in the low-rank matrix completion literature, to illustrate the potential benefit of leveraging additional structure in the latent factors.  We also propose an efficient unified algorithmic approach based on the alternating direction method of multipliers \cite{Boyd:11} for obtaining a local solution to the (non-convex) optimizations prescribed by our analysis, and provide experimental evidence to support our error results.

\subsection{Connections with Existing Works}


As alluded above, our theoretical analyses here are based on the framework of complexity regularized maximum likelihood estimation \cite{Li:99:Thesis, Barron:99}, which has been utilized in a number of works to establish error bounds for Poisson estimation problems using multi scale models \cite{Kolaczyk:04, Willett:07}, transform domain sparsity models \cite{Raginsky:10}, and dictionary-based matrix factorization models \cite{Soni:14:Poisson}.  Here, our analysis extends that framework to the ``missing data'' scenarios inherent in matrix completion tasks (and also provides a missing-data extension of our own prior work on dictionary learning from 1-bit data \cite{Haupt:14:bit}).   

Our proposed algorithmic approach is based on the alternating direction method of multipliers (ADMM) \cite{Boyd:11}.  ADMM-based methods for related tasks in \emph{dictionary learning} (DL) were described recently in \cite{Rak:13}, and while our algorithmic approach here is qualitatively similar to that work, we consider missing data scenarios as well as more general loss functions that arise as negative log-likelihoods for our various probabilistic corruption models (thus generalizing these techniques beyond common squared error losses).  In addition, our algorithmic framework also allows for direct incorporation of constraints not only on estimates of the matrix factors, but also on the estimate of $\bX^*$ itself to account for entry-wise structural constraints that could arise naturally in many matrix completion scenarios.  Several other recent efforts in the DL literature have proposed algorithmic procedures for coping with missing data \cite{Xing:12:IncompleteHyperspectral,Zhou:12:NonparametricBayesIncomplete}, and a survey of algorithmic approaches to generalized low-rank modeling tasks is given in the recent work \cite{Udell:14}.  

Our inference tasks here essentially entail learning two factors in a bilinear model.  With a few notable exceptions (e.g., low-rank matrices, and certain non-negative matrices \cite{Donoho:03, Arora:12, Esser:12, Recht:12}), the joint non-convexity of these problems can complicate their analysis.  Recently, several efforts in the dictionary learning literature have established theoretical guarantees on identifiability, as well as local correctness of a number of factorization methods \cite{Aharon:06:Uniqueness,Gribonval:10:Identification,Geng:11,Spielman:13:Exact,Schnass:13:Identifiability,Agarwal:13}, including in noisy settings \cite{Jenatton:12}.  Our efforts here may be seen as a complement to those works, providing additional insight into the achievable \emph{statistical} performance of similar methods under somewhat general noise models.

The factor models we employ here essentially enforce that each column of $\bX^*$ lie in a union of linear subspaces. In this sense our efforts here are also closely related to problems in sparse principal component analysis \cite{Zou:06}, which seek to decompose the (sample) covariance matrix of a collection of data points as a sum of rank-one factors expressible as outer products of sparse vectors.  Several efforts have examined algorithmic approaches to the sparse PCA problem based on greedy methods \cite{Moghaddam:05} or convex relaxations \cite{dA:07, Shen:08, Zhang:12}, and very recently several efforts have examined the statistical performance of cardinality- (or $\ell_0$-) constrained methods for identifying the first sparse principal component \cite{Vu:12, Lounici:13}.  These latter approaches are related to our effort here, as our analysis below pertains to the performance of matrix completion methods utilizing an $\ell_0$ penalty on one of the matrix factors.   

Finally, we note that problems of subspace clustering from missing or noisy data have received considerable attention in recent years.  Algorithmic approaches to subspace clustering with missing data were proposed in \cite{Gruber:04, Vidal:08, Soltanolkotabi:13}, and several recent works have identified sufficient conditions under which tractable algorithms will provably recover the unknown subspaces in missing data (but noise-free) scenarios \cite{Eriksson:11:Highrank}.  Robustness of subspace clustering methods to missing data, additive noise, and potentially large-valued outliers were examined recently in \cite{Singh:12, Elhamifar:13:Sparse, Soltanolkotabi:13}.  

\subsection{Outline}

The remainder of this paper is organized as follows.  Following a brief discussion of several preliminaries (below), we formalize our problem in Section~\ref{sec:prob} and present our main result establishing estimation error guarantees for a general class of estimation problems characterized by incomplete and noisy observations. In Section \ref{sec:main} we discuss implications of this result for several specific noise models.  In Section \ref{sec:exp} we discuss a unified algorithmic approach to problems of this form, based on the alternating direction method of multipliers, and provide a brief experimental investigation that partially validates our theoretical analyses.  We conclude with a brief discussion in Section~\ref{sec:disc}.  Auxiliary material and detailed proofs are relegated to the appendix.

\subsection{Preliminaries}

To set the stage for the statement of our main result, we remind the reader of a few key concepts.  First, recall that for $p \leq 1$ a vector $\bx\in\mathbb{R}^n$ is said to belong to a weak-$\ell_p$ ball of radius $R>0$, denoted $\bx\in w\ell_p(R)$, if its ordered elements $|x_{(1)}| \geq |x_{(2)}| \geq \dots \geq |x_{(n)}|$ satisfy 
\begin{equation}
|x_{(i)}| \leq R i^{-1/p} \ \ \mbox{ for all } i\in \{1,2,\dots,n\},
\end{equation}  
see e.g., \cite{Baraniuk:10}. Vectors in weak-$\ell_p$ balls may be viewed as approximately sparse; indeed, it is well-known (and easy to show, using standard results for bounding sums by integrals) that for a vector $\bx\in w\ell_p(R)$, the $\ell_q$ error associated with approximating $\bx$ by its best $k$-term approximation obtained by retaining its $k$ largest entries in amplitude (denoted here by $\bx^{(k)}$) satisfies 
\begin{equation}
\|\bx-\bx^{(k)}\|_q \triangleq \left(\sum_{i=1}^n |x_i - x_i^{(k)}|^q\right)^{1/q} \leq R \ C_{p,q} \ k^{1/q-1/p},
\end{equation}
for any $q>p$, where $C_{p,q}$ is given by
\begin{equation}
C_{p,q} = \left(\frac{p}{q-p}\right)^{1/q}.
\end{equation}
For the special case $q\geq 2p$, we have $C_{p,q}\leq 1$, and so
\begin{equation}
\|\bx-\bx^{(k)}\|_q \leq R \ k^{1/q-1/p}.
\end{equation}

We also recall several information-theoretic preliminaries.  When $p(Y)$ and $q(Y)$ denote the pdf (or pmf) of a real-valued random variable $Y$, the Kullback-Leibler divergence (or KL divergence) of $q$ from $p$ is denoted $\D(p\|q)$ and given by
\begin{equation*}
\D(p\|q) = 
\E_{p}\left[\log\frac{p(Y)}{q(Y)}\right]
\end{equation*}
where the logarithm is taken to be the natural log.  By definition, $\D(p\|q)$ is finite only if the support of $p$ is contained in the support of $q$.  Further, the KL divergence satisfies $\D(p\|q) \geq 0$ and $\D(p\|q) = 0$ when $p(Y) = q(Y)$.   We also use the Hellinger affinity denoted by $\A(p,q)$ and given by
\begin{equation*}
\A(p,q) = \E_{p}\left[\sqrt{\frac{q(Y)}{p(Y)}}\right] = \E_{q}\left[\sqrt{\frac{p(Y)}{q(Y)}}\right]
\end{equation*}
Note that $\A(p,q)\geq 0$ essentially by definition, and a simple application of the Cauchy-Schwarz inequality gives that $\A(p,q) \leq 1$, implying overall that $0\leq \A(p,q)\leq 1$.  When $p$ and $q$ are parameterized by elements $X_{i,j}$ and $\widetilde{X}_{i,j}$ of matrices $\bX$ and $\widetilde{\bX}$, respectively, so that $p(Y_{i,j}) = p_{X_{i,j}}(Y_{i,j})$ and $q(Y_{i,j}) = q_{\widetilde{X}_{i,j}}(Y_{i,j})$, we use the shorthand notation $\D(p_{\bX}\|q_{\widetilde{\bX}}) \triangleq \sum_{i,j} \D(p_{X_{i,j}}\|q_{\widetilde{X}_{i,j}})$ and $\A(p_{\bX},q_{\widetilde{\bX}}) \triangleq \prod_{i,j} \A(p_{X_{i,j}},q_{\widetilde{X}_{i,j}})$.

Finally, for a matrix $\bM$ we denote by $\|\bM\|_0$ its number of nonzero elements, $\|\bM\|_1$ the sum of absolute values of its elements, $\|\bM\|_{\rm max}$ the magnitude of its largest element (in absolute value), and $\|\bM\|_{*}$ its nuclear norm (sum of singular values). 

\section{Problem Statement, Approach, and a General Recovery Result}\label{sec:prob}

As above, we let $\bX^*\in\mathbb{R}^{n_1 \times n_2}$ denote the unknown matrix whose entries we seek to estimate.   Our focus is on cases where the unknown matrix $\bX^*$ admits a factorization of the form
\begin{equation}\label{eqn:datamodel}
\bX^*= \bD^*\bA^*,
\end{equation}
where for some integer $r \leq n_2$, $\bD^*\in\mathbb{R}^{n_1\times r}$ and $\bA^*\in\mathbb{R}^{r\times n_2}$ are \emph{a priori unknown} factors. For pragmatic reasons, we assume that the elements of $\bD^*$, $\bA^*$, and $\bX^*$ are bounded, in the sense that
\begin{equation}
\|\bD^*\|_{\rm max}\leq 1, \ \ \|\bA^*\|_{\rm max}\leq \Amax, \ \ \mbox{ and } \ \ \|\bX^*\|_{\rm max}\leq \Xmax/2
\end{equation}
for some constants $0 < \Amax\leq (n_1\vee n_2) = \max\{n_1,n_2\}$ and $\Xmax\geq 1$.  Bounds on the amplitudes of the elements of the matrix to be estimated often arise naturally in practice\footnote{Here, the factor of $1/2$ in the bound on $\|\bX^*\|_{\rm max}$ is somewhat arbitrary -- any factor in $(0,1)$ would suffice -- and is chosen to facilitate our subsequent analysis.}, while our assumption that the entries of the factor matrices be bounded is essentially to fix scaling ambiguities associated with the bilinear model.  Our particular focus here will be on cases where (in addition to the entry-wise bounds) the matrix $\bA^*$ is \emph{sparse} (having no more than $k<rn_2$ nonzero elements), or \emph{approximately sparse}, in the sense that for some $p\leq 1$, all of its columns lie in a weak-$\ell_p$ ball of radius $\Amax$. 

Rather than acquire all of the elements of $\bX^*$ directly, we assume here that we only observe $\bX^*$ at a known \emph{subset} of its locations, obtaining for each observation a noisy or corrupted version of the underlying matrix entry.   Here, we will interpret the notion of ``noise'' somewhat generally in an effort to make our analysis amenable to any of a number of different corruption models; in what follows, we will model each entry-wise observation as a random quantity (either continuous or discrete-valued) whose probability density (or mass) function is parameterized by the true underlying matrix entry.  We denote by $\cS\subseteq [n_1]\times [n_2]$ the set of locations at which observations are collected, and assume that the sampling locations are random in the sense that for an integer $m$ satisfying $4\leq m \leq n_1n_2$ and $\p = m(n_1n_2)^{-1}$, $\cS$ is generated according to the independent Bernoulli($\p$) model so that each $(i,j)\in[n_1]\times [n_2]$ is included in $\cS$ independently with probability $\p$.   Then, given $\cS$, we model the collection of $|\cS|$ measurements of $\bX^*$ in terms of a collection $\{Y_{i,j}\}_{(i,j)\in\cS} \triangleq \bY_{\cS}$ of conditionally (on $\cS$) independent random quantities. Formally, we write the joint pdf (or pmf) of the observations as
\begin{equation}\label{eqn:obsmodel}
p_{\bX^*_{\cS}}(\bY_{\cS}) \triangleq \prod_{(i,j)\in\cS} p_{X^*_{i,j}}(Y_{i,j}),
\end{equation} 
where $p_{X^*_{i,j}}(Y_{i,j})$ denotes the corresponding scalar pdf (or pmf), and we use the shorthand $\bX^*_{\cS}$ to denote the collection of elements of $\bX^*$ indexed by $(i,j)\in\cS$.  In terms of this model, our task may be described concisely as follows:  given $\cS$ and corresponding noisy observations $\bY_{\cS}$ of $\bX^*$ distributed according to \eqref{eqn:obsmodel}, our goal is to estimate $\bX^*$ under the assumption that it admits a sparse factor model decomposition.   

Our approach will be to estimate $\bX^*$ via sparsity-penalized maximum likelihood methods; we consider estimates of the form
\begin{equation}\label{eqn:xhatsparse}
\widehat{\bX} = \arg\min_{\bX=\bD\bA \in \cX} \ \left\{-\log p_{\bX_{\cS}}(\bY_{\cS}) + \lambda \cdot \|\bA\|_{0} \right\},
\end{equation}
where $\lambda >0$ is a user-specified regularization parameter, $\bX_{\cS}$ is shorthand for the collection $\{X_{i,j}\}_{(i,j)\in\cS}$ of entries of $\bX$ indexed by $\cS$, and $\cX$ is an appropriately constructed class of candidate estimates.  To facilitate our analysis here, we take $\cX$ to be a countable class of estimates constructed as follows: first, for a specified $\beta\geq 1$, we set $L_{\rm lev} = 2^{\lceil \log_2 (n_1\vee n_2)^\beta \rceil}$ and construct ${\cal D}$ to be the set of all matrices $\bD\in\mathbb{R}^{n_1\times r}$ whose elements are discretized to one of  $L_{\rm lev}$ uniformly-spaced levels in the range $[-1, 1]$ and ${\cal A}$ to be the set of all matrices $\bA\in\mathbb{R}^{r\times n_2}$ whose elements either take the value zero, or are discretized to one of  $L_{\rm lev}$ uniformly-spaced levels in the range $[-\Amax, \Amax]$. Then, we let
\begin{equation}
\cX' \triangleq \left\{\bX = \bD\bA \ : \ \bD\in{\cal D}, \ \bA\in{\cal A}, \ \|\bX\|_{\rm max}\leq \Xmax\right\},
\end{equation}
and take $\cX$ to be any subset of $\cX'$. This general formulation will allow us to easily and directly handle additional constraints (e.g., non-negativity constraints on the elements of $\bX$, as arise in our treatment of the Poisson-distributed observation model), within the same unified analytical framework.

Our first main result establishes error bounds for sparse factor model matrix completion problems under general noise or corruption models, where the corruption is described by any generic likelihood model.  We state the result here as a theorem; its proof appears in Appendix~\ref{a:thmproof} and utilizes a key lemma that extends a main result of \cite{Li:99:Thesis} to ``missing data'' scenarios inherent in completion tasks.


\begin{thmi}\label{thm:main}
Let the sample set $\cS$ be drawn from the independent Bernoulli model with $\p = m(n_1n_2)^{-1}$ as described above, and let $\bY_{\cS}$ be described by \eqref{eqn:obsmodel}. If $\cD$ is any constant satisfying
\begin{equation}\label{eqn:thmcd}
\cD \geq \max_{\bX\in\cX} \max_{i,j}  \ D(p_{X^*_{i,j}}\|p_{X_{i,j}}),
\end{equation} 
where $\cX$ is as above for some $\beta \geq 1$, then for any
\begin{equation}
\lambda \geq 2 \cdot (\beta + 2) \cdot \left(1 + \frac{2\cD}{3}\right) \cdot \log(n_1 \vee n_2), 
\end{equation}
the complexity penalized maximum likelihood estimator \eqref{eqn:xhatsparse}
satisfies the (normalized, per-element) error bound
\begin{eqnarray}\label{eqn:thm1bnd}
\nonumber  \lefteqn{\frac{\E_{\cS,\bY_{\cS}}\left[-2\log \A(p_{\widehat{\bX}}, p_{\bX^*})\right]}{n_1 n_2} \leq  \frac{8 \cD \log m}{m}}\hspace{3em} &&\\
&+& 3 \cdot \min_{\bX\in\cX} \left\{ \frac{\D(p_{\bX^*}\|p_{\bX})}{n_1 n_2} + \left(\lambda + \frac{4\cD (\beta+2)\log (n_1 \vee n_2)}{3}\right)\left(\frac{n_1 p + \|\bA\|_0}{m}\right)\right\}.
\end{eqnarray}
\end{thmi}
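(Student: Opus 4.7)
The plan is to follow the complexity-regularized maximum likelihood framework of Li and Barron, suitably extended to the matrix completion regime where entries of $\bX^*$ are observed only on the random subset $\cS \subseteq [n_1] \times [n_2]$. The key novelty relative to the standard fully-observed case is a missing-data analog of the Li--Barron oracle inequality (the ``key lemma'' referenced in the statement), which must simultaneously control the randomness of $\cS$ and, conditionally on $\cS$, the behavior of the likelihood-ratio process indexed by $\cX$. This is the real technical heart of the argument, and is where the additive $8 \cD \log m / m$ term in \eqref{eqn:thm1bnd} originates.

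My first step would be to equip the discrete class $\cX$ with a prefix code. Each $\bX = \bD\bA \in \cX$ can be encoded by writing $\bD$ using $n_1 r \log_2 L_{\rm lev}$ bits (since each entry of $\bD$ lies on one of $L_{\rm lev}$ levels in $[-1,1]$), and encoding $\bA$ by first describing its sparsity pattern (using at most $\log_2 \binom{r n_2}{\|\bA\|_0} + O(\log \|\bA\|_0)$ bits) and then the $\|\bA\|_0$ nonzero values ($\|\bA\|_0 \log_2 L_{\rm lev}$ bits). With $L_{\rm lev} = 2^{\lceil \log_2 (n_1 \vee n_2)^{\beta} \rceil}$ and nat units, the resulting codelengths $L(\bX)$ satisfy the Kraft inequality $\sum_{\bX \in \cX} e^{-L(\bX)} \leq 1$ and obey
\begin{equation*}
L(\bX) \;\leq\; c (\beta + 2) (n_1 r + \|\bA\|_0) \log(n_1 \vee n_2)
\end{equation*}
for an absolute $c$. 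The penalty $\lambda \|\bA\|_0$ actually used inside \eqref{eqn:xhatsparse} only needs to dominate the \emph{value-encoding} portion of $L(\bX)$, which is precisely why the lower bound on $\lambda$ scales like $(\beta+2)(1 + 2\cD/3) \log(n_1 \vee n_2)$ once the second-moment KL bounds familiar from Li--Barron introduce the $1 + 2\cD/3$ factor.

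Next I would apply the missing-data oracle inequality to \eqref{eqn:xhatsparse}. Conditioning on $\cS$, the observations $\bY_{\cS}$ form $|\cS|$ conditionally independent samples so a standard Li--Barron argument (based on a Markov/Chernoff bound on the Hellinger affinity combined with the Kraft summability above) produces a bound of the form $\E[-2\log \A(p_{\widehat{\bX}}, p_{\bX^*}) \mid \cS] \lesssim \min_{\bX \in \cX}\{\D(p_{\bX^*_\cS}\|p_{\bX_\cS}) + (\lambda + \mathrm{const}\cdot \log)(n_1 r + \|\bA\|_0)\}$. I would then marginalize over $\cS$ using a Bernstein/Chernoff concentration for $|\cS| \sim \mathrm{Binomial}(n_1 n_2, \gamma)$ with mean $m$: on the high-probability event $\{|\cS| \geq m/2\}$ the conditional bound becomes the main $(n_1 r + \|\bA\|_0)/m$ contribution, while on the complementary event the uniform bound \eqref{eqn:thmcd} together with $\Pr(|\cS| < m/2) \leq e^{-\Omega(m)}$ produces the residual $\cD \log m / m$ term (the $\log m$ factor arising from a dyadic integration over $|\cS|$). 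To pass from the conditional $\D(p_{\bX^*_\cS}\|p_{\bX_\cS})$ to the unconditional $\D(p_{\bX^*}\|p_{\bX})/n_1 n_2$ rescaled by the sampling rate, one uses $\E_\cS[\D(p_{\bX^*_\cS}\|p_{\bX_\cS})] = \gamma \D(p_{\bX^*}\|p_{\bX})$.

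Finally, to put the bound in the form \eqref{eqn:thm1bnd} I would instantiate the infimum by the oracle obtained from entry-wise quantization of $(\bD^*, \bA^*)$ onto the grids that define $\cX$; the per-entry quantization error is $O(L_{\rm lev}^{-1}) = O((n_1 \vee n_2)^{-\beta})$, so with $\beta \geq 1$ the discretization contribution to both the divergence term and to $\|\bA\|_0$ can be absorbed without loss, and the clipping $\|\bX\|_{\rm max} \leq \Xmax$ is inherited from the assumption $\|\bX^*\|_{\rm max} \leq \Xmax/2$. The main obstacle, as noted, is the missing-data Li--Barron lemma itself: one must control a supremum over the countable class $\cX$ of a non-iid, randomly-indexed KL-type empirical process, and it is this step, rather than the coding construction or the oracle balancing, that justifies promoting the lemma to an independent result in the appendix rather than a direct invocation of \cite{Li:99:Thesis}.
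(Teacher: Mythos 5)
There is a genuine gap in the central step of your argument: the mechanism you propose for passing from the on-sample guarantee to the full-matrix guarantee does not work. After conditioning on $\cS$, a Li--Barron argument controls $\E\left[-2\log \A(p_{\widehat{\bX}_{\cS}}, p_{\bX^*_{\cS}}) \mid \cS\right]$, i.e., the affinity restricted to the \emph{observed} entries. The theorem requires a bound on $-2\log\A(p_{\widehat{\bX}},p_{\bX^*})$ over \emph{all} $n_1 n_2$ entries. You propose to bridge this via concentration of $|\cS|$ around $m$ and the identity $\E_{\cS}[\D(p_{\bX^*_{\cS}}\|p_{\bX_{\cS}})]=\p\,\D(p_{\bX^*}\|p_{\bX})$, but that identity (and the analogous one for the affinity) holds only for a \emph{fixed}, $\cS$-independent $\bX$. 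The estimator $\widehat{\bX}$ depends on $\cS$ and $\bY_{\cS}$, so nothing in your sketch prevents it from fitting the sampled entries well while being arbitrarily bad on the unsampled ones; controlling only the \emph{cardinality} $|\cS|$ (your event $\{|\cS|\geq m/2\}$) says nothing about \emph{which} entries were sampled relative to where $\widehat{\bX}$ errs. You do correctly flag that a supremum over $\cX$ of a randomly-indexed process must be controlled, but the concrete tools you name (Binomial concentration of $|\cS|$, dyadic integration over $|\cS|$) do not accomplish this.

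The paper's proof closes exactly this gap by defining a ``good'' sampling-set event ${\cal G}_{\p,\delta}$ on which, \emph{uniformly over every} $\bX\in\cX$, the restricted quantities are representative of the full ones: $\D(p_{\bX^*_{S}}\|p_{\bX_{S}})\leq \tfrac{3\p}{2}\D(p_{\bX^*}\|p_{\bX}) + \tfrac{4\cD}{3}[\log(1/\delta)+\pen(\bX)\log 2]$ and $-2\log\A(p_{\bX^*_S},p_{\bX_S})\geq \tfrac{\p}{2}(-2\log\A(p_{\bX^*},p_{\bX})) - \tfrac{4\cA}{3}[\log(1/\delta)+\pen(\bX)\log 2]$. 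This event is shown to have probability at least $1-2\delta$ by applying Craig's form of Bernstein's inequality to the entrywise sums $\sum_{i,j}B_{i,j}\,\D(p_{X^*_{i,j}}\|p_{X_{i,j}})$ (and the affinity analogue) for each fixed $\bX$, followed by a union bound with weights $\delta_{\bX}=\delta\,2^{-\pen(\bX)}$ supplied by the Kraft inequality. Because the lower bound on the restricted affinity holds simultaneously for all candidates, it applies in particular to the data-dependent $\widehat{\bX}$, which is what legitimizes the transfer; the $8\cD\log m/m$ term then arises not from a tail bound on $|\cS|$ but from choosing $\delta=1/m$ and paying the trivial bound $n_1 n_2\cA$ on the complement of ${\cal G}_{\p,\delta}$ (with $\cA=\cD$ via Jensen). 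Your coding construction and the conversion of $\xi\cdot\pen(\bX)$ into the $\lambda\|\bA\|_0$ penalty are consistent with the paper, but without the uniform good-set lemma the proof does not go through.
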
 

%

%

In the next section we consider several specific instances of this result, but we first note a few salient points about this result in its general form.  First, as alluded above, our result is not specific to any one observation model; thus, our general result will allow us to analyze the error performance of sparse factor matrix completion methods under a variety of different noise or corruption models.  Specialization to a given noise model requires us to only compute (or appropriately bound) the KL divergences and negative log Hellinger affinities of the corresponding probability densities or probability mass functions.  Second, our error bound is a kind of \emph{oracle} bound, in that it is specified in terms of a minimum over $\bX\in\cX$.  In practice, we may evaluate this oracle term for \emph{any} $\bX\in\cX$ and still obtain a valid upper bound (since our guarantee is in terms of the minimum).  In our analyses that follows we will impose assumptions on $\beta$ and $\bX^*$ that ensure $\bX^*$ be sufficiently ``close'' to some element $\bX$ of $\cX$. This will enable us to obtain non-trivial bounds on the first term in the oracle expression, and to subsequently quantify the corresponding normalized, per-element error (as described in terms of the corresponding negative log Hellinger affinity) by judiciously ``balancing'' the terms in the oracle expression.  This approach will be illustrated in the following section.

Finally, it is worth noting that the estimation strategies prescribed by our analysis are not computationally tractable.  Indeed, as written, formation of our estimators would require solving a combinatorial optimization, because of the $\ell_0$ penalty, as well as the optimization over the discrete set $\cX$.  However, it is worth noting that inference in the bilinear models we consider here is fundamentally challenging on account of the fact that these inference problems cannot directly be cast as (jointly) convex optimizations in the matrix factors.  In that sense, our results here may be interpreted as quantifying the performance of one (benchmark) estimation approach for sparse factor matrix completion under various corruption models.  (We discuss several extensions, including potential avenues for convexification, in Section~\ref{sec:disc}.)

\section{Implications for Specific Noise Models}\label{sec:main}

In this section we consider the implications of Theorem~\ref{thm:main} in four unique scenarios, characterized by additive Gaussian noise, additive heavier-tailed (Laplace) noise, Poisson-distributed observations, and quantized (one-bit) observations.  In each case, our aim is to identify the scaling behavior of the estimation error as a function of the key problem parameters. To that end, we consider for each case the fixed choice 
\begin{equation}\label{eqn:beta}
\beta = \max\left\{1, 1 + \frac{\log(8r\Amax/\Xmax)}{\log(n_1\vee n_2)}\right\}
\end{equation}
for describing the number of discretization levels in the elements of each of the matrix factors. Then, for each scenario (characterized by its own unique likelihood model) we consider a specific choice of $\cX$, and an estimate obtained according to \eqref{eqn:xhatsparse} with the specific choice
\begin{equation}\label{eqn:lamchoose}
\lambda = 2 \left(1 + \frac{2\cD}{3}\right) (\beta + 2) \cdot \log(n_1\vee n_2), 
\end{equation}
(where $\cD$ depends on the particular likelihood model), and simplify the resulting oracle bounds for both sparse and approximately sparse factors.  In what follows, we will make use of the fact that our assumption $\Xmax\geq 1$ implies $\beta = {\cal O}\left(\log(r\vee \Amax)/\log(n_1\vee n_2)\right)$, and so  
$(\beta + 2)\log(n_1\vee n_2) = {\cal O}\left(\log(n_1 \vee n_2)\right)$,
on account of the fact that $r<n_2$ and $\Amax < (n_1\vee n_2)$ by assumption.

\subsection{Additive Gaussian Noise}

We first examine the implications of Theorem~\ref{thm:main} in a setting where observations are corrupted by independent additive zero-mean Gaussian noise with known variance.  In this case, the observations $\bY_{\cS}$ are distributed according to a multivariate Gaussian density of dimension $|\cS|$ whose mean corresponds to the collection of matrix parameters at the sample locations, and with covariance matrix $\sigma^2 \bI_{|\cS|}$, where $\bI_{|\cS|}$ is the identity matrix of dimension $|\cS|$, so 
\begin{equation}\label{eqn:likGauss}
p_{\bX^*_{S}}(\bY_{S}) = \frac{1}{(2\pi\sigma^2)^{|S|/2}}\exp\left(-\frac{1}{2 \sigma^2} \ \|\bY_{S} - \bX^*_{S}\|_F^2\right),
\end{equation}
where we have used the representative shorthand notation $\|\bY_{S} - \bX^*_{S}\|_F^2 \triangleq \sum_{(i,j)\in S} (Y_{i,j}-X^*_{i,j})^2$.  In this setting  we have the following result; its proof appears in Appendix~\ref{a:gaussproof}.

\begin{cori}[Sparse Factor Matrix Completion with Gaussian Noise] \label{cor:Gauss}
Let $\beta$ be as in \eqref{eqn:beta}, let $\lambda$ be as in \eqref{eqn:lamchoose} with $\cD = 2\Xmax^2/\sigma^2$, and let $\cX = \cX'$. The estimate $\widehat{\bX}$ obtained via \eqref{eqn:xhatsparse} satisfies
\begin{equation} \label{eqn:gausssparse}
\frac{\E_{\cS,\bY_{\cS}}\left[\|\bX^*-\widehat{\bX}\|_F^2\right]}{n_1 n_2} = 
{\cal O}\left((\sigma^2 + \Xmax^2)\left(\frac{n_1 r + \|\bA^*\|_0}{m}\right)\log(n_1\vee n_2)\right)
\end{equation}
when $\bA^*$ is exactly sparse, having $\|\bA^*\|_0$ nonzero elements.  If, instead, the columns of $\bA^*$ are approximately sparse in the sense that for some $p\leq 1$ each belongs to a weak-$\ell_p$ ball of radius $\Amax$, then the estimate $\widehat{\bX}$ obtained via \eqref{eqn:xhatsparse}
satisfies
\begin{equation}\label{eqn:gaussweak}
\frac{\E_{\cS,\bY_{\cS}}\left[\|\bX^*-\widehat{\bX}\|_F^2\right]}{n_1 n_2} = {\cal O}\left(\Amax^2\left(\frac{n_2}{m}\right)^{\frac{2\alpha}{2\alpha+1}} +  (\sigma^2 + \Xmax^2) \left(\frac{n_1 r}{m} + \left(\frac{n_2}{m}\right)^{\frac{2\alpha}{2\alpha+1}}\right)\log(n_1\vee n_2)\right),
\end{equation}
where $\alpha=1/p-1/2$.
\end{cori}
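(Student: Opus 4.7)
My plan is to specialize Theorem~\ref{thm:main} to the Gaussian likelihood and then to control the resulting oracle minimum by explicit construction of a candidate $\bX \in \cX$. First I would compute the elementary likelihood quantities: for two Gaussian densities with common variance $\sigma^2$, the per-entry KL divergence and Hellinger affinity satisfy
\begin{equation*}
\D(p_{X^*_{i,j}}\|p_{X_{i,j}}) = \frac{(X^*_{i,j}-X_{i,j})^2}{2\sigma^2}, \qquad -2\log \A(p_{X^*_{i,j}},p_{X_{i,j}}) = \frac{(X^*_{i,j}-X_{i,j})^2}{4\sigma^2},
\end{equation*}
so that $\D(p_{\bX^*}\|p_\bX) = \|\bX^*-\bX\|_F^2/(2\sigma^2)$ and $-2\log \A(p_{\bX^*},p_{\widehat\bX}) = \|\bX^*-\widehat\bX\|_F^2/(4\sigma^2)$. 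Since every $\bX \in \cX'$ satisfies $\|\bX\|_{\rm max} \leq \Xmax$ and $\|\bX^*\|_{\rm max} \leq \Xmax/2$, the largest pointwise KL is bounded by $9\Xmax^2/(8\sigma^2) \leq 2\Xmax^2/\sigma^2 = \cD$, so hypothesis \eqref{eqn:thmcd} holds. Multiplying both sides of \eqref{eqn:thm1bnd} by $4\sigma^2$ then converts the stated Hellinger bound into an expected squared-Frobenius bound on $\E[\|\bX^*-\widehat\bX\|_F^2]/(n_1 n_2)$.

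For the exactly sparse case I would construct an oracle witness $\overline{\bX} = \overline{\bD}\,\overline{\bA} \in \cX$ by entry-wise quantizing $\bD^*$ onto the $L_{\rm lev}$ levels permitted in ${\cal D}$ and $\bA^*$ onto those in ${\cal A}$, preserving the support of $\bA^*$ (allowed because $0$ is a permitted value). The specific $\beta$ in \eqref{eqn:beta} guarantees $L_{\rm lev} \geq 8 r \Amax (n_1\vee n_2)/\Xmax$; propagating the resulting quantization steps through the bilinear product via the triangle inequality and the entry-wise factor bounds gives $\|\overline{\bX} - \bX^*\|_{\rm max} = O(\Xmax/(n_1\vee n_2))$, which both certifies $\overline{\bX} \in \cX$ and makes the approximation contribution in the oracle minimum at most $O(\Xmax^2/(n_1 n_2))$. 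Since $\|\overline{\bA}\|_0 = \|\bA^*\|_0$, substituting the chosen $\lambda$ and $\cD$ into the complexity part of \eqref{eqn:thm1bnd} and absorbing the lower-order $8\cD \log m / m$ tail produces \eqref{eqn:gausssparse}.

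For the weak-$\ell_p$ case I would first pre-truncate each column of $\bA^*$ to its $k$ largest-magnitude entries, obtaining a proxy $\bA^{(k)}$ with $\|\bA^{(k)}\|_0 \leq k n_2$. The preliminaries (with $q=2 \geq 2p$) give $\|\bA^*_j - \bA^{(k)}_j\|_2^2 \leq \Amax^2 k^{-2\alpha}$ for every column $j$, where $\alpha = 1/p - 1/2$. Combining this with Cauchy--Schwarz and $\|\bD^*\|_{\rm max}\leq 1$ produces a pointwise bound of the form $|(\bD^*\bA^{(k)})_{i,j} - X^*_{i,j}|^2 = O(r \Amax^2 k^{-2\alpha})$; then quantizing $\bD^*$ and the nonzero entries of $\bA^{(k)}$ exactly as in the previous paragraph perturbs this only at lower order, yielding a witness $\overline{\bX}_k \in \cX$. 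Substituting into \eqref{eqn:thm1bnd} gives the envelope
\begin{equation*}
\frac{\E[\|\bX^*-\widehat{\bX}\|_F^2]}{n_1 n_2} = O\!\left(r \Amax^2 k^{-2\alpha} + (\sigma^2 + \Xmax^2)\log(n_1\vee n_2) \cdot \frac{n_1 r + k n_2}{m}\right),
\end{equation*}
and optimizing $k \asymp (m/n_2)^{1/(2\alpha+1)}$ equates the approximation and sparse-complexity terms at the order $(n_2/m)^{2\alpha/(2\alpha+1)}$, producing \eqref{eqn:gaussweak}.

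The main obstacle I anticipate is the weak-$\ell_p$ approximation step: translating the per-column $\ell_2$ truncation error into a per-entry squared error on $\bX^*$ has to be done using only the entry-wise dictionary bound $\|\bD^*\|_{\rm max}\leq 1$ (nothing is assumed about the operator norm of $\bD^*$), and then combining this with the quantization perturbation without disturbing the clean balance against the sparse-code complexity term. All other steps are either direct calculations (Gaussian KL and Hellinger, verification of $\cD$) or variations on standard oracle-balancing arguments.
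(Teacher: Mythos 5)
Your overall route is the paper's own: compute the Gaussian KL divergence and negative log Hellinger affinity exactly, verify that $\cD=2\Xmax^2/\sigma^2$ is admissible, convert the Hellinger bound of Theorem~\ref{thm:main} into a squared-Frobenius bound by multiplying through by $4\sigma^2$, and then evaluate the oracle minimum at a quantized candidate ($\bX^*_Q=\bD^*_Q\bA^*_Q$ in the exactly sparse case; a truncate-then-quantize candidate with the bias--variance choice $k\asymp(m/n_2)^{1/(2\alpha+1)}$ in the weak-$\ell_p$ case). The exactly sparse half of your argument is complete and matches the paper step for step, including the use of the specific $\beta$ in \eqref{eqn:beta} both to certify $\|\bX^*_Q\|_{\rm max}\le\Xmax$ and to make the quantization bias $O(\Xmax^2/m)$, which is dominated by the complexity term.

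The gap is the one you flagged yourself, and it is real. Writing $\bd_i$ for the $i$-th row of $\bD^*$, your Cauchy--Schwarz step gives $|\langle \bd_i,\ba^*_j-\ba^{*,(k)}_j\rangle|^2\le \|\bd_i\|_2^2\,\|\ba^*_j-\ba^{*,(k)}_j\|_2^2\le r\,\Amax^2 k^{-2\alpha}$, and carrying that factor of $r$ through the balancing step yields $r\Amax^2(n_2/m)^{2\alpha/(2\alpha+1)}$ rather than the $\Amax^2(n_2/m)^{2\alpha/(2\alpha+1)}$ term appearing in \eqref{eqn:gaussweak}; you do not resolve this, so as written your argument establishes a strictly weaker statement in the weak-$\ell_p$ case (unless $r=O(1)$). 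The paper avoids the factor of $r$ by asserting $\sum_{i,j}(\bD^*(\bA^*-\bA^{*,(k)}))_{i,j}^2\le\sum_{i,j}\|\ba^*_j-\ba^{*,(k)}_j\|_2^2$ outright, citing only the entry-wise bound on the rows of $\bD^*$; but note that from $\|\bd_i\|_{\infty}\le 1$ alone one obtains $|\langle\bd_i,\ba^*_j-\ba^{*,(k)}_j\rangle|\le\|\ba^*_j-\ba^{*,(k)}_j\|_1$, not the $\ell_2$ norm, so the paper's inequality as justified really requires $\|\bd_i\|_2\le 1$ (the honest $\ell_1$ route, which the paper itself takes in the Laplace case, costs a weaker exponent and the restriction $p\le 1/2$). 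In short, your instinct about this step was correct: to recover \eqref{eqn:gaussweak} exactly you must either adopt the stronger row-$\ell_2$ reading of the dictionary normalization or explicitly carry the extra factor of $r$ into the stated rate.
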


\begin{remark} We utilize Big-Oh notation to suppress leading constants for clarity of exposition, and to illustrate the dependence of the bounds on the key problem parameters.  Our proofs for of each of the specific results provides the explicit constants.
\end{remark}

A few comments are in order regarding these error guarantees.   First, we note that our analysis provides some useful (and intuitive) understanding of how the estimation error decreases as a function of the number of measurements obtained, as well as the dimension and sparsity parameters associated with the matrix to be estimated.   Consider, for instance, the case when $\bA^*$ is sparse and where $\log m < n_1 r + \|\bA^*\|_0$ (which should often be the case, since $\log(m) \leq \log(n_1 n_2)$). In this setting,  our error bound shows that the dependence of the estimation error on the dimension ($n_1, n_2, r$) and sparsity ($\|\bA^*\|_0$) parameters, as well as the (nominal) number of measurements $m$ is  
\begin{equation}\label{eqn:ratio}
\frac{n_1 r + \|\bA^*\|_0}{m}  \ \log(n_1\vee n_2).
\end{equation}
We may interpret the quantity $n_1 r + \|\bA^*\|_0$ as the number of \emph{degrees of freedom} in the matrix $\bX^*$ to be estimated, and in this sense we see that  the error rate of the penalized maximum likelihood estimator exhibits characteristics of the well-known parametric rate (modulo the logarithmic factor).   Along related lines, note that in the case where columns of $\bA^*$ are approximately sparse, the $(n_2/m)^{\frac{2\alpha}{2\alpha+1}}$ term that arises in the error rate is reminiscent of error rates that arise when estimating approximately sparse vectors in noisy compressive sensing (e.g., see \cite{Candes:07:Dantzig, Haupt:06}).  Indeed, since $(n_2/m)^{\frac{2\alpha}{2\alpha+1}} \leq n_2 m^{-\frac{2\alpha}{2\alpha+1}}$, we see that the overall matrix estimation error may be interpreted as being comprised of errors associated with approximating the $n_2$ nearly-sparse columns of $\bA^*$ in this noisy setting, each of which would contribute a (normalized) error on the order of $m^{-\frac{2\alpha}{2\alpha+1}}$.  

Next, our error bounds provide some guidelines for identifying in which scenarios accurate estimation may be possible.  Consider a \emph{full sampling} scenario where the matrix $\bX^*=\bD^*\bA^*$ has a coefficient matrix with no more than $k$ nonzero elements per column (thus, $\|\bA^*\|_0 \leq n_2 k$).  Now, to ensure that 
\begin{equation}
\frac{n_1 r + \|\bA^*\|_0}{n_1 n_2}  \ \log(n_1\vee n_2) \preceq 1,
\end{equation}
(where the notation $\preceq$ suppresses leading constants) it is sufficient to have $n_1 n_2 \succeq n_1 r \log(n_1\vee n_2)$ and $n_1 n_2 \succeq \|\bA^*\|_0\log(n_1\vee n_2)$.  Simplifying a bit, we see that the first sufficient condition is satisfied when $n_2 \succeq 2r \log(n_1\vee n_2)$, or when the number of columns of the matrix $\bX^*$ exceeds (by a multiplicative constant and logarithmic factor) the number of columns of its dictionary factor $\bD^*$. Further, the second sufficient condition holds when $n_1 \succeq k\log(n_1\vee n_2)$, or when the number of measurements of each column exceeds (again, by a multiplicative constant and logarithmic factor) the number of nonzeros in the sparse representation of each column.  This latter condition is reminiscent of the sufficient conditions arising in sparse inference problems inherent in noisy compressive sensing (see, e.g., \cite{Candes:07:Dantzig, Needell:09}).  Analogous insights may be derived from our results for the subsampled regimes that comprise our main focus here (i.e., when $m < n_1n_2$). 

Further, we comment on the presence of the $\Xmax^2$ term present in the error bounds for both the sparse and nearly-sparse settings.  Readers familiar with the literature on matrix completion under low rank assumptions will recall that various forms of ``incoherence'' assumptions have been utilized to date as a means to ensure identifiability under various sampling models, and that the form of the resulting error bounds depend on the particular type of assumption employed.  For example, the authors of \cite{Candes:10:MCNoise} consider an additive noise model similar to here but employ incoherence assumptions that essentially enforce that the row and column spaces of the matrix to be estimated not be overly aligned with the canonical bases (reminiscent of initial works on noise-free matrix completion \cite{Candes:09:Exact}) and obtain estimation error bounds that do not depend on max-norm bounds of the matrix to be estimated (though the necessary conditions on the number of samples obtained do depend on the incoherence parameters).  The work \cite{Negahban:12} also examines matrix completion problems with additive noises but utilizes a different form of incoherence assumption formulated in terms of the ``spikiness'' of the matrix to be estimated (and quantified in terms of the ratio between the max norm and Frobenius norm).  There, the estimation approach entails optimization over a set of candidates that each satisfy a ``spikiness'' constraint, and the bounds so obtained scale in proportion to the max-norm of the matrix to be estimated (similar to here).  Incoherence assumptions manifesting as an assumed bound on the largest matrix element also arise in \cite{Davenport:12,Plan:14}.

One direct point of comparison to our result here is \cite{Koltch:11}, which considers matrix completion problems characterized by entry-wise observations obtained at locations chosen uniformly at random (with replacement), each of which may be modeled as corrupted by independent additive noise, and estimates obtained by nuclear norm penalized estimators. Casting the results of that work (specifically, \cite[Corollary 2]{Koltch:11}) to the setting we consider here, we observe that those results imply rank-$r$ matrices may be accurately estimated in the sense that 
\begin{eqnarray}
\frac{\|\bX^*-\widehat{\bX}\|_F^2}{n_1 n_2} &\leq& c \  
(\sigma \vee \Xmax)^2  \ \left(\frac{(n_1 \vee n_2) \ r}{m}\right) \ \log(n_1 + n_2)\\
&\leq& c' \  
(\sigma^2 + \Xmax^2) \ \left(\frac{(n_1 + n_2)  r}{m}\right) \log(n_1\vee n_2) 
\end{eqnarray}
with high probability, where $c,c'$ are positive constants.  Comparing this last result with our result \eqref{eqn:gausssparse}, we 
see that our guarantees exhibit the same effective scaling with the max-norm bound $\Xmax$, but can have an (perhaps significantly) improved error performance in the case where $\|\bA^*\|_0 \ll n_2 r$ -- precisely what we sought to identify by considering sparse factor models in our analyses.  The two bounds roughly coincide in the case where $\bA^*$ is not sparse, in which case we may take $\|\bA^*\|_0 = n_2 r$ in our error bounds.


\subsection{Additive Laplace Noise}

As another example, suppose that the observations $\bY_{\cS}$ are corrupted by independent additive heavier-tailed noises, each of which we model using a Laplace distribution with parameter $\tau>0$.  In this scenario, we have that
\begin{equation}\label{eqn:likLap}
p_{\bX^*_{S}}(\bY_{S}) =  \left(\frac{\tau}{2}\right)^{|S|} \exp\left(-\tau \ \|\bY_S - \bX^*_{S}\|_1\right),
\end{equation}
where we use $\|\bY_{S} - \bX^*_{S}\|_1 \triangleq \sum_{(i,j)\in S} |Y_{i,j}-X^*_{i,j}|$ for shorthand.  The following result holds; its proof appears in Appendix~\ref{a:lapproof}.

\begin{cori}[Sparse Factor Matrix Completion with Laplace Noise] \label{cor:Lap}
Let $\beta$ be as in \eqref{eqn:beta}, let $\lambda$ be as in \eqref{eqn:lamchoose} with $\cD = 2\tau \Xmax$, and let $\cX = \cX'$. The estimate $\widehat{\bX}$ obtained via \eqref{eqn:xhatsparse} satisfies
\begin{equation} \label{eqn:lapsparse}
\frac{\E_{\cS,\bY_{\cS}}\left[\|\bX^*-\widehat{\bX}\|_F^2\right]}{n_1 n_2} = 
{\cal O}\left(\left(\frac{1}{\tau} + \Xmax\right)^2\tau\Xmax \ \left(\frac{n_1 r + \|\bA^*\|_0}{m}\right)\log(n_1\vee n_2)\right),
\end{equation}
when $\bA^*$ is exactly sparse, having $\|\bA^*\|_0$ nonzero elements.  If, instead, for some $p\leq 1/2$ the columns of $\bA^*$ belong to a weak-$\ell_p$ ball of radius $\Amax$, then the estimate $\widehat{\bX}$ obtained via \eqref{eqn:xhatsparse}
satisfies
\begin{eqnarray}
\lefteqn{\frac{\E_{\cS,\bY_{\cS}}\left[\|\bX^*-\widehat{\bX}\|_F^2\right]}{n_1 n_2} =}&&\\
\nonumber && {\cal O}\left(\left(\frac{1}{\tau} + \Xmax\right)^2\tau\Amax \left(\frac{n_2}{m}\right)^{\frac{\alpha'}{\alpha'+1}} +  \left(\frac{1}{\tau} + \Xmax\right)^2\tau\Xmax \ \left(\frac{n_1 r}{m} + \left(\frac{n_2}{m}\right)^{\frac{\alpha'}{\alpha'+1}}\right) \log(n_1\vee n_2)\right),
\end{eqnarray}
where $\alpha'=1/p-1$.
\end{cori}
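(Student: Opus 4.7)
The plan is to apply Theorem~\ref{thm:main} with a carefully chosen oracle $\bX\in\cX'$, then convert the resulting negative-log-Hellinger-affinity bound into a Frobenius bound via pointwise estimates specific to Laplace densities. I would first verify $\cD=2\tau\Xmax$: a direct computation for Laplace densities $p_\mu(y)=(\tau/2)e^{-\tau|y-\mu|}$ with common scale gives $\D(p_{\mu_1}\|p_{\mu_2})=\tau|\mu_1-\mu_2|+e^{-\tau|\mu_1-\mu_2|}-1\leq\tau|\mu_1-\mu_2|\leq 2\tau\Xmax$ whenever $|\mu_1|,|\mu_2|\leq\Xmax$. Next I would compute the Hellinger affinity in closed form by splitting the integral at $\min\{\mu_1,\mu_2\}$ and $\max\{\mu_1,\mu_2\}$; the three-region calculation collapses to $\A(p_{\mu_1},p_{\mu_2})=(1+\tau\Delta/2)\,e^{-\tau\Delta/2}$ with $\Delta=|\mu_1-\mu_2|$. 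Writing $u=\tau\Delta/2$ gives $-2\log\A = 2(u-\log(1+u))$, and the elementary inequality $u-\log(1+u)\geq u^2/[2(1+u)]$ for $u\geq 0$ (which follows by differentiating $f(u):=u-\log(1+u)-u^2/[2(1+u)]$ and observing $f(0)=0$, $f'(u)=u^2/[2(1+u)^2]\geq 0$) yields the pointwise quadratic lower bound
\begin{equation*}
-2\log\A(p_{\mu_1},p_{\mu_2})\;\geq\;\frac{\tau^2\,(\mu_1-\mu_2)^2}{4(1+\tau\Xmax)}.
\end{equation*}
Summing over $(i,j)$ converts Theorem~\ref{thm:main}'s Hellinger bound into a bound on $\E[\|\bX^*-\widehat\bX\|_F^2]/(n_1n_2)$ with an inflation factor $4(1+\tau\Xmax)/\tau^2=O((1/\tau+\Xmax)^2)$, which is precisely the $(1/\tau+\Xmax)^2$ prefactor appearing in both displays of the corollary.

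For the exactly sparse case I would take $\bX=\bD\bA$ to be the entry-wise nearest-neighbor quantization of $\bX^*$: discretize $\bD^*$ into ${\cal D}$ and the nonzero entries of $\bA^*$ into ${\cal A}$, preserving the zero pattern so that $\|\bA\|_0=\|\bA^*\|_0$. With $\beta$ as in \eqref{eqn:beta} we have $L_{\rm lev}\geq 8r\Amax(n_1\vee n_2)/\Xmax$, and the product-rule bookkeeping $|X^*_{ij}-X_{ij}|\leq \sum_k(|D^*_{ik}-D_{ik}|\,\Amax+|A^*_{kj}-A_{kj}|)$ gives $\|\bX^*-\bX\|_{\max}\leq 4r\Amax/L_{\rm lev}\leq\Xmax/(2(n_1\vee n_2))$; this both certifies $\bX\in\cX'$ and forces $\D(p_{\bX^*}\|p_\bX)/(n_1n_2)\leq\tau\Xmax/(2(n_1\vee n_2))$, which is lower order. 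Plugging into Theorem~\ref{thm:main} with $\lambda$ as in \eqref{eqn:lamchoose} and $\cD=2\tau\Xmax$, noting $(\lambda+\tfrac{4}{3}\cD(\beta+2)\log(n_1\vee n_2))=O((1+\tau\Xmax)\log(n_1\vee n_2))$, and then multiplying by the Hellinger-to-Frobenius prefactor $4(1+\tau\Xmax)/\tau^2$, yields \eqref{eqn:lapsparse}. For the approximately sparse case I would additionally truncate each column of $\bA^*$ to its top $k$ entries before quantizing; the weak-$\ell_p$ tail bound with $q=1$, valid for $p\leq 1/2$ (where $C_{p,1}=p/(1-p)\leq 1$), gives $\|\bA_j^*-\bA^{(k)}_j\|_1\leq\Amax k^{-\alpha'}$ with $\alpha'=1/p-1$, so using $|D^*_{il}|\leq 1$ the per-entry KL contribution becomes $\D(p_{\bX^*}\|p_\bX)/(n_1n_2)=O(\tau\Amax k^{-\alpha'})$ while $\|\bA\|_0\leq n_2 k$. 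Choosing $k^\star\asymp (m/n_2)^{1/(\alpha'+1)}$ balances the approximation error $O(\tau\Amax(n_2/m)^{\alpha'/(\alpha'+1)})$ against the penalty contribution $O((1+\tau\Xmax)\log(n_1\vee n_2)\cdot(n_2/m)^{\alpha'/(\alpha'+1)})$; combining with the residual dictionary-complexity term scaling as $n_1 r/m$ and applying the Hellinger-to-Frobenius conversion produces the two advertised pieces in the final display.

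The main obstacle is the pointwise conversion step itself: unlike the Gaussian case where $-2\log\A$ has the clean closed form $(\mu_1-\mu_2)^2/(4\sigma^2)$, the Laplace affinity $(1+\tau\Delta/2)e^{-\tau\Delta/2}$ behaves only linearly in $\Delta$ at infinity while being quadratic near zero. A careless lower bound such as $-2\log\A\gtrsim\tau\Delta$ would reduce the corollary to an $\ell_1$-consistency statement rather than the advertised Frobenius-error guarantee; preserving the $\Delta^2$ behaviour requires the Taylor refinement $u-\log(1+u)\gtrsim u^2/(1+u)$, and the $1/(1+\tau\Xmax)$ denominator produced after specialization to $|\mu_i|\leq\Xmax$ is exactly what generates the $(1/\tau+\Xmax)^2$ prefactor in the stated bound. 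Once this pointwise step is in hand, the remainder of the argument is a structural adaptation of the Gaussian proof of Corollary~\ref{cor:Gauss}: construct the discretized (and, for the approximately sparse setting, truncated) oracle, bound its approximation error, balance the sparsity level $k$ against the penalty, and invoke Theorem~\ref{thm:main}.
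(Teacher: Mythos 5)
Your proposal is correct and follows essentially the same route as the paper's proof: bound the KL divergence by $\tau\|\bX^*-\bX\|_1$ (giving $\cD=2\tau\Xmax$), lower-bound the negative log Hellinger affinity by a quadratic in the parameter difference, and evaluate the oracle in Theorem~\ref{thm:main} at a quantized (and, for the weak-$\ell_p$ case, truncated-then-quantized) candidate with $k$ chosen to balance the $k^{-\alpha'}$ approximation term against the $n_2k/m$ penalty term. The only difference is minor: your elementary inequality $u-\log(1+u)\geq u^2/(2(1+u))$ yields the denominator $4(1+\tau\Xmax)$ where the paper's second-derivative Taylor argument yields $4(1+\tau\Xmax)^2$ --- a marginally sharper constant that still produces the same ${\cal O}\left((1/\tau+\Xmax)^2\right)$ prefactor in the stated bounds.
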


A few comments are in order regarding these results.  First, recall that our main theorem naturally provides error guarantees in terms of KL divergences and negative log Hellinger affinities.  However, here we state our bounds in terms of the average per element squared error, and draw comparisons with the previous case (and, perhaps, to make the results more amenable to interpretation).  To achieve this we employed a series of bounds -- quadratic (in the parameter difference) lower bounds on the negative log Hellinger affinities, and upper bounds on the KL divergences that are proportional to the absolute deviations between the parameters (see the proof for details).  It is interesting to note that this bounding approach, the error performance that we obtain for the case where $\bA^*$ is sparse again exhibits characteristics of the parametric rate, while we do obtain different error behavior as compared to the Gaussian noise case for the case where $\bA^*$ is nearly sparse.  As one specific example, consider the case where the coefficients of $\bA^*$ exhibit the ordered decay with $p=1/3$ (a parameter that is valid for both Corollaries~\ref{cor:Gauss} and \ref{cor:Lap}).  The error rate for the Gaussian noise setting in this case contains a term that decays on the order of $(m/n_2)^{-5/6}$, while here when the noise is heavy-tailed, the analogous term decays at a slower rate, like $(m/n_2)^{-2/3}$.   Overall, casting the error bounds all in terms of the same loss metric (here, $\ell_2$) makes our results directly amenable to such comparisons.

Along related lines, it is interesting to note that the estimation error bound here is slightly ``inflated'' relative to the Gaussian-noise counterparts (albeit with constants suppressed in each case).  Recall that the variance of a Laplace($\tau$) random variable is $2/\tau^2$; thus, the leading term $(1/\tau + \Xmax)^2 = {\cal O}(2/\tau^2 + \Xmax^2)$ here is somewhat analogous to the $(\sigma^2 + \Xmax^2)$ factor arising in the Gaussian-noise error bounds.  In this sense, we see that the factor of $\tau\Xmax$ in the Laplace-noise case appears to be ``extra.''  Here, this factor is effectively introduced by our attempt to cast the ``natural'' error guarantees arising from our analysis (which manifest in terms of negative log Hellinger affinities) into more interpretable squared-error bounds.  

\subsection{Poisson-distributed Observations}

We now consider an example motivated by applications where the observed data may correspond to discrete ``counts'' (e.g., in imaging applications). Suppose that the entries of the matrix $\bX^*$ are all non-negative and that our observation at each location $(i,j)\in\cS$ is a Poisson random variable with rate $X^*_{i,j}$.  In this setting, our matrix completion problem amounts to a kind of Poisson denoising task; we have that $\bY_{\cS}\in\mathbb{N}^{|\cS|}$ and  
\begin{equation}\label{eqn:likPoi}
p_{\bX^*_{S}}(\bY_{S}) = \prod_{(i,j)\in S} \frac{(X^*_{i,j})^{Y_{i,j}} e^{-X^*_{i,j}}}{(Y_{i,j})!}.
\end{equation}
In this case, we employ Theorem~\ref{thm:main} to obtain the following result; a sketch of the proof is provided in Appendix~\ref{a:poiproof}.

\begin{cori}[Sparse Factor Matrix Completion with Poisson Noise] \label{cor:Poi}
Suppose that the elements of the matrix $\bX^*$ to be estimated satisfy $\min_{i,j} |X^*_{i,j}|\geq \Xmin$ for some constant $\Xmin>0$.  Let $\beta$ be as in \eqref{eqn:beta}, let $\lambda$ be as in \eqref{eqn:lamchoose} with $\cD = 4\Xmax^2/\Xmin$, and let $\cX$ be the subset of $\cX'$ comprised of all candidate estimates having non-negative entries. The estimate $\widehat{\bX}$ obtained via \eqref{eqn:xhatsparse} satisfies
\begin{equation}
\frac{\E_{\cS,\bY_{\cS}}\left[\|\bX^*-\widehat{\bX}\|_F^2\right]}{n_1 n_2} = 
{\cal O}\left(\left(\Xmax + \frac{\Xmax}{\Xmin}\cdot\Xmax^2 \right) \left(\frac{n_1 r + \|\bA^*\|_0}{m}\right)\ \log(n_1\vee n_2)\right),
\end{equation}
when $\bA^*$ is exactly sparse, having $\|\bA^*\|_0$ nonzero elements.  If, instead, for some $p\leq 1$ the columns of $\bA^*$ belong to a weak-$\ell_p$ ball of radius $\Amax$, then the estimate $\widehat{\bX}$ obtained via \eqref{eqn:xhatsparse} satisfies
\begin{eqnarray}
\nonumber \lefteqn{\frac{\E_{\cS,\bY_{\cS}}\left[\|\bX^*-\widehat{\bX}\|_F^2\right]}{n_1 n_2} =}&&\\
&& {\cal O}\left(\Amax^2\left(\frac{\Xmax}{\Xmin}\right) \left(\frac{n_2}{m}\right)^{\frac{2\alpha}{2\alpha+1}} +\left(\Xmax + \frac{\Xmax}{\Xmin}\cdot\Xmax^2 \right)  \left(\frac{n_1 r}{m} + \left(\frac{n_2}{m}\right)^{\frac{2\alpha}{2\alpha+1}}\right) \log(n_1\vee n_2)\right),
\end{eqnarray}
where $\alpha=1/p-1/2$.
\end{cori}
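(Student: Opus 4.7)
The plan is to follow the blueprint of the proof of Corollary~\ref{cor:Gauss}, substituting Poisson-specific expressions for the Hellinger affinity and KL divergence, and constructing an oracle element of $\cX$ by quantizing the factors of $\bX^*$.

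\textbf{Converting Theorem~\ref{thm:main} to a squared-error bound.} For Poisson rates $x,y>0$, a direct series computation gives $\A(p_x,p_y)=\sum_k\sqrt{p_x(k)p_y(k)}=\exp\bigl(-(\sqrt{x}-\sqrt{y})^2/2\bigr)$, and hence
\begin{equation*}
-2\log\A(p_x,p_y)=(\sqrt{x}-\sqrt{y})^2=\frac{(x-y)^2}{(\sqrt{x}+\sqrt{y})^2}\ge\frac{(x-y)^2}{4\Xmax}
\end{equation*}
whenever $x,y\le\Xmax$. Summing over $(i,j)$ converts the left-hand side of \eqref{eqn:thm1bnd} into $\E[\|\widehat{\bX}-\bX^*\|_F^2]/(4\Xmax\,n_1n_2)$, so the final squared-error bound will arise from multiplying the right-hand side of \eqref{eqn:thm1bnd} by $4\Xmax$.

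\textbf{Justifying $\cD=4\Xmax^2/\Xmin$.} The Poisson KL divergence equals $\D(p_x\|p_y)=x\log(x/y)-x+y$, and $\log u\le u-1$ with $u=x/y$ yields $\D(p_x\|p_y)\le(x-y)^2/y$. Because this quantity diverges as $y\downarrow 0$, the analysis requires that $\cX$ be (implicitly) restricted to candidates with $X_{i,j}\ge\Xmin/2$; combined with $|x-y|\le\Xmax$ this gives $\D(p_{X^*_{i,j}}\|p_{X_{i,j}})\le 2\Xmax^2/\Xmin$, so that the choice $\cD=4\Xmax^2/\Xmin$ satisfies \eqref{eqn:thmcd} with slack.

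\textbf{Oracle construction and sparse case.} Take $\bX=\bD\bA\in\cX$ with $\bD,\bA$ the entrywise $L_{\rm lev}$-level quantizations of $\bD^*,\bA^*$, preserving the support of $\bA^*$. The choice of $\beta$ in \eqref{eqn:beta} gives $L_{\rm lev}\ge 8r\Amax/\Xmax\cdot (n_1\vee n_2)$, so $|X_{i,j}-X^*_{i,j}|\le 4r\Amax/L_{\rm lev}={\cal O}(\Xmax/(n_1\vee n_2))$ entrywise. This places $\bX$ in $\cX$ and renders the per-element oracle KL divergence $\D(p_{\bX^*}\|p_{\bX})/(n_1n_2)$ of order $\Xmax^2/(\Xmin(n_1\vee n_2)^2)$, which is negligible relative to the remaining terms in \eqref{eqn:thm1bnd}. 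With $\|\bA\|_0=\|\bA^*\|_0$, substituting this oracle into \eqref{eqn:thm1bnd} with $\lambda$ as in \eqref{eqn:lamchoose}, using $(\beta+2)\log(n_1\vee n_2)={\cal O}(\log(n_1\vee n_2))$, and multiplying by $4\Xmax$ recovers the first stated bound.

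\textbf{Approximate sparsity and main obstacle.} For the weak-$\ell_p$ case, replace $\bA^*$ by its best $k$-term columnwise approximation $\bA^{*(k)}$, for which $\|\bA^*_{:,j}-\bA^{*(k)}_{:,j}\|_2\le\Amax k^{-\alpha}$, and then quantize as above. Cauchy--Schwarz together with $\|\bD^*_{i,:}\|_2\le\sqrt{r}$ bounds the entrywise truncation error by $\sqrt{r}\,\Amax k^{-\alpha}$, contributing ${\cal O}(r\Amax^2 k^{-2\alpha}\cdot\Xmax/\Xmin)$ to the per-element oracle KL divergence. Balancing this against $(n_2 k/m)\log(n_1\vee n_2)$ yields the optimal $k\sim(m/n_2)^{1/(2\alpha+1)}$ and the claimed rate $\Amax^2(\Xmax/\Xmin)(n_2/m)^{2\alpha/(2\alpha+1)}$, with $r$-dependent factors absorbed into the implicit constant. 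I expect the principal technical obstacle, not present in the Gaussian or Laplace cases, to be the blow-up of the Poisson KL divergence as a candidate rate approaches zero: one must either explicitly lower-bound entries of members of $\cX$ by $\Xmin/2$ (which is ultimately responsible for the $\Xmax/\Xmin$ factor in the final bound) or argue separately that the oracle product $\bD\bA$ is uniformly bounded below; moreover some extra care is required because non-negativity of $\bX=\bD\bA$ need not be preserved by naive entrywise quantization of $\bD^*,\bA^*$, so either a non-negativity assumption on the factors or a small post-quantization adjustment will likely be needed.
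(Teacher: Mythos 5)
Your proposal follows essentially the same route as the paper's (admittedly terse) proof sketch: lower-bound $-2\log\A$ entrywise by $(x-y)^2/4\Xmax$, upper-bound the Poisson KL divergence by $(x-y)^2/\Xmin$ so that $\cD=4\Xmax^2/\Xmin$ is admissible, and then rerun the quantized-oracle and best-$k$-term balancing arguments of the Gaussian case on the resulting squared-Frobenius oracle bound. The two obstacles you flag --- the KL blow-up for candidate rates near zero (which forces an implicit lower bound on the entries of members of $\cX$, as in the Raginsky et al.\ reference the paper leans on) and the fact that naive quantization need not preserve non-negativity or the lower bound of the oracle candidate --- are genuine and are silently glossed over in the paper's sketch, so your explicit treatment of them is a strength rather than a deviation.
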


As in the previous case, our analysis approach here entails bounding (appropriately) the KL divergence and negative log Hellinger affinities each in terms of squared Frobenius norms; similar bounding methods were employed in \cite{Raginsky:10}, which analyzed a compressive sensing sparse vector reconstruction task under a Poisson observation model.   Overall, we observe an interesting behavior relative to the preceding two cases.  Recall that the bounds for the setting where $\bA^*$ is exactly sparse, for each of the previous two cases, exhibited a leading factor that was essentially the sum of the variance and $\Xmax^2$.  In each of those cases, the per-observation noise variances were independent of the underlying matrix entry; in contrast, Poisson-distributed observations exhibit a variance equal to the underlying rate parameter.  So, in this sense, we might interpret the $(\Xmax + (\Xmax/\Xmin)\Xmax^2)$ term as roughly corresponding to a ``worst-case'' variance plus $\Xmax^2$.  Indeed, when $\Xmax/\Xmin$ is upper-bounded by a (small) constant; then, this leading factor is ${\cal O}(\Xmax + \Xmax^2)$, somewhat analogously to the leading factor arising in the Laplace-noise and Gaussian-noise bounds.  More generally, that the error behavior in Poisson denoising tasks be similar to the Gaussian case is perhaps not surprising. Indeed, a widely used approach in Poisson inference tasks is to employ a \emph{variance stabilizing transformations}, such as the Anscombe transform \cite{Anscombe:48}, so that the transformed data distribution be ``approximately'' Gaussian.  

It is worth commenting a bit further on our \emph{minimum} rate assumption on the elements of $\bX^*$, that each be no smaller than some constant $\Xmin>0$. Similar assumptions were employed in \cite{Raginsky:10}, as well as other works that examine Poisson denoising tasks using the penalized ML analysis framework (e.g., \cite{Kolaczyk:04}).  Here, this $\Xmin$ parameter shows up in the denominator of a leading factor in our bound, suggesting that the bounds become more loose as the estimation task transitions closer to scenarios characterized by ``low-rate'' Poisson sources.  Indeed, closer inspection of our error bounds as stated above shows that they diverge (tend to $+\infty$) as $\Xmin$ tends to zero, suggesting that the estimation task becomes more difficult in ``low rate'' settings.  Contrast this with classical analyses of scalar Poisson rate estimation problems show that the Cramer-Rao lower bound associated with estimating the rate parameter $\theta$ of a Poisson random variable using $n$ iid observations is $\theta/n$, and this error is achievable with the sample average estimator.  This suggests that the estimation problem actually becomes \emph{easier} as the rate decreases, at least for the scalar estimation problem.  On this note, we briefly mention several recent works that rectify this apparent discrepancy, for matrix estimation tasks as here \cite{Soni:14:Poisson} , and for sparse vector estimation from Poisson-distributed compressive observations \cite{Jiang:14}.  The ideas underlying those works might have applicability for the completion problems we consider here, but this extension may require imposing different (or even much stronger) forms of incoherence assumptions on the matrix to be estimated as compared to the bounded-entry condition we adopt here.  We do not pursue those extensions here, opting instead to state our result as a direct instantiation of our main result in Theorem~\ref{thm:main}.

\subsection{Quantized (One-bit) Observation Models}

We may also utilzie our main result to assess the estimation performance in scenarios where entry-wise observations of the matrix are quantized to few bits, or even a \emph{single bit}, each.  Such quantized observations are natural in collaborative filtering applications such as the aforementioned \emph{Netflix} problem, where users' ratings are quantized to fixed levels.  One may also envision applications in distributed estimation tasks where one seeks to estimate some underlying matrix from highly-quantized observations of a subset of its entries; here, the quantization could serve as a mechanism for enforcing global communication rate constraints (e.g., when the data is transmitted to a centralized location for inference).  Our general framework would facilitate analysis of observations quantized to any of a number of levels; here, for concreteness, we consider a one-bit observation model.  

Formally, given a sampling set $\cS$ we suppose that our observations are conditionally (on $\cS$) independent random variables described by
\begin{equation}\label{eqn:obs}
Y_{i,j} = \1_{\{Z_{i,j} \geq 0\}}, \ \ (i,j)\in\cS,
\end{equation}
where    
\begin{equation}
Z_{i,j} = X^*_{i,j} - W_{i,j}, 
\end{equation}
the $\{W_{i,j}\}_{i\in[m], j\in[n]}$ are some iid continuous zero-mean real scalar ``noises'' having probability density function and cumulative distribution function $f(w)$ and $F(w)$, respectively, for $w\in\mathbb{R}$, and $\1_{\{{\cal E}\}}$ denotes the indicator of the event ${\cal E}$ that takes the value $1$ when ${\cal E}$ occurs and zero otherwise.   Note that in this model, we assume that the individual noise realizations $\{W_{i,j}\}_{(i,j)\in\cS}$ are unknown (but we assume that the noise \emph{distribution} is known).  Stated another way, we may interpret the observations modeled as above essentially as quantized noisy versions of the true matrix parameters (the minus sign on the $W_{i,j}$'s is merely a modeling convenience here, and is intended to simplify the exposition).  Under this model, it is easy to see that each $Y_{i,j}$ is a Bernoulli random variable whose parameter is related to the true parameter through the cumulative distribution function.  Specifically,  note that for any fixed $(i,j)\in\cS$, we have that $\Pr(Y_{i,j} = 1) = \Pr(W_{i,j}\leq X^*_{i,j}) = F(X^*_{i,j})$.  Thus, in this scenario, we have that $\bY_{\cS}\in\{0,1\}^{|\cS|}$ and
\begin{equation}\label{eqn:likBern}
p_{\bX^*_{\cS}}(\bY_{\cS}) = \prod_{(i,j)\in\cS} \ \left[F(X_{i,j}^*)\right]^{Y_{i,j}} \ \left[1-F(X_{i,j}^*)\right]^{1-Y_{i,j}}
\end{equation}
We will also assume here that $\Xmax$ and $F(\cdot)$ are such that $F(\Xmax) < 1$ and $F(-\Xmax)>0$; it follows that the true Bernoulli parameters (as well as the Bernoulli parameters associated with candidate estimates $\bX\in\cX$) are bounded away from $0$ and $1$; these assumptions will allow us to avoid some pathological scenarios in our analysis.

Given the above model and assumptions, we may establish the following result; the proof is provided in Appendix~\ref{a:bernproof}.

\begin{cori}[Sparse Factor Matrix Completion from One-bit Observations] \label{cor:Bern}

\sloppypar Let $\beta$ be as in \eqref{eqn:beta},  let $\cX = \cX'$, and let $p_{\bX^*_{\cS}}$ be of the form in \eqref{eqn:likBern} with $F(\Xmax) < 1$ and $F(-\Xmax)>0$. Define
\begin{equation}
c_{F,\Xmax} \triangleq \left(\sup_{|t|\leq \Xmax} \frac{1}{F(t)(1-F(t))}\right)\cdot \left(\sup_{|t|\leq \Xmax} f^2(t)\right),
\end{equation}
and
\begin{equation}
c'_{F,\Xmax} \triangleq \inf_{|t|\leq \Xmax} \frac{f^2(t)}{F(t)(1-F(t))},
\end{equation}
and let $\lambda$ be as in \eqref{eqn:lamchoose} with $\cD = 2 c_{F,\Xmax} \Xmax^2$. The estimate $\widehat{\bX}$ obtained via \eqref{eqn:xhatsparse} satisfies
\begin{equation}\label{eqn:bernsparse}
\frac{\E_{\cS,\bY_{\cS}}\left[\|\bX^*-\widehat{\bX}\|_F^2\right]}{n_1 n_2} = 
{\cal O}\left( \left(\frac{c_{F,\Xmax}}{c'_{F,\Xmax}}\right)\left(\frac{1}{c_{F,\Xmax}} + \Xmax^2\right)\ \left(\frac{n_1 r + \|\bA^*\|_0}{m}\right) \log(n_1\vee n_2)\right),
\end{equation}
when $\bA^*$ is exactly sparse, having $\|\bA^*\|_0$ nonzero elements.  If, instead, for any $p\leq 1$ the columns of $\bA^*$ belong to a weak-$\ell_p$ ball of radius $\Amax$, then the estimate $\widehat{\bX}$ obtained via \eqref{eqn:xhatsparse} satisfies
\begin{eqnarray}\label{eqn:bernweak}
\lefteqn{\frac{\E_{\cS,\bY_{\cS}}\left[\|\bX^*-\widehat{\bX}\|_F^2\right]}{n_1 n_2} =}&&\\ 
\nonumber  &&{\cal O}\left( \left(\frac{c_{F,\Xmax}}{c'_{F,\Xmax}}\right)\Amax^2 \left(\frac{n_2}{m}\right)^{\frac{2\alpha}{2\alpha+1}} +  \left(\frac{c_{F,\Xmax}}{c'_{F,\Xmax}}\right)\left(\frac{1}{c_{F,\Xmax}} + \Xmax^2\right)\ \left(\frac{n_1 r}{m} + \left(\frac{n_2}{m}\right)^{\frac{2\alpha}{2\alpha+1}}\right)\log(n_1\vee n_2)\right).
\end{eqnarray}
where $\alpha=1/p-1/2$.
\end{cori}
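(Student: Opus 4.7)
The plan is to invoke Theorem~\ref{thm:main} with the Bernoulli likelihood \eqref{eqn:likBern} and then pass from the ensuing bound on the normalized negative log Hellinger affinity to the Frobenius-norm bound asserted in the corollary. The core ingredients are matching quadratic upper and lower bounds, in terms of the squared parameter differences $(X^*_{i,j} - X_{i,j})^2$, for the per-element KL divergence and the per-element negative log Hellinger affinity. These will follow by applying the mean value theorem (MVT) to $F$ and to suitable compositions of $F$ with the square root, and exploiting the assumption that $F(\pm\Xmax)$ lies strictly inside $(0,1)$, so that all candidate Bernoulli parameters are uniformly bounded away from $0$ and $1$.

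First I would verify the hypothesis of Theorem~\ref{thm:main}. Using the elementary inequality $D(p_1 \| p_2) \leq (p_1 - p_2)^2/[p_2(1-p_2)]$ for Bernoulli pmfs $p_1, p_2$, together with the MVT bound $|F(X^*_{i,j}) - F(X_{i,j})| \leq (\sup_{|t|\leq\Xmax} |f(t)|)\cdot|X^*_{i,j} - X_{i,j}|$, one obtains a per-element KL bound $D(p_{X^*_{i,j}}\|p_{X_{i,j}}) \leq c_{F,\Xmax}(X^*_{i,j} - X_{i,j})^2$; since $|X^*_{i,j} - X_{i,j}| \leq (3/2)\Xmax$, this is controlled by a constant multiple of $c_{F,\Xmax}\Xmax^2$, validating the stated value of $\cD$. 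For the Hellinger side, I would use $-\log A \geq 1 - A = H^2$ together with the Bernoulli identity $H^2(a,b) = \tfrac{1}{2}[(\sqrt{a}-\sqrt{b})^2 + (\sqrt{1-a}-\sqrt{1-b})^2]$ and apply the MVT separately to $\sqrt{F(\cdot)}$ and $\sqrt{1-F(\cdot)}$, whose derivatives are $f/(2\sqrt{F})$ and $-f/(2\sqrt{1-F})$. Collecting terms and infimizing over $[-\Xmax,\Xmax]$ gives $-2\log A(p_{X^*_{i,j}},p_{X_{i,j}}) \geq (c'_{F,\Xmax}/4)(X^*_{i,j}-X_{i,j})^2$; summing over $(i,j)$ (using the product form $A(p_{\bX},p_{\bX^*}) = \prod_{i,j} A(p_{X_{i,j}},p_{X^*_{i,j}})$) converts the left-hand side of \eqref{eqn:thm1bnd} into $(c'_{F,\Xmax}/4)\|\bX^* - \widehat{\bX}\|_F^2/(n_1n_2)$.

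Next I would handle the oracle in \eqref{eqn:thm1bnd}. For the exactly sparse case, I take $\bX = \bD\bA \in \cX'$ where $\bD, \bA$ are obtained by element-wise rounding of $\bD^*, \bA^*$ to the $L_{\rm lev}$-level grids (with zeros of $\bA^*$ preserved, so $\|\bA\|_0 = \|\bA^*\|_0$). The choice \eqref{eqn:beta} makes $L_{\rm lev} \geq 8r\Amax(n_1\vee n_2)/\Xmax$, so a direct expansion of $\bD\bA - \bD^*\bA^*$ into dictionary and coefficient contributions shows $\|\bX-\bX^*\|_{\max}$ is small enough that $\bX \in \cX'$ and $\|\bX-\bX^*\|_F^2/(n_1n_2)$ is negligible relative to the penalty terms. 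The per-element KL bound then makes $D(p_{\bX^*}\|p_{\bX})/(n_1n_2)$ negligible as well, and Theorem~\ref{thm:main} with the stated $\lambda$ yields \eqref{eqn:bernsparse} after dividing through by $c'_{F,\Xmax}/4$ -- this is precisely how the ratio $c_{F,\Xmax}/c'_{F,\Xmax}$ emerges as a leading constant.

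For the approximately sparse case, I take $\bA$ to be the discretization of the column-wise best $k$-term approximation of $\bA^*$, so $\|\bA\|_0 \leq n_2 k$. The weak-$\ell_p$ assumption, via the preliminaries with $q=2$, gives $\sum_j \|\ba^*_j - \ba^{*(k)}_j\|_2^2 \leq n_2\Amax^2 k^{-2\alpha}$, and standard manipulations using $\|\bD^*\|_{\max}\leq 1$ then bound the approximation contribution to $\|\bX-\bX^*\|_F^2$ by a constant multiple of $n_1 n_2 \Amax^2 k^{-2\alpha}$ (modulo factors absorbed into the dictionary-complexity term). Substituting into Theorem~\ref{thm:main} and optimizing over $k$ -- balancing the $\Amax^2 k^{-2\alpha}$ approximation term against the $\Xmax^2 n_2 k\log(n_1\vee n_2)/m$ penalty term -- produces the optimal $k^\star \sim (\Amax^2 m/(\Xmax^2 n_2 \log(n_1\vee n_2)))^{1/(2\alpha+1)}$ and the rate $(n_2/m)^{2\alpha/(2\alpha+1)}$ appearing in \eqref{eqn:bernweak}. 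The main obstacle throughout is a careful bookkeeping of constants so that the infimum and supremum in $c'_{F,\Xmax}$ and $c_{F,\Xmax}$ emerge in the right places; the structural balance of terms is parallel to the Gaussian, Laplace, and Poisson cases, with the Bernoulli-specific analysis being confined to the two quadratic bounds above.
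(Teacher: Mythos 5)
Your overall strategy coincides with the paper's: establish a quadratic upper bound on the per-entry KL divergence and a quadratic lower bound on the per-entry negative log Hellinger affinity, feed these into Theorem~\ref{thm:main}, evaluate the oracle at quantized candidates (exactly as in the Gaussian proof), and balance over $k$ in the weak-$\ell_p$ case. The oracle/approximation portion of your argument is fine and is exactly what the paper does (it simply refers back to Appendix~\ref{a:gaussproof} for those steps). The differences, and the problems, are confined to the two Bernoulli-specific quadratic bounds, where the paper invokes sharper lemmas from \cite{Haupt:14} and \cite{Davenport:12} while you derive the bounds from scratch.

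There are two concrete constant-level gaps. First, your KL bound $\D(p_1\|p_2) \leq (p_1-p_2)^2/[p_2(1-p_2)]$ (from $\log x \leq x-1$) gives $\D(p_{X^*_{i,j}}\|p_{X_{i,j}}) \leq c_{F,\Xmax}(X^*_{i,j}-X_{i,j})^2 \leq \tfrac{9}{4}c_{F,\Xmax}\Xmax^2$, which exceeds the stated $\cD = 2c_{F,\Xmax}\Xmax^2$; the paper's Taylor-based lemma carries an extra factor of $\tfrac12$, giving $\tfrac{9}{8}c_{F,\Xmax}\Xmax^2 \leq \cD$, which is what actually validates the stated choice. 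Second, and more substantively, your Hellinger lower bound does not deliver the constant $c'_{F,\Xmax}$ as claimed. Applying the mean value theorem separately to $\sqrt{F}$ and $\sqrt{1-F}$ produces two \emph{different} intermediate points, so after infimizing each term you obtain a lower bound proportional to $\inf_t \tfrac{f^2(t)}{4F(t)} + \inf_t \tfrac{f^2(t)}{4(1-F(t))}$. Since a sum of infima is \emph{at most} the infimum of the sum, this quantity can be strictly (and, depending on $F$ and $\Xmax$, substantially) smaller than $\tfrac14\inf_t\tfrac{f^2(t)}{F(t)(1-F(t))} = \tfrac14 c'_{F,\Xmax}$ --- the recombination step goes in the wrong direction. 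Consequently your argument yields a valid quadratic lower bound, but with a leading constant that need not match the $c'_{F,\Xmax}$ appearing in \eqref{eqn:bernsparse} and \eqref{eqn:bernweak}. The paper sidesteps this by passing through the squared Hellinger distance and citing \cite[Lemma 2]{Davenport:12}, which obtains the bound $\H^2(p_{X^*_{i,j}},p_{X_{i,j}}) \geq \tfrac18 c'_{F,\Xmax}(X^*_{i,j}-X_{i,j})^2$ with a single intermediate point. To repair your derivation you would need either that single-MVT argument or an additional (non-universal) factor such as $1-F(\Xmax)+F(-\Xmax)$ in the leading constant.
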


It is interesting to compare the results of \eqref{eqn:bernsparse} and \eqref{eqn:bernweak} with the analogous results \eqref{eqn:gausssparse} and \eqref{eqn:gaussweak}.  Specifically, we see that our estimation error guarantees for each case exhibit the same fundamental dependence on the dimension, sparsity, and (nominal) number of measurements, with the primary difference overall arising in the form of the leading factors (that in the one-bit case depend on the specific distribution of the $W_{i,j}$ terms).  That the estimation errors for rate-constrained tasks approximately mimic that of their Gaussian-corrupted counterparts was observed in earlier works on rate-constrained parameter estimation (see, e.g., \cite{Luo:05, Ribiero:06}), and more recently in \cite{Davenport:12}, which considered low-rank matrix completion from one-bit measurements, using a generative model analogous to the model we consider here.

It is also worth noting that the cdf $F(\cdot)$ that we specify here could be replaced by any of a number of commonly-used \emph{link functions}. For example, choosing $F(x) = \int_{-\infty}^{x} \frac{1}{\sqrt{2\pi}} e^{-t^2/2} dt$ to be the cdf of a standard Gaussian random variable gives rise to the well-known probit model, while taking $F(x)$ to be the logistic function, $F(x) = \frac{1}{1+e^{-x}}$, leads to the logit regression model.  In this sense, our results are related to classical methods on inference in generalized linear models (see, e.g., \cite{McCullagh:89}); a key distinction here is that we assume both of the factors in the bilinear form to be unknown.  

Finally, we briefly compare our results with the results of \cite{Davenport:12} for low-rank matrix completion from one-bit observations. In that work, the authors consider maximum-likelihood optimizations over a (convex) set of max-norm and nuclear-norm constrained matrices, and show that the estimates so-obtained satisfy
\begin{equation}
\frac{\|\bX^*-\widehat{\bX}\|_F^2}{n_1 n_2} = {\cal O}\left( C_{F,\Xmax} \Xmax \sqrt{\frac{(n_1 + n_2) r}{m}}\right) 
\end{equation}
with high probability, where $C_{F,\Xmax}$ is a parameter that depends on the max-norm constraint cdf $F(\cdot)$ and pdf $f(\cdot)$, somewhat analogously to the leading factor of $(c_{F,\Xmax}/c'_{F,\Xmax})$ in our bounds.  It is interesting to note a main qualitative difference between that result and ours.  For concreteness, let us consider the case where $\bA^*$ is not sparse, so that we may set $\|\bA^*\|_0 = n_2r$ in \eqref{eqn:bernsparse}.  In that case, it is easy to see that the overall estimation error behavior predicted by our bound \eqref{eqn:bernsparse} scales in proportion to ratio between the number of degrees of freedom ($(n_1 + n_2) r $) and the nominal number of measurements $m$, while the bound in \cite{Davenport:12} scales according to the square root of that ratio.  The authors of \cite{Davenport:12} proceed to show that the estimation error rate they obtain is minimax optimal over their set of candidate estimates; on the other hand, our bound appears (at least up to leading factors) to be tighter for this case where $\bX^*$ is exactly low-rank (e.g., setting $\|\bA^*\|_0 = n_2r$ in our bound) and $m\geq c (n_1 + n_2)r$ for a constant $c>1$.  That said, our approach also enjoys the benefit of having the rank or an upper bound for it be known (and being combinatorial in nature!), while the procedure in \cite{Davenport:12} assumes only a bound on the nuclear norm of the unknown matrix.  Whether our bounds here exhibit minimax-optimal estimation error rates for matrix completion under sparse factor models and for the several various likelihood models we consider here is still an open question since (to our knowledge) lower bounds for these problems have not yet been established (but are a topic of our ongoing efforts).

\section{Experimental Evaluation}\label{sec:exp}

In this section we provide experimental evidence to validate the error rates established by our theoretical results.  Recall (as noted above) that the original problem \eqref{eqn:xhatsparse} we aim to solve has multiple sources of non-convexity, including the bilinear matrix factor model (i.e., $\bX = \bD\bA$), the presence of the $\ell_{0}$ penalty, and the discretized sets $\cal D$ and $\cal A$. In what follows, we undertake a slight relaxation of \eqref{eqn:xhatsparse} replacing the sets $\cal D$ and $\cal A$ by their convex hulls (and with slight overloading of notation in what follows, we refer to these new sets also as $\cal D$ and $\cal A$).  With this relaxation, the set $\cal X$ becomes a set of all matrices $\bX \in \mathbb{R}^{n_{1} \times n_{2}}$ with bounded entries. Note that with these simplifying relaxations, the sets $\cal X$, $\cal D$ and $\cal A$ are convex. 

Now, for each likelihood model, our aim is to solve a constrained maximum likelihood problem of the form
\begin{eqnarray} \label{eq:mainOpt}
\min_{\bD\in\mathbb{R}^{n_1\times r},\bA\in\mathbb{R}^{r\times n_2} } && \sum_{i,j} s_{i,j} \ell (Y_{i,j}, X_{i,j}) + I_{\cal X}(\bX) + I_{\cal D}(\bD) +  I_{\cal A}(\bA) + \lambda \|\bA\|_0\\
\nonumber \mbox{s.t. } &&  \bX = \bD\bA .
\end{eqnarray}
 where  $\ell(Y_{i,j}, X_{i,j}) = -\log(p_{X_{i,j}}(Y_{i,j}))$ is the negative log-likelihood for the corresponding noise model, $s_{i,j}$ is a selector taking the value $1$ when $(i,j) \in \cS$ and $0$ otherwise, $\lambda \ge 0$ is a regularization parameter, and each of $I_{\cal X}(\cdot),~ I_{\cal D}(\cdot),$ and $I_{\cal A}(\cdot) $  are the indicator functions of the sets $\cal X,~ \cal D$ and $\cal A$ respectively\footnote{Recall that the indicator function is defined as a function that takes values $0$ or $\infty$ depending on whether its argument is an element of the set described as the subscript.}. Here, we have that each of the indicator functions is separable in the individual entries of its argument, e.g. $I_{\cal X}(\bX) = \sum_{i,j}I_{{\cal X}_{i,j}}(X_{i,j})$, and similarly for the indicator functions of ${\cal D}$ and ${\cal A}$.  

We propose a solution approach based on the Alternating Direction Method of Multipliers (ADMM) \cite{Boyd:11}. First we write the augmented Lagrangian of \eqref{eq:mainOpt} as 
\begin{equation}\label{eqn:Aug_lag}
\cL(\bD,\bA,\bX,\lam ) = \sum_{i,j} s_{i,j}\ell (Y_{i,j}, X_{i,j}) +  I_{\cal X }(\bX) + I_{\cal D}(\bD) +  I_{\cal A}(\bA) + \lambda \|\bA\|_0 + \tr\left(\lam (\bX - \bD\bA  )  \right) + \frac{\rho}{2} \|\bX - \bD\bA \|_F^2,
\end{equation}
where $\lam$ is a matrix of Lagrange multiplier parameters and $\rho > 0$ is a parameter.  Then, starting with some feasible $\bA^{(0)}, \bD^{(0)}, \lam^{(0)}$ we iteratively update $\bX$, $\bA$, $\bD$, and $\lam$ according to
\begin{eqnarray}\label{eqn:ADMM_Steps}
\mathbf{(S1:)} \ \bX^{(k+1)} &:=&    \arg \min_{\bX\in\mathbb{R}^{n_1\times n_2}} \cL(\bD^{(k)},\bA^{(k)},\bX,\lam^{(k)} )   \\
\mathbf{(S2:)} \ \bA^{(k+1)} &:=& \arg \min_{\bA\in\mathbb{R}^{r\times n_2}} \cL(\bD^{(k)},\bA,\bX^{(k+1)},\lam^{(k)} )   \\
\mathbf{(S3:)} \ \bD^{(k+1)} &:=&  \arg \min_{\bD\in\mathbb{R}^{n_1\times r}} \cL(\bD,\bA^{(k+1)},\bX^{(k+1)},\lam^{(k)} )   \\
\mathbf{(S4:)} \ \lam^{(k+1)} &=& \lam^{(k)} + \rho (\bX^{(k+1)} - \bD^{(k+1)}\bA^{(k+1)}),
\end{eqnarray}
until convergence, which here is quantified in terms of when norms of primal and dual residuals become sufficiently small (along the lines of the criteria described in \cite{Boyd:11}). Next we describe how to solve each of these steps.  

Solving $\mathbf{S1}$ involves the following optimization problem
\begin{eqnarray}
 \min_{\bX\in\mathbb{R}^{n_1\times n_2}} && \sum_{i,j} s_{i,j} \ell (Y_{i,j}, X_{i,j}) +  I_{\cal X}(\bX) + \tr\left(\lam^{(k)} (\bX - \bD^{(k)}\bA^{(k)}  )  \right) + \frac{\rho}{2} \|\bX - \bD^{(k)}\bA^{(k)} \|_F^2,
\end{eqnarray} 
which after completing the square and ignoring constant terms is equivalent to 
\begin{eqnarray}\label{eqn:opt_S1}
 \min_{\bX\in\mathbb{R}^{n_1\times n_2}} && \sum_{i,j} s_{i,j}\ell (Y_{i,j}, X_{i,j}) +  I_{\cal X}(\bX) + \frac{\rho}{2} \left\| \bX - \bD^{(k)}\bA^{(k)} + \frac{ \lam^{(k)} }{\rho}    \right\|_F^2. 
\end{eqnarray}
Due to the assumed separability of the indicator function, the above problem is separable in each entry $X_{i,j}$ and the entries can be updated in parallel by solving the following scalar convex optimization problem for each entry.
When $\ell(y, x)$ is a convex function of $x$, the solution is given by
\begin{eqnarray}
\nonumber    X_{i,j}^{(k+1)} &=&  \textrm{Proj}_{{\cal X}_{i,j}} \left[  \textrm{prox}_{s_{i,j},\ell} \left( (\bD^{(k)}\bA^{(k)})_{i,j} - \frac{ (\lam^{(k)})_{i,j} }{\rho}  ; \rho, Y_{i,j}  \right) \right],\\
    &=& \begin{cases}
    \textrm{Proj}_{{\cal X}_{i,j}} \left[  \textrm{prox}_{\ell} \left( (\bD^{(k)}\bA^{(k)})_{i,j} - \frac{ (\lam^{(k)})_{i,j} }{\rho}  ; \rho, Y_{i,j}  \right) \right], & \text{if} ~s_{i,j} = 1\\
    \textrm{Proj}_{{\cal X}_{i,j}} \left[ (\bD^{(k)}\bA^{(k)})_{i,j} - \frac{ (\lam^{(k)})_{i,j} }{\rho} \right],              & \text{otherwise}
\end{cases}
\end{eqnarray}
where $\textrm{prox}_{\ell} (  z ; \rho, y ) =  \arg \min_{x \in\mathbb{R}} \ \ell (y, x) + \frac{\rho}{2} \left( x - z \right)^2$ is the \emph{proximal operator} of the loss function $\ell (y, \cdot)$ and ${\rm Proj}_{{\cal X}_{i,j}}(x)$ is the projection\footnote{Here, this set is just an interval and the projection operator returns $x$ if $x \in {\cal X}_{i,j}$ or the nearest endpoint of the interval ${\cal X}_{i,j}$ otherwise.} of the scalar $x$ onto the set ${\cal X}_{i,j}$. For several of the loss functions we consider, the proximal operator can be computed in closed form; for the one-bit settings we can use Newton's second order method (or gradient descent) to numerically evaluate it as described later. A table of proximal operators for the various losses we consider here is provided in Table~\ref{tab:prox_formulae}.

\begin{table}[t!]
	\begin{center}
 		\begin{tabular}{| c | c | c |}
 				\hline
 				  & $\ell (y, x)$  & $\textrm{prox}_{\ell} (  z ; \rho, y )$  \\ \hline \hline
				 \vspace{-1.3em} & &  \\
 				\textbf{Gaussian} & $\frac{(y-x)^2}{2\sigma^2}$ & $ \frac{y + \sigma^2 \rho z }{1+\sigma^2\mu} $ \\
				\vspace{-1.3em} & &  \\ \hline
 				\vspace{-1.3em} & &  \\
				\textbf{Poisson} & $x - y \log(x) $ & $\frac{\rho z - 1 +  \sqrt{(\rho z - 1)^2 + 4 \rho y}}{2\rho} $  \\ 
				\vspace{-1.3em} & &  \\ \hline
 				\textbf{Laplace} & $\lambda |y-x| $ & $  y - \textrm{Soft}(z-y, \lambda/ \rho ) $ \\ \hline
 					\textbf{One-bit} & $-y \log ( F(x)) - (1-y)\log ( 1-F(x))   $ &  (\emph{Newton's method}) \\ \hline
 			\end{tabular}
 			 	\caption{Expressions for $\textrm{prox}_{\ell} (  z ; \rho, y ) =  \arg \min_{x \in\mathbb{R}} \ \ell (y, x) + \frac{\rho}{2} \left( x - z \right)^2$ for different $\ell(y, x)$, corresponding to negative log-likelihoods for the models we examine.  Here, for $\lambda>0$, ${\rm soft}(x,\lambda) = {\rm sgn}(x)\max\{|x|-\lambda, 0\}$}
				\label{tab:prox_formulae}
		\end{center}
\end{table}    

Completing the square and ignoring the constant terms the subproblem $\mathbf{S2}$ is equivalent to 
\begin{eqnarray}\label{eqn:opt_S2}
\bA^{(k+1)} = {\rm arg}~\min_{\bA\in\mathbb{R}^{r \times n_{2}}} &&   I_{\cal A}(\bA) + \lambda \|\bA\|_0 + \frac{\rho}{2} \left\| \bX^{(k+1)} - \bD^{(k)}\bA + \frac{ \lam^{(k)} }{\rho}   \right\|_F^2.
\end{eqnarray}
In order to solve this problem we adopt the constrained iterative hard thresholding approach from \cite{Lu:12}, as outlined in Algorithm \ref{alg:IHT}.  Finally, after completing the square and ignoring the constant terms, we see that the subproblem $\mathbf{S3}$ is equivalent to 
\begin{eqnarray}\label{eqn:opt_S3}
\bD^{(k+1)} = {\rm arg} \min_{\bD\in\mathbb{R}^{n_{1}\times r}} &&  I_{\cal D}(\bD) + \frac{\rho}{2} \left\| \bX^{(k+1)} - \bD\bA^{(k+1)} + \frac{ \lam^{(k)} }{\rho}    \right\|_F^2,
\end{eqnarray}
which we solve here by projected Newton gradient descent algorithm, described in Algorithm \ref{alg:D_Newton}. Our overall algorithmic approach is summarized in Algorithm \ref{alg:ADMM}.

\begin{algorithm}[h]
	\caption{A\_IHT($\bX$, $\bD$, $\bZ$,~$\epsilon$)  -- For solving $ \min_{\bA\in\mathbb{R}^{n_1\times p}}   I_{\cal A}(\bA) + \lambda \|\bA\|_0 + \frac{\rho}{2} \left\| \bZ - \bD\bA \right\|_F^2$}
	\label{alg:IHT}
	\begin{algorithmic} 
		\STATE \hspace{-1.4em} \textbf{Inputs:} $\bX,\bD,\bZ, \epsilon, \rho $
		\STATE \hspace{-1.4em} \textbf{Initialize:} $\bA^{(0)} = \mathbf{0}$
		\REPEAT
		\STATE  $\bY^{(k+1)} =  \bA^{(k)} - \bD^T(\bD\bA^{(k)} - \bZ )/\| \bD \|_2^2 $
		\STATE \textbf{Update:}  $Y^{(k+1)}_{i,j} = 0 $ if  $ \big | Y^{(k+1)}_{i,j} \big |\le  \sqrt{\frac{2\lambda}{\rho \| \bD \|_2^2}}$.
		\STATE $\quad \text{if}~Y^{(k+1)}_{i,j} \in {\cal A}_{i,j}:$
		\STATE $\quad \quad \quad A^{(k+1)}_{i,j} = Y^{(k+1)}_{i,j};$
		\STATE $\quad \text{else:}$
		\STATE $\quad \quad \quad A^{(k+1)}_{i,j} = {\rm arg}~\underset{x \in {\cal A}_{i,j}}{{\rm min}} ~ \left\{x^{2} - 2x\left[ A^{(k)}_{i,j} - \frac{((\bD^T\bD\bA^{(k)})_{i,j} - (\bD^T\bZ)_{i,j} )}{\| \bD \|_2^2}\right]\right\}$
		\UNTIL  $   \frac{\| \bA^{(k+1)} - \bA^{(k)} \|_F }{ \| \bA^{(k)}\|_F} \le \epsilon $
		\STATE \hspace{-1.4em} \textbf{Output:} $\bA = \bA^{(k+1)}$
	\end{algorithmic}
\end{algorithm}

\begin{algorithm}[h]
	\caption{D\_Newton($\bX$, $\bA$, $\bZ$,~$\epsilon$)  -- For solving $ \min_{\bD\in\mathbb{R}^{n_1\times p}}   I_{\cal D}(\bD) + \frac{\rho}{2} \left\| \bZ - \bD\bA \right\|_F^2$}
	\label{alg:D_Newton}
	\begin{algorithmic} 
		\STATE \hspace{-1.4em} \textbf{Inputs:} $\bX, \bA, \bZ, \epsilon, \rho $
		\STATE \hspace{-1.4em} \textbf{Initialize:} $\bD^{(0)} = \mathbf{0}$
		\REPEAT
		\STATE $\bD^{(k+1)} = \textrm{Proj}_{\cal D}\left[  \bD^{(k)} - \rho\left(\bD^{(k)}\bA - \bZ \right)\bA^T \left( \rho\bA\bA^T + \delta \bI\right )^{-1}   \right] $
		\UNTIL  $   \frac{\| \bD^{(k+1)} - \bD^{(k)} \|_F }{ \| \bD^{(k)}\|_F} \le \epsilon $
		\STATE \hspace{-1.4em} \textbf{Output:} $\bD = \bD^{(k+1)}$\\ 
	\end{algorithmic}
\end{algorithm}

\begin{algorithm}[h]
	\caption{ ADMM algorithm for solving problem \eqref{eq:mainOpt} }
	\label{alg:ADMM}
	\begin{algorithmic} 
		\STATE \hspace{-1.4em} \textbf{Inputs:} $\epsilon_1, \epsilon_2, ~\Delta_{1}, ~\Delta_{2},~\Delta_{1}^{\rm stop}, ~\Delta_{2}^{\rm stop},~\eta,~ \rho^{(0)}>0$
		\STATE \hspace{-1.4em} \textbf{Initialize:} $\bD^{(0)} \in \cal D$ , $\bA^{(0)} \in \cal A$, $\lam^{(0)}$. 
		\REPEAT
		\STATE   $\bX_{i,j}^{(k+1)}  =  \textrm{Proj}_{\cX} \left[  \textrm{prox}_{s_{i,j}\ell} \left( (\bD^{(k)}\bA^{(k)})_{i,j} - \frac{ (\lam^{(k)})_{i,j} }{\rho^{(k)}}  ; \rho^{(k)}, Y_{i,j}  \right) \right] $
		\STATE $\bA^{(k+1)}$ := A\_IHT $\left( \bX_{(k+1)},\bD^{(k)}, \bX^{(k+1)} + \lam^{(k)}/\rho^{(k)} , \epsilon_1\right)$
		\STATE  $\bD^{(k+1)} :=$  D\_Newton $\left( \bX_{(k+1)},\bA^{(k+1)}, \bX^{(k+1)} + \lam^{(k)}/\rho^{(k)} , \epsilon_2  \right)$ 
		\STATE  $\lam^{(k+1)} = \lam^{(k)} + \rho^{(k)} (\bX^{(k+1)} - \bD^{(k+1)}\bA^{(k+1)})$
		\STATE Set $\Delta_{1} = \| \bX^{(k+1)} - \bD^{(k+1)}\bA^{(k+1)}  \|_F $ and $\Delta_{2} =  \rho^{(k)} \cdot \|\bD^{(k)}\bA^{(k)}  - \bD^{(k+1)}\bA^{(k+1)}  \|_F$
		\STATE $\rho^{(k+1)}= 
\begin{cases}
    \eta \cdot \rho^{(k)},& \text{if } \Delta_{1} \geq 10 \cdot \Delta_{2}\\
    \rho^{(k)}/\eta,              & \text{if } \Delta_{2} \geq 10 \cdot \Delta_{1}\\
    \rho^{(k)}, 			& \text{otherwise}
\end{cases}$

		\UNTIL  $ \Delta_{1}  \le \Delta_{1}^{\rm stop} $ and $  \Delta_{2} \le \Delta_{2}^{\rm stop} $
		\STATE \hspace{-1.4em} \textbf{Output:} $\bD = \bD^{(k+1)} \text{ and }\bA = \bA^{(k+1)}$
	\end{algorithmic}
\end{algorithm}

\subsection{Experiments}

We perform experimental validation of our theoretical results on synthetic data for two different scenarios, corresponding to when the columns of the matrix $\bA^{*}$ are $k$-sparse, and when each belongs to a weak-$l_{p}$ ball. For each scenario we construct the true data matrices $\bX^{*} = \bD^{*}\bA^{*}$ by individually constructing the matrices $\bD^{*} $ and $\bA^{*}$ (as described below), where the entries of the true matrices $\bX^{*}$, $\bD^{*}$, and $\bA^{*}$ are bounded in $[{\rm X}_{\rm{min}}^{*},~ {\rm X}_{\rm{max}}^{*}]$, $[{\rm D}_{\rm{min}}^{*}, ~{\rm D}_{\rm{max}}^{*}]$ and $[{\rm A}_{\rm{min}}^{*},~ {\rm A}_{\rm{max}}^{*}]$ respectively.  


We generate the $\bD^{*}$ matrix by first generating a Gaussian random matrix of size $n_{1} \times r$ whose entries are distributed as ${\cal N}(0, 1)$, then multiplying each element by $({\rm D}_{\rm max}^{*} - {\rm D}_{\rm min}^{*})$ to avoid pathological scaling issues. Finally, we project the resulting scaled matrix onto the set $\cal D$, which here is done by truncating all the entries bigger than ${\rm D}_{\rm max}^{*}$ to ${\rm D}_{\rm max}^{*}$ and truncating all the entries smaller than ${\rm D}_{\rm min}^{*}$ to ${\rm D}_{\rm min}^{*}$.  We construct sparse $\bA^{*}$ by generating a Gaussian random matrix of size $r \times n_{2}$, multiplying it by $({\rm A}_{\rm max}^{*} - {\rm A}_{\rm min}^{*})/3$, and projecting it onto the set $\cal A$. Then we randomly select $r - k$ locations from each column of the resulting matrix and set the corresponding entries to $0$. For the approximately sparse $\bA^{*}$, we generate each column to be a randomly permuted version of $\{{\rm A}_{\rm max}^{*}\cdot i^{-1/p}\}_{i = 1}^{r}$ with random signs (except for the Poisson likelihood case, where each column of $\bA^{*}$ has nonnegative elements).

We define the set ${\cal X}$ such that each entry of $\bX$ is bounded in the range $[{\rm X}_{\rm{min}},~ {\rm X}_{\rm{max}}]$, $\cal D$ and $\cal A$ are the set of all matrices $\bD \in \mathbb{R}^{n_{1} \times r}$ and $\bA \in \mathbb{R}^{r \times n_{2}}$ whose entries are bounded in the range $[{\rm D}_{\rm{min}}, ~{\rm D}_{\rm{max}}]$ and $[{\rm A}_{\rm{min}},~ {\rm A}_{\rm{max}}]$ respectively. It is important to note that in general the actual bounds on the magnitude of the entries of true matrices (for e.g., ${\rm D}_{\rm{min}}^{*}$, ${\rm A}_{\rm{max}}^{*}$ etc.) are unknown, and therefore during optimization we might have to use their approximations (which here are denoted as ${\rm D}_{\rm{min}}$, ${\rm A}_{\rm{max}}$ etc.) to define the feasible sets $\cal {X}$, $\cal D$ and $\cal A$.  Our specific choices of parameters for the four different likelihoods considered in this paper are summarized in Table~\ref{parameters}.

\begin{table}[h]
	\begin{center}
 		\begin{tabular}{| c | c | c | c | c|}
 				\hline
 				 \textbf{Parameters $\backslash$ Likelihood}  & \textbf{Gaussian} & \textbf{Laplace} & \textbf{Poisson} & \textbf{One-bit} \\ \hline \hline
 				$n_{1} \times n_{2}$ & $100 \times 1000$ & $100 \times 1000$ & $100 \times 1000$ & $1000 \times 1000$ \\ \hline
				$r,~k,~p$ & $20,~8,~1/3$  & $20,~8,~1/3$ & $20,~8,~1/3$ & $5,~2,~1/3$ \\ \hline
 				$[{\rm D}_{\rm min}^{*}, ~{\rm D}_{\rm max}^{*}]$ & $[-1,~1]$  & $[-1,~1]$ & $[0.1,~1]$ & $[-1,~1]$ \\ \hline
 				$[{\rm A}_{\rm min}^{*}, ~{\rm A}_{\rm max}^{*}]$ & $[-20,~20]$  & $[-20,~20]$ & $[0,~40]$ & $[-20,~20]$ \\ \hline
				$[{\rm D}_{\rm min}, ~{\rm D}_{\rm max}]$ & $[-2,~2]$  & $[-2,~2]$ & $[-2,~2]$ & $[-2,~2]$ \\ \hline
				$[{\rm A}_{\rm min}, ~{\rm A}_{\rm max}]$ & $[-40,~40]$  & $[-40,~40]$ & $[-80,~80]$ & $[-40,~40]$ \\ \hline
				$[{\rm X}_{\rm min}, ~{\rm X}_{\rm max}]$ & $[-2\cdot{\rm X}_{\rm min}^{*},~2\cdot{\rm X}_{\rm max}^{*}]$  & $[-2\cdot{\rm X}_{\rm min}^{*},~2\cdot{\rm X}_{\rm max}^{*}]$ & $[0,~2\cdot{\rm X}_{\rm max}^{*}]$ & $[-2\cdot{\rm X}_{\rm min}^{*},~2\cdot{\rm X}_{\rm max}^{*}]$ \\ \hline
 			\end{tabular}
 			 	\caption{Experimental parameters for different likelihood models we examine. Here ${\rm X}_{\rm min}^{*} = \min_{i,j} X^*_{i,j}$ and ${\rm X}_{\rm max}^{*} = \|\bX^{*}\|_{\rm max}$. }
				\label{parameters}
		\end{center}
\end{table}   

%
%
%
%
%

Now, our experimental approach is as follows.  For sparse and nearly-sparse (with columns belonging to a weak $\ell_p$ ball with $p=1/3$) coefficient matrices $\bA^*$ we generate a corresponding matrix $\bX^*$ as above.  Then, for each of a number of regularization parameters $\lambda>0$ and and sampling rates $\gamma \in (0, 1]$ we perform 20 trials of the following experiment: we generate $\cS$ according to the independent Bernoulli($\gamma$) model, obtain noisy observations of $\bX^{*}$ according to the \eqref{eqn:obsmodel}, use Algorithm \ref{alg:ADMM} to obtain\footnote{For Algorithm \ref{alg:ADMM} we set $\epsilon_{1} = \epsilon_{2} = 10^{-7}$, $\Delta_{1}^{\rm stop} = \Delta_{2}^{\rm stop} = 10$, $\eta = 1.05$ and $\rho^{(0)} = 0.001$.} an estimate $\widehat{\bX} = \widehat{\bD}\widehat{\bA}$, and compute its approximation error $\frac{\|\widehat{\bX} - \bX^{*}\|_{F}^{2}}{n_{1}n_{2}}$.  We then compute the empirical average of the errors over the 20 trials for each setting.  Fig.~\ref{figure:results1} shows the results of this experiment for the Gaussian, Laplace and Poisson likelihood models. The plots depict the empirical average (over 20 trials) per-element error as a function of sampling rate on a log-log scale; the curves shown are corresponding to the best (lowest) errors achieved over all of the regularization parameters $\lambda$ we examined. The first row corresponds to exactly sparse $\bA^{*}$ matrices and the plots in the second row corresponds to settings where $\bA^*$ is approximately sparse.  The three columns correspond to three different regimes for the Gaussian and Laplace settings (we chose the parameters $\sigma$ and $\tau$ for the Gaussian and Laplace settings, respectively, to yield identical variances; the first column corresponds to $\sigma = 0.5$ and $\tau = \sqrt{8}$, the second column corresponds to $\sigma = 1$ and $\tau = \sqrt{2}$, and the third column corresponds to $\sigma=2$ and $\tau = 1/\sqrt{2}$).  

\begin{figure*}[t]
\centering
\begin{tabular}{ccc}
\epsfig{file=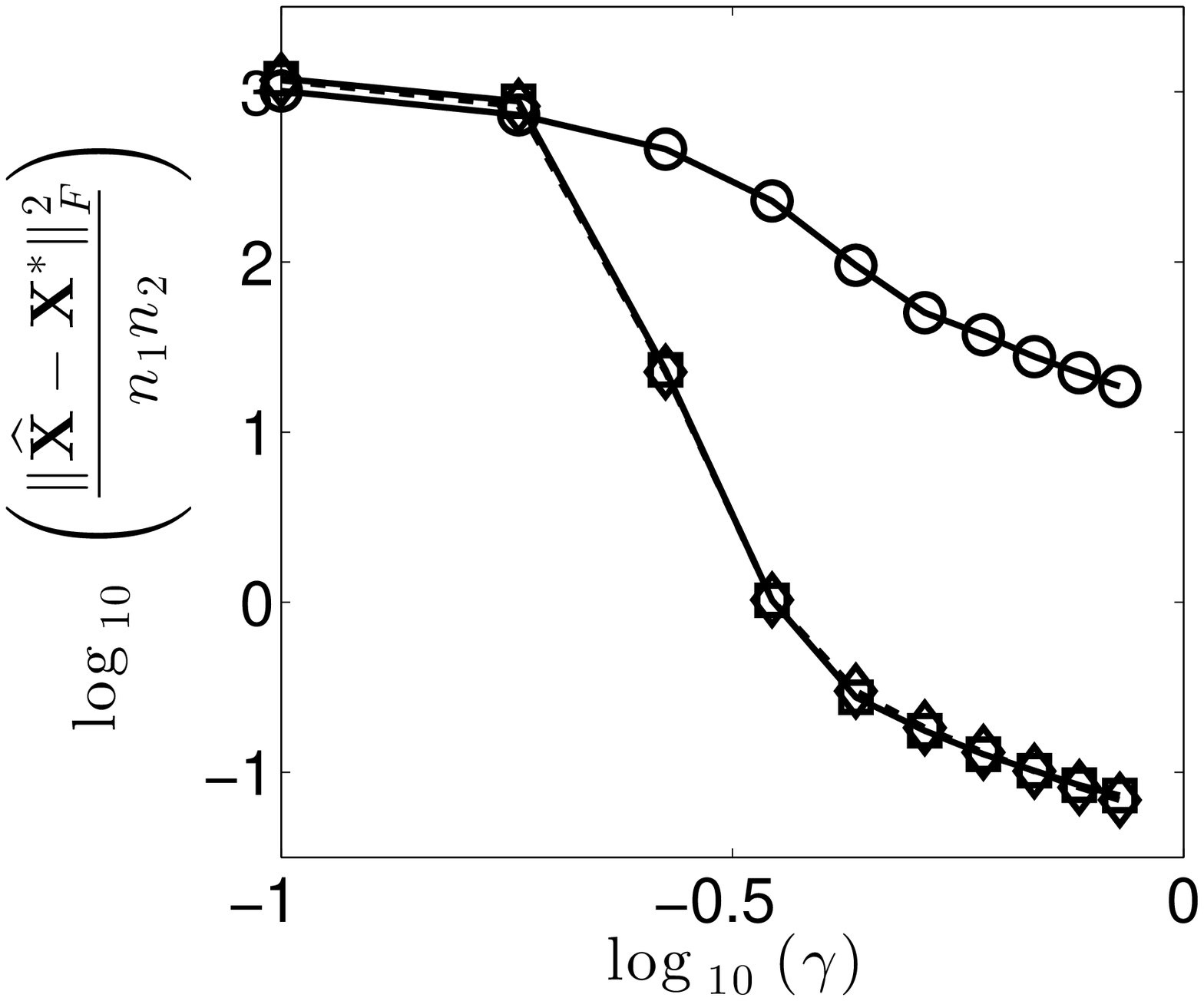,width=0.3\linewidth,clip=} &
\epsfig{file=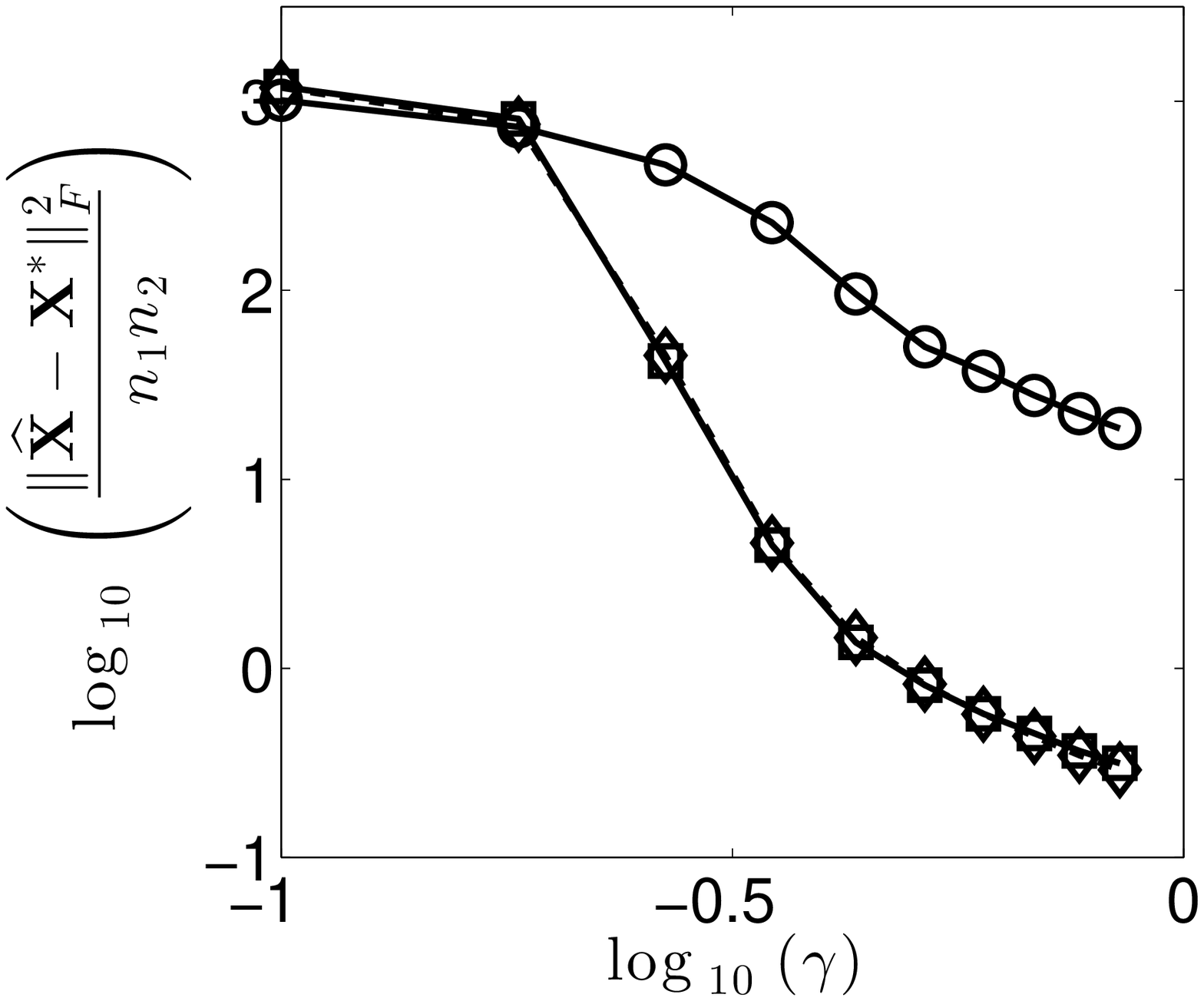,width=0.3\linewidth,clip=} &
\epsfig{file=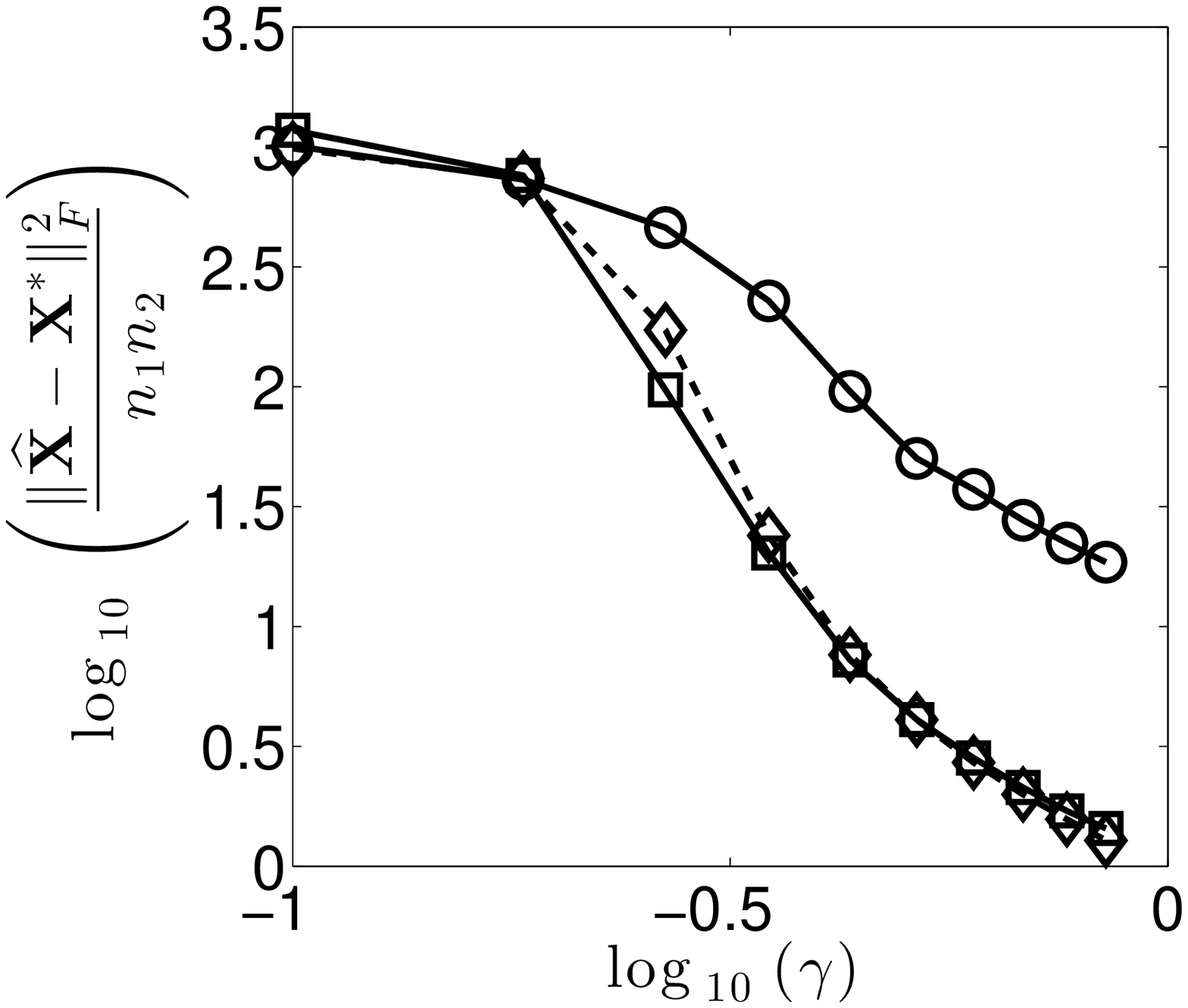,width=0.3\linewidth,clip=}\\
\epsfig{file=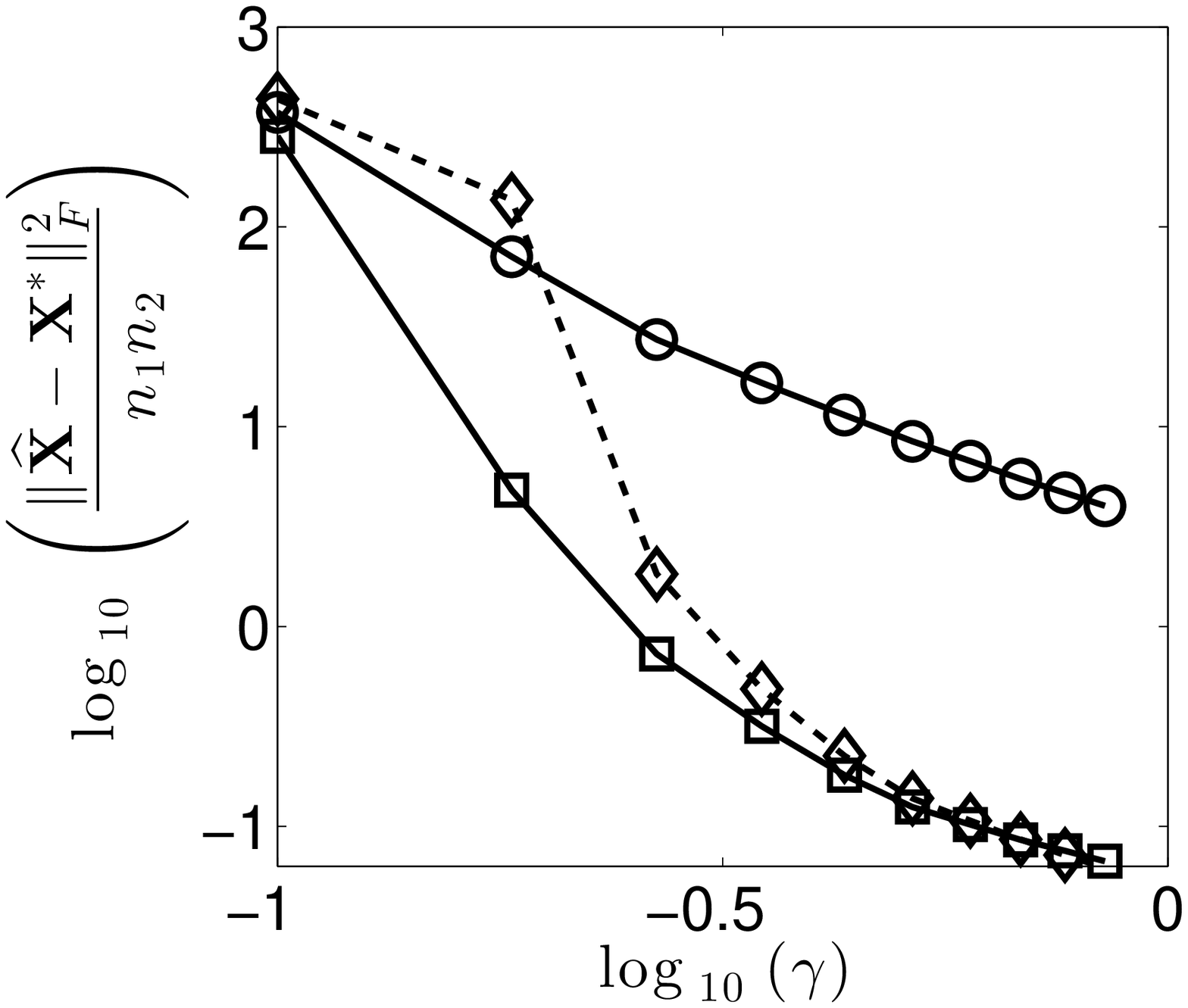,width=0.3\linewidth,clip=}&
\epsfig{file=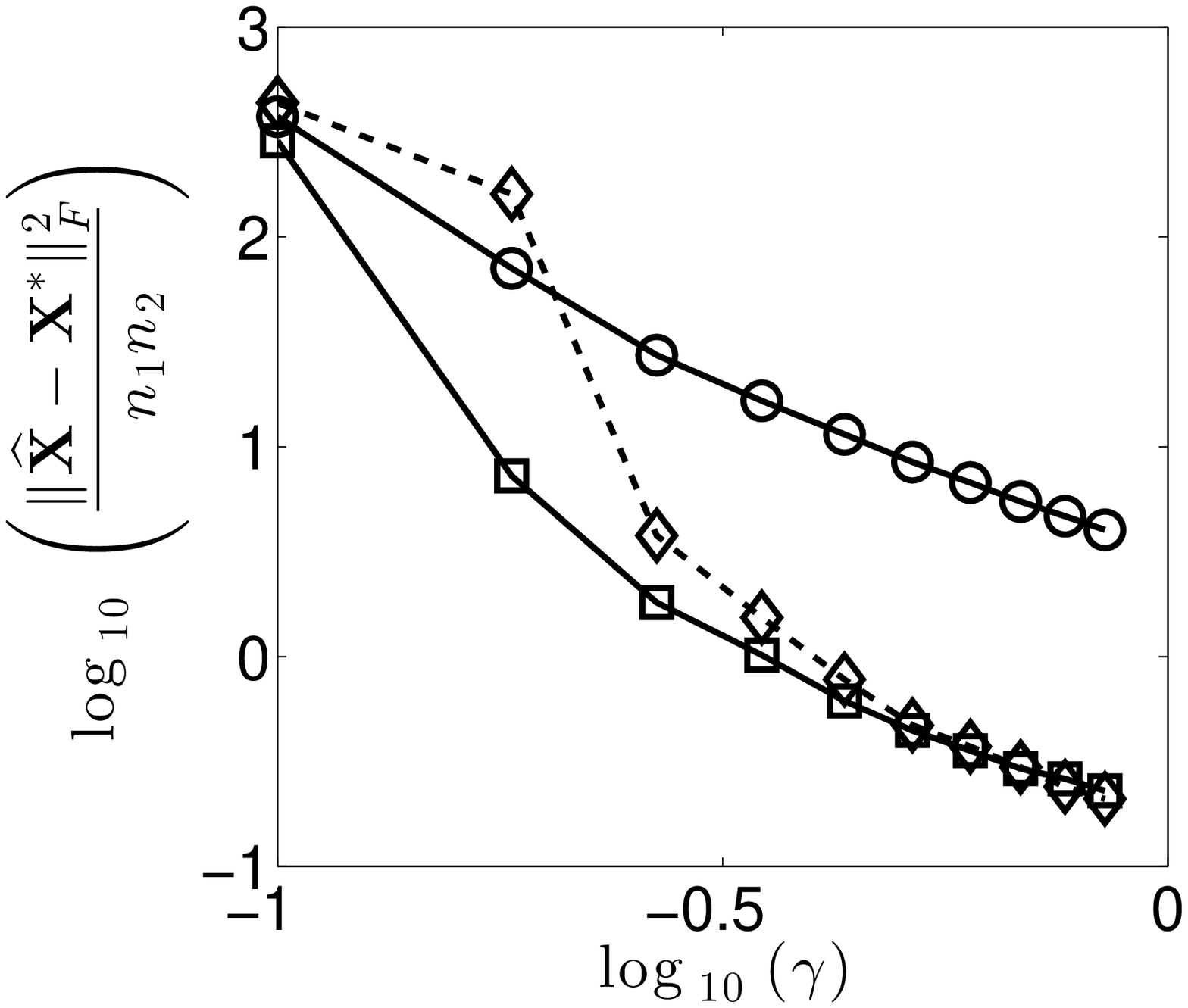,width=0.3\linewidth,clip=}&
\epsfig{file=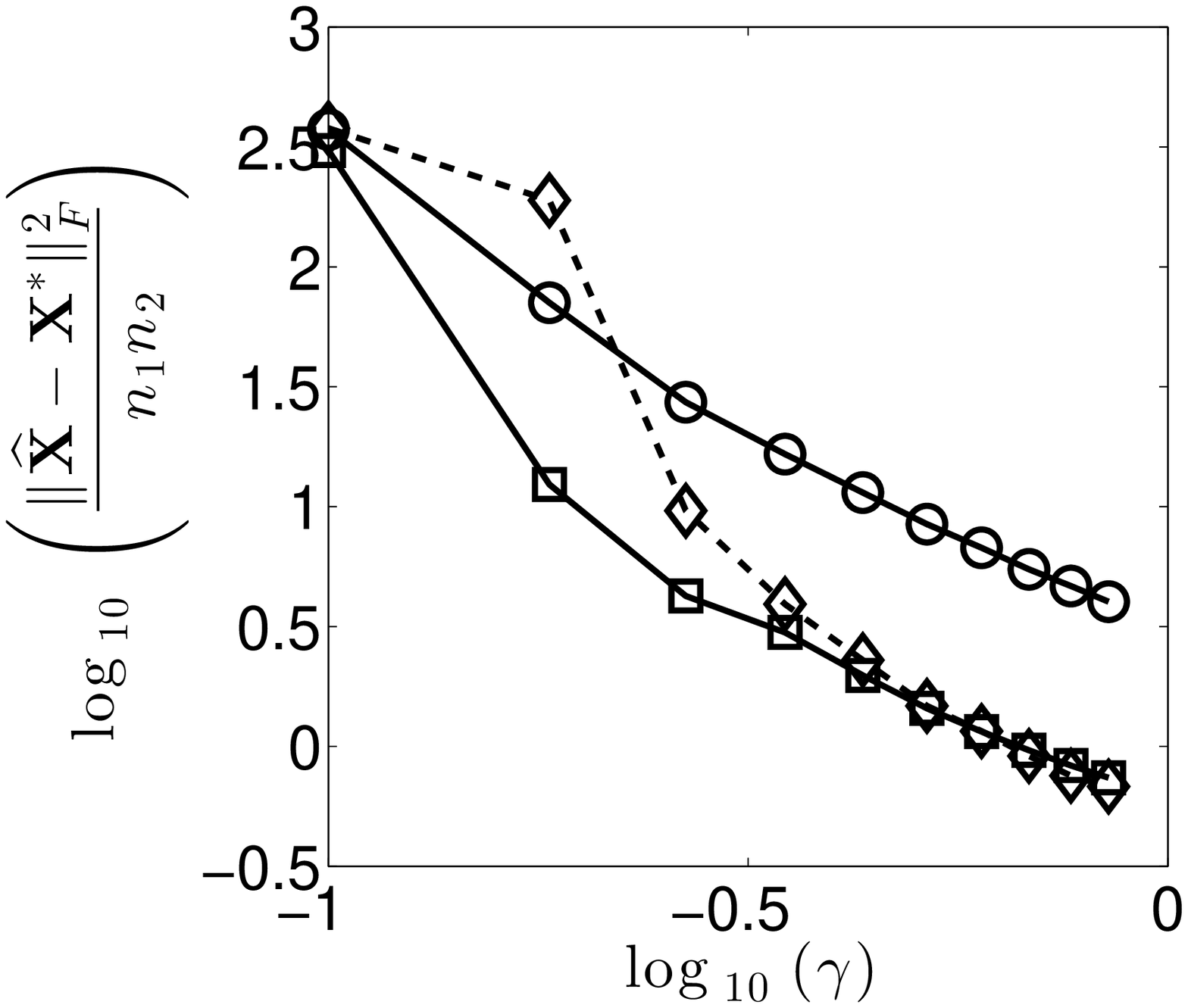,width=0.3\linewidth,clip=} 
\end{tabular}
\caption{Results of synthetic experiments for matrix completion with Gaussian, Laplace and Poisson likelihoods: ----$\Box$---- is Gaussian, $--\Diamond--$ is Laplace, and ----$\circ$---- is Poisson. Top row corresponds to sparse-factor model with $k = 8$ while the bottom row corresponds to weak-$l_{p}$ model with $p = 1/3$. Column $1$ corresponds to $\sigma^{2} = (0.5)^{2}$ (for Laplace $\tau = \sqrt{8}$), column $2$ corresponds to $\sigma^{2} = (1)^{2}$ (for Laplace $\tau = \sqrt{2}$) and column $3$ corresponds to $\sigma^{2} = (2)^{2}$ (for Laplace $\tau = 1/\sqrt{2}$). Here $n_{1} = 100$, $n_{2} = 1000$ and $r = 20$.}
\label{figure:results1}
\end{figure*}

A few interesting points are worth noting here. First, for the case where $\bA^*$ is exactly sparse, our theoretical results predict the error decay be inversely proportional to the nominal sampling rate $\gamma$; viewed on a log-log scale, this would correspond to the error decay having slope -$1$.  Our experimental results provide some evidence to validate our analysis, at least in the settings where the sampling rate $\gamma > 0.4$ -- there, the slopes of the error decays for each of the likelihood models is indeed approximately -$1$.  For the settings where the columns of $\bA^*$ belong to a weak-$\ell_p$ (with $p=1/3)$ our theory predicts that the slope of the error decay (on a log-log scale) be at least $-5/6$ for the Gaussian-noise and Poisson-distributed cases, and at least ($-2/3$) for the Laplace-noise case.  For our experiments here, it appears that the error decay in these approximately-sparse settings is actually a bit faster than predicted by the theory, as the error appears to decay with a slope of approximately -$1$.  That said, it is worth noting that our predicted rate in these cases was obtained essentially by a (squared) bias-variance tradeoff, so quantify a kind of worst-case behavior that may not always be observed in practice.

We also evaluated the performance in this setting for a one-bit observation model, using an analogous experimental setting as above.  Here, we used the logistic cumulative distribution function as the link function, i.e., $F(x) = \frac{1}{1 + e^{-x/s}}$ where $s = \frac{\sqrt{3}\cdot \sigma}{\pi}$ and $\sigma$ is a parameter that could be viewed as additive noise standard deviation\footnote{For this link function the proximal operator is 
$\textrm{prox}_{\ell} (  z ; \rho, y ) =  \arg \min_{x \in\mathbb{R}} \ -y\log(F(x)) -(1-y)\log(1 - F(x))+ \frac{\rho}{2} \left( x - z \right)^2$,
which in general is not solvable in closed form. Here, we resort to Newton's gradient descent algorithm -- rewriting the problem as $\textrm{prox}_{\ell} (  z ; \rho, y ) =  \arg \min_{x \in\mathbb{R}} \ G(x)$,
where $G(x) = -y\log(F(x)) -(1-y)\log(1 - F(x))+ \frac{\rho}{2} \left( x - z \right)^2$, it is easy to show that the gradient is $\nabla G(x) = -\frac{y}{s} + \frac{F(x)}{s} + \rho(x - z)$ and the Hessian is $\nabla^{2}G(x) = \frac{F(x)(1-F(x))}{s^{2}} + \rho$. We can then iteratively solve for $\textrm{prox}_{\ell} (  z ; \rho, y )$ by Newton steps (starting from a random $x^{(0)}$) of the form $x^{(k+1)} = x^{(k)} -  \frac{\nabla G(x)}{\nabla^{2} G(x)} $ until convergence (here, until $\|x^{(k+1)} - x^{(k)}\| \leq 10^{-7}$).
}, for the specific choice $\sigma = 0.1$.  Fig.~\ref{figure:results2} shows the error results for this case, with the first plot corresponds to sparse $\bA^{*}$ and the second to when each column of $\bA^{*}$ lies in a weak-$\ell_{p}$ ball with $p = 1/3$. As in the previous experiments, it appears here that the slope of the error decay is approximately -$1$ in each case.  Note that we adapted the experimental setting here to be more amenable to this more difficult estimation regime (specifically, we consider slightly larger matrices but having smaller rank and fewer nonzeros per column of the factor $\bA^*$, as outlined in Table~\ref{parameters}, so that the number of observations per parameter to be estimated is larger than in the previous three experimental settings). 

\begin{figure*}[t ]
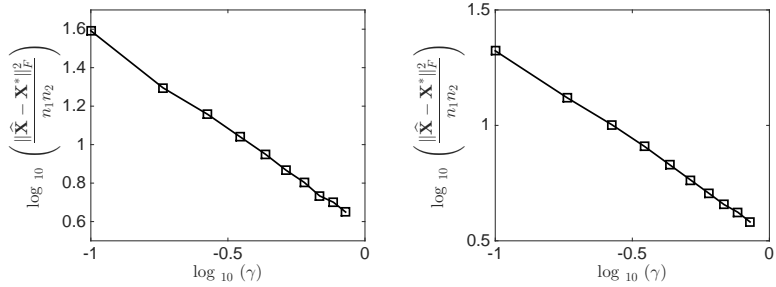

\centering
\begin{tabular}{cc}
\epsfig{file=NonDecayLogit.eps,width=0.3\linewidth,clip=} &
\epsfig{file=DecayLogit.eps,width=0.3\linewidth,clip=} 
\end{tabular}
\caption{Results of synthetic experiments for one-bit matrix completion under sparse factor models, using the logistic link function.  The left panel corresponds to the case where the $\bA^*$ matrix is exactly sparse; the right when columns of $\bA^*$ lie in a weak-$l_{p}$ ball with $p = 1/3$.}
\label{figure:results2}
\end{figure*}


\section{Discussion and Conclusions}\label{sec:disc}

We conclude with a brief discussion of our results and potentially interesting future directions.

\subsection{Extensions to Other Data Models}

Each of our theoretical results above follow essentially from the specialization of a more general result (appearing below as Lemma~\ref{lem:main}) to the case of sparse factor models.  It is interesting to note that this lemma may also be specialized (in a straightforward manner) to any of a number of other interesting factor models (e.g., non-negative matrix factorizations, factorizations where each factor may be sparse, etc.) under the same general observation models we consider here.  Further, while we provide Lemma~\ref{lem:main} specifically for the case of matrix completion, the essential analysis extends (simply) to higher-order structures (i.e., tensors) as well.   

%

\subsection{Convexification?}

As discussed in several points in the preceding sections, the optimization associated with the estimators we consider here is non-convex on account of several factors, including the presence of the $\ell_0$ term in the objective, our optimization over a discretized set, and more fundamentally, the fact that we perform inference in a general bilinear model, where both factors are unknown.  Resolving ourselves, then, to seek only local optima of the corresponding optimizations allows us to bring to bear alternating direction method of multipliers techniques, in which the $\ell_0$-based optimization \emph{subproblems} may be solved efficiently.  Interestingly, within this framework we may also directly incorporate the constraints that the matrix factor elements each come from a discretized set (indeed, this would correspond to choosing set indicator functions that take the value $\infty$ outside of the discretized sets over which we seek to optimize).  We did not pursue this latter condition in our simulations, assuming instead that the discretization of each of the elements be ``sufficiently fine'' so that we may solve the optimization numerically at machine precision (and replace the discretized sets for the candidate matrix factors by their convex hulls).  

The fact that we can (locally) handle the $\ell_0$ constraints within the ADMM framework notwithstanding, it is interesting to consider whether there is any benefit to relaxing this constraint to a convex surrogate (e.g., replacing the $\ell_0$ penalty with an $\ell_1$ penalty).  The resulting procedure would still be jointly non-convex in the matrix factors, but could be addressed within a similar algorithmic framework to the one we propose above.   Analytically, methods that prescribe optimization over a convex set comprised of the Cartesian product of a set ${\cal D}$ of matrices whose elements satisfy a max-norm constraint and a set ${\cal A}$ of matrices whose columns satisfy an $\ell_1$-constraint may be amenable to analysis using entropy-based methods that can be employed to analyze estimation error performance by bounding suprema of empirical processes indexed by elements of the feasible set of candidate estimates -- see, e.g., \cite{VanDeGeer:00, Koltch:11book, Boucheron:13}.  It would be interesting to see whether analyses along these yield substantially different results than our analysis here; analyses along these lines are a subject of our ongoing work and will be reported in a subsequent effort. 

In the meantime, it is interesting to examine (albeit, empirically) whether our algorithmic approach yields significantly different performance if we replace the $\ell_0$ regularization term by an $\ell_1$ term.  To provide some insight into this, we consider a problem of completing a $50\times 500$ matrix $\bX^*=\bD^* \bA^*$, where $\bD^*$ is $50\times 10$ and $\bA^*$ is $10\times 500$ and sparse, having $4$ nonzero elements per column.  We consider Gaussian noise-corrupted observations obtained at a subset of locations of $\bX^*$ (generated according to the independent Bernoulli model), and three different reconstruction approaches: the first is the algorithmic approach described in the previous section, the second is a slight variation of our proposed approach where we replace the $\ell_0$ penalty by an $\ell_1$ penalty (and replace the corresponding inference step with an accelerated first-order method as in \cite{Beck:09}), and the the third method is a more standard low-rank recovery obtained via \emph{nuclear-norm} regularization, as $\widehat{\bX} = \arg\min_{\bX} \|\bY_{\cS}-\bX_{\cS}\|_F^2 + \lambda \|\bX\|_{*}$.  For each method, we examined a range of possible values for the regularization parameter, and selected the reconstruction corresponding (clairvoyantly) to the best choice for each method.  The results, provided in Figure~\ref{figure:images}, show that the best-performing $\ell_0$ and $\ell_1$ regularized sparse factor completion methods perform comparably, while both achieve (slightly) lower error than the best nuclear norm regularized completion estimate.  Of course, as noted above, our algorithmic approach identifies (at best) a local minimum of the overall non-convex problem we aim to solve, but even at that, it is encouraging to see that the ADMM-based optimization(s) identify good-quality estimates. 
\begin{figure*}[t]
\centering
\begin{tabular}{c}
\epsfig{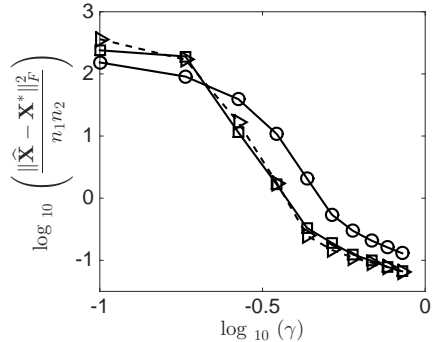} 
\end{tabular}
\caption{Comparison between sparse-factor and nuclear-norm-regularized matrix completion methods.  The curves are: our proposed procedure with $\ell_0$ regularizer ($\Box$), the $\ell_1$ regularized variant of our approach ($\rhd$), and nuclear norm regularized low-rank matrix completion ($\circ$).  The sparse factor completion methods perform similarly, and both achieve a lower error than the best nuclear-norm regularized estimate for sampling rates $\gamma \geq 10^{-0.5} \approx 30\%$.}
\label{figure:images}
\end{figure*}

It is also interesting to consider an alternative, more essential, convexification of our problem of interest here, using the machinery of \emph{atomic norms} as introduced in \cite{Chand:12}.  Specifically, one may view matrices adhering to the sparse factor models we investigate here as sums of rank-one matrices formed as outer products between a (non-sparse) $n_1\times 1$ vector and a (sparse) $n_2\times 1$ vector.  Following \cite{Chand:12}, one can consider the convex hull of the set of all such rank-one atoms having unit (Frobenius) norm as the unit-ball for a norm that serves as a regularizer for matrices representable by weighted sums of only a few atoms.  A very recent work \cite{Richard:14} has begun to identify properties of atomic norms so-formed, and extensions to the cases where both of the vectors may be sparse, and have established some estimation guarantees for recovering simple matrices (comprised of a single rank-one outer product of sparse vectors) from a collection of Gaussian measurements.  Interestingly, the authors of \cite{Richard:14} note that resulting inference procedures using their so-called $(k,q)$-norm (formed from atoms that are rank-one outer products between $k$-sparse and $q$-sparse vectors), while convex, may still be computationally intractable (even NP-hard)!  At any rate, it would be quite interesting to extend this approach to the entry-wise sampling models and various likelihood models we consider here, and we defer investigations along these lines to a future work. 

\subsection{Lower Bounds}

Our error bounds here provide some insight into the performance of sparsity-penalized maximum likelihood estimation approaches to sparse factor matrix completion tasks.  To the best of our knowledge, lower bounds on the achievable mean-square estimation error for these tasks have not been established, but would be a valuable complement to place our results here into a broader context.  Efforts along these lines are ongoing, and will be reported in a future work.


\appendix

\subsection{Proof of Theorem~\ref{thm:main}}\label{a:thmproof}

Our proof of Theorem~\ref{thm:main} is based on an application of the following general lemma, which we prove in Appendix~\ref{a:proof}. 

\begin{lemmai}\label{lem:main}
Let $\bX^*$ be an $n_1\times n_2$ matrix whose elements we aim to estimate, and let $\cX$ be a countable collection of candidate reconstructions $\bX$ of $\bX^*$, each with corresponding penalty $\pen(\bX)\geq 1$, so that the collection of penalties satisfies the summability condition $\sum_{\bX\in\cX} 2^{-\pen(\bX)} \leq 1$. 

Fix an integer $m$ with $4\leq m \leq n_1n_2$, let $\p = m(n_1n_2)^{-1}$, generate a sampling set $\cS$ according to the independent Bernoulli($\p$) model so that each $(i,j)\in[n_1]\times [n_2]$ is included in $\cS$ independently with probability $\p$, and obtain corresponding observations $\bY_{\cS}\sim p_{\bX^*_{S}}= \prod_{(i,j)\in\cS} p_{X^*_{i,j}}$, which are assumed to be conditionally independent given $\cS$.  Then, if $\cD$ is any constant satisfying
\begin{equation}
\cD \geq \max_{\bX\in\cX} \max_{(i,j)\in[n_1]\times [n_2]}  \D(p_{X^*_{i,j}}\|p_{X_{i,j}}),
\end{equation} 
we have that for any 
\begin{equation}\label{eqn:xicond}
\xi \geq \left(1 + \frac{2\cD}{3}\right) \cdot 2\log 2, 
\end{equation}
the complexity penalized maximum likelihood estimator
\begin{equation}
\widehat{\bX}^{\xi} = \widehat{\bX}^{\xi}(\cS, \bY_{\cS}) = \arg\min_{\bX\in \cX}\left\{-\log p_{\bX_{\cS}}(\bY_{\cS}) + \xi \cdot \pen(\bX)  \right\},
\end{equation}
satisfies the (normalized, per-element) error bound
\begin{equation}
\frac{\E_{\cS,\bY_{\cS}}\left[-2\log \A(p_{\widehat{\bX}^{\xi}}, p_{\bX^*})\right]}{n_1 n_2} \leq 3 \cdot \min_{\bX\in\cX} \left\{ \frac{\D(p_{\bX^*}\|p_{\bX})}{n_1 n_2} + \left(\xi + \frac{4\cD\log 2}{3}\right)\frac{\pen(\bX)}{m}\right\} + \frac{8 \cD \log m}{m},
\end{equation}
where, as denoted, the expectation is with respect to the joint distribution of $\cS$ and $\bY_{\cS}$.
\end{lemmai}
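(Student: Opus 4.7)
The approach extends the Li--Barron complexity-penalized maximum-likelihood oracle inequality to the random-sampling regime by combining (i) a conditional-on-$\cS$ Markov argument on the square-root likelihood ratio with (ii) a Bernstein/Bennett-type sampling concentration that bridges the ``sampled'' Hellinger affinity $\A(p_{\bX_\cS},p_{\bX^*_\cS})$ appearing naturally in step (i) and the ``full'' affinity $\A(p_{\bX},p_{\bX^*})$ appearing on the LHS of the statement. Both concentration steps are made uniform over the countable class $\cX$ by choosing the tail parameter as $u = u_0 + (\log 2)\pen(\bX)$ and applying the Kraft summability $\sum_{\bX\in\cX} 2^{-\pen(\bX)} \leq 1$.

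\textbf{Conditional Li--Barron step.} Fix $\cS$. For any $\bX\in\cX$, the identity $\E_{\bY_\cS|\cS}\bigl[\sqrt{p_{\bX_\cS}/p_{\bX^*_\cS}}\bigr] = \A(p_{\bX_\cS},p_{\bX^*_\cS})$ combined with Markov gives $-2\log\A(p_{\bX_\cS},p_{\bX^*_\cS}) \leq \log(p_{\bX^*_\cS}/p_{\bX_\cS}) + 2u$ with conditional probability $\geq 1-e^{-u}$. A penalized union bound makes this simultaneous over $\bX\in\cX$ on a conditional event of probability $\geq 1-e^{-u_0}$. Applied at $\bX=\widehat{\bX}^\xi$ and combined with the estimator's optimality $-\log p_{\widehat{\bX}_\cS^\xi} + \xi\pen(\widehat{\bX}^\xi) \leq -\log p_{\bX_\cS} + \xi\pen(\bX)$ (which converts the log-likelihood ratio at $\widehat{\bX}^\xi$ into that at an arbitrary $\bX$ plus $\xi(\pen(\bX)-\pen(\widehat{\bX}^\xi))$), and using $\xi \geq 2\log 2$ to drop the nonpositive residual coefficient on $\pen(\widehat{\bX}^\xi)$, this yields
\[
-2\log\A(p_{\widehat{\bX}^\xi_\cS}, p_{\bX^*_\cS}) \leq \min_{\bX\in\cX}\bigl\{\log(p_{\bX^*_\cS}/p_{\bX_\cS}) + \xi\,\pen(\bX)\bigr\} + 2u_0.
\]

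\textbf{Sampled-to-full Hellinger step.} Write $-2\log\A(p_{\bX_\cS},p_{\bX^*_\cS}) = \sum_{i,j}s_{i,j}\,\alpha_{i,j}(\bX)$, where the $s_{i,j}\sim\mathrm{Bernoulli}(\p)$ are the independent mask indicators and $\alpha_{i,j}(\bX) \triangleq -2\log\A(p_{X_{i,j}},p_{X^*_{i,j}})$. These entries are nonnegative and bounded by $\cD$ (using the standard Jensen inequality $-2\log\A \leq \D \leq \cD$), so the sum has mean $\p\,d^2(\bX,\bX^*)$, with $d^2(\bX,\bX^*)\triangleq -2\log\A(p_\bX,p_{\bX^*})$, and variance at most $\p\,\cD\,d^2(\bX,\bX^*)$. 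Bernstein's inequality (in its Bennett form with $Mv/3$ denominator and $M=\cD$), followed by a calibrated AM--GM split of the resulting $\sqrt{2\p\,\cD\,d^2\,u}$ term and a $u=u_0+(\log 2)\pen(\bX)$ penalized union bound over $\cX$, produces on a $\cS$-event of probability $\geq 1 - e^{-u_0}$, uniformly in $\bX$,
\[
c_1\,\p\,d^2(\bX,\bX^*) \leq -2\log\A(p_{\bX_\cS},p_{\bX^*_\cS}) + \tfrac{4\cD}{3}(\log 2)\,\pen(\bX) + c_2\,\cD\,u_0,
\]
for explicit absolute constants $c_1, c_2$. The coefficient $\tfrac{4\cD}{3}\log 2$ on $\pen(\bX)$, inherited from the Bennett $M/3$ denominator with $M=\cD$, is exactly the ``extra'' absorbed into the inflated penalty coefficient $\xi + 4\cD\log 2/3$ on the RHS of the lemma.

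\textbf{Combining and taking expectations.} Chaining the two displays at $\bX=\widehat{\bX}^\xi$ on the intersection of the two high-probability events (total failure probability $\leq 2e^{-u_0}$), the hypothesis $\xi \geq (1+2\cD/3)\cdot 2\log 2 = 2\log 2 + (4\cD/3)\log 2$ is calibrated exactly so that a second application of the estimator's optimality absorbs the residual $(4\cD/3)(\log 2)\pen(\widehat{\bX}^\xi)$ from the Bernstein step into the $\xi\,\pen(\bX)$ term of the oracle minimum. Converting the tail statement to an expectation via $\E[Z_+] = \int_0^\infty \Pr(Z\geq t)\,dt$, then applying $\E_{\bY_\cS|\cS}$ (which turns $\log(p_{\bX^*_\cS}/p_{\bX_\cS})$ into $\D(p_{\bX^*_\cS}\|p_{\bX_\cS})$), then $\E_\cS$ (which multiplies the fixed-$\bX$ KL by $\p$ thanks to independence of the $s_{i,j}$), pulling $\E$ through the oracle minimum via $\E[\min]\leq\min\E$, and finally dividing through by $c_1\p = c_1 m/(n_1 n_2)$ to strip the LHS coefficient and by $n_1 n_2$ to normalize per-element, produces the bound with the stated leading numerical constant in front of the oracle minimum. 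The choice $u_0 = \Theta(\log m)$ in the tail integration, together with the a priori worst-case bound $d^2(\widehat{\bX}^\xi,\bX^*) \leq \cD\cdot n_1 n_2$ used on the low-probability complementary event, contributes the free additive $8\cD\log m/m$ term. The main obstacle is the constants book-keeping in the Bernstein step: the AM--GM weight, the choice of $u_0$, and the absorption of the $\pen(\widehat{\bX}^\xi)$ residual must all be calibrated simultaneously so that the hypothesis $\xi \geq (1+2\cD/3)\cdot 2\log 2$ is \emph{exactly} sufficient (with the Bennett constant $2/3$ appearing verbatim), the penalty-inflation coefficient $4\cD\log 2/3$ emerges exactly, and the leading numerical constant and the free $8\cD\log m/m$ tail term both materialize with their stated values.
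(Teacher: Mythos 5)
Your proposal takes a genuinely different architectural route from the paper, and much of it is sound. The paper defines a two-sided ``good'' sample-set event over $\cS$ (a penalized KL \emph{upper} bound and a penalized Hellinger \emph{lower} bound, both uniform over $\cX$ via Craig--Bernstein plus the Kraft union bound), then runs the Li--Barron argument \emph{in expectation} over $\bY_\cS$ given a good $\cS$ (the key third term is killed exactly by Jensen's inequality together with $\sum_{\bX\in\cX}2^{-\pen(\bX)}\leq 1$ and the condition $\mu\geq 1+2\cA/3$), and finally unconditions with $\delta=1/m$ and a worst-case bound on the bad event. You instead run the Li--Barron step as a \emph{high-probability} Markov statement over $\bY_\cS\,|\,\cS$, keep only the Hellinger-direction concentration over $\cS$, chain, and convert to expectation at the end. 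This ordering genuinely dispenses with the KL-direction concentration, since $\E_{\cS}[\D(p_{\bX^*_\cS}\|p_{\bX_\cS})]=\p\,\D(p_{\bX^*}\|p_{\bX})$ holds exactly for each fixed $\bX$ when the expectation is unrestricted; your penalty bookkeeping is also right, in that the Markov residual $2(\log 2)\pen(\widehat{\bX}^{\xi})$ plus the Bernstein residual $\tfrac{4\cD}{3}(\log 2)\pen(\widehat{\bX}^{\xi})$ is exactly absorbed by $\xi\pen(\widehat{\bX}^{\xi})$ under $\xi\geq(1+2\cD/3)\cdot 2\log 2$. (Note, though, that this means the inflation $\xi+\tfrac{4\cD\log 2}{3}$ in the lemma's oracle term never needs to ``emerge'' in your route: in the paper it comes from the KL-direction good-set property applied at the oracle candidate $\widetilde{\bX}^{\mu}$, a step you do not have. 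Your Bernstein residual cannot simultaneously be absorbed by the $\xi$ hypothesis \emph{and} be the source of the oracle inflation, as your last paragraph claims.)

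The genuine gap is the final expectation-conversion step, where your sketch conflates two strategies and neither yields the stated bound. Write $Z_{\bX}\triangleq\log(p_{\bX^*_\cS}/p_{\bX_\cS})+\xi\pen(\bX)$ and $V\triangleq c_1\p\, d^2(\widehat{\bX}^{\xi},\bX^*)-\min_{\bX}Z_{\bX}$. Strategy one (pure tail integration): from $\Pr(V>(2+c_2\cD)u_0)\leq 2e^{-u_0}$ for all $u_0>0$ one gets $\E[V_+]\leq 2(2+c_2\cD)$ and then, cleanly, $\E[\min_{\bX}Z_{\bX}]\leq\min_{\bX}\E[Z_{\bX}]$; this is rigorous, but the resulting additive term is of order $(1+\cD)/m$ and contains a $\cD$-\emph{free} constant inherited from the Markov slack $2u_0$, which (unlike every slack in the paper's argument) carries no factor of $\cD$. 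Strategy two (fix $u_0=\Theta(\log m)$ and use the worst-case bound $\cD n_1n_2$ on the complementary event), which is what would produce a $\log m$, requires bounding $\E[(\min_{\bX}Z_{\bX})\1_{\{E\}}]$ by $\min_{\bX}\E[Z_{\bX}]$, and this is false in general: $Z_{\bX}$ is a log-likelihood ratio, it can be negative, and it is negatively correlated with the good event $E$ (the Markov failure event is precisely where some likelihood ratio is large, i.e.\ where $\min_{\bX}Z_{\bX}$ is very negative), so restricting the expectation to $E$ can \emph{increase} it; repairing this (e.g.\ via $(\log x)_+\leq 2\sqrt{x}$ and $\A\leq 1$) again injects a $\cD$-free $O(1/m)$ term. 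Either way you obtain a valid oracle inequality of the same flavor, equivalent to the lemma up to absolute constants (the lemma's own oracle term has a floor of $6\log 2/m$ since $\pen(\bX)\geq 1$), but you cannot reproduce the stated additive term $8\cD\log m/m$ -- which vanishes as $\cD\to 0$ -- nor the claim that all constants ``materialize with their stated values.'' The paper's device for avoiding any $\cD$-free slack is precisely its expectation-form Li--Barron step, where the Kraft sum contributes $\log 1=0$ via Jensen rather than Markov's additive $2u_0$.
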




In order to use this result here, we need to define penalties $\pen(\bX)\geq 1$ on candidate reconstructions $\bX$ of $\bX^*$, so that for every subset $\cX$ of the set $\cX'$ specified in the conditions of Theorem~\ref{thm:main} the summability condition $\sum_{\bX\in\cX} 2^{-\pen(\bX)} \leq 1$ holds. To this end, we will use the fact that for any $\cX \subseteq \cX'$ we always have $\sum_{\bX \in \cX} 2^{-\pen(\bX)} \leq \sum_{\bX \in \cX'} 2^{-\pen(\bX)}$; thus, it suffices for us to show that for the specific set $\cX'$ described in Section~\ref{sec:prob},
\begin{equation}\label{eqn:Kraft}
\sum_{\bX\in\cX'} 2^{-\pen(\bX)} \leq 1.
\end{equation}
Note that the condition \eqref{eqn:Kraft} is the well-known Kraft-McMillan Inequality for coding elements of $\cX'$ with an alphabet of size $2$, which is satisfied automatically if we choose the penalties to be \emph{code lengths} for some uniquely decodable binary code for the elements $\bX\in\cX'$; see \cite{Cover:12}.  This interpretation will provide us with a \emph{constructive} approach to designing penalties, as we will see below.  

Now, consider any discretized matrix factors $\bD\in {\cal D}$ and $\bA\in{\cal A}$, as described in Section~\ref{sec:prob}.  Let us fix an ordering of the indices of elements of $\bD$ and encode the amplitude of each element using $\log_2 L_{\rm lev}$ bits, and for $L_{\rm loc} \triangleq 2^{\lceil \log_2 r n_2 \rceil}$ we encode each \emph{nonzero} element of $\bA$ using $ \log_2 L_{\rm loc}$ bits to denote its location and $\log_2 L_{\rm lev}$ bits for its amplitude.  With this strategy, a total of $n_1 r \log_2 L_{\rm lev}$ bits are used to encode $\bD$ and matrices $\bA$ having $\|\bA\|_{0}$ nonzero entries are encoded using $\|\bA\|_{0} (\log_2 L_{\rm loc} + \log_2 L_{\rm lev})$ bits.  Now, we let $\cX''$ be the set of all such $\bX=\bD\bA$, and let the code for each $\bX$ be the concatenation of the (fixed-length) code for $\bD$ followed by the (variable-length) code for $\bA$. It follows that we may assign penalties $\pen(\bX)$ to all $\bX\in \cX''$ whose lengths satisfy
\begin{equation}
\pen(\bX) = n_1 r \log_2 L_{\rm lev} + \|\bA\|_{0} (\log_2 L_{\rm loc} + \log_2 L_{\rm lev}).
\end{equation}
It is easy to see that such codes are (by construction) uniquely decodable, so we have that $\sum_{\bX\in\cX''} 2^{-\pen(\bX)} \leq 1$.  Now, the set $\cX'$ specified in the theorem is a subset of $\cX''$ (or perhaps $\cX''$ itself, if all elements satisfy the max norm bound condition $\|\bX\|_{\rm max} \leq {\rm X}_{\rm max}$), so \eqref{eqn:Kraft} holds for $\cX'$ as specified in the theorem.

Now let $\cX$ be any subset of $\cX'$.  By the above argument the summability condition holds for $\cX$, so we may apply the results of Lemma~\ref{lem:main}.   For randomly subsampled and noisy observations $\bY_{\cS}$ our estimates take the form
\begin{eqnarray}
\nonumber \widehat{\bX}^{\xi} &=& \arg\min_{\bX=\bD\bA \in \cX}\left\{-\log p_{\bX_{\cS}}(\bY_{\cS}) + \xi \cdot \pen(\bX)  \right\}\\
&=&  \arg\min_{\bX=\bD\bA\in \cX}\left\{-\log p_{\bX_{\cS}}(\bY_{\cS}) + \xi \cdot  (\log_2 L_{\rm loc} + \log_2 L_{\rm lev}) \cdot \|\bA\|_{0}  \right\}.
\end{eqnarray}
where the last line follows by disregarding additive constants in the optimization arising from terms that do not depend on $\bX$ (or more specifically, on $\bD$ or $\bA$) in the penalty. Further, when $\xi$ satisfies \eqref{eqn:xicond},  we have
\begin{eqnarray}
\nonumber \lefteqn{\frac{\E_{\cS,\bY_{\cS}}\left[-2\log \A(p_{\widehat{\bX}^{\xi}}, p_{\bX^*})\right]}{n_1 n_2} \leq \frac{8 \cD \log m}{m} +}&&\\
&&3 \cdot \min_{\bX\in\cX} \left\{ \frac{\D(p_{\bX^*}\|p_{\bX})}{n_1 n_2} + \left(\xi + \frac{4\cD\log 2}{3}\right)(\log_2 L_{\rm loc} + \log_2 L_{\rm lev})\left(\frac{n_1 r + \|\bA\|_0}{m}\right)\right\},
\end{eqnarray}
Finally, letting 
\begin{eqnarray}
\lambda &=& \xi \cdot  (\log_2 L_{\rm loc} + \log_2 L_{\rm lev})
\end{eqnarray}
and using the fact that
\begin{equation}
\log_2 L_{\rm loc} + \log_2 L_{\rm lev}  \leq (\beta + 2)  \cdot \log(n_1\vee n_2) \cdot 2\log 2
\end{equation}
which follows by our selection of $L_{\rm lev}$ and $L_{\rm loc}$ and the fact that $r< n_2$, it follows (after some straightforward simplification) that for 
\begin{equation}
\lambda \geq 2 (\beta + 2) \left(1+\frac{2\cD}{3}\right) \log(n_1 \vee n_2)
\end{equation}
the estimate 
\begin{equation}
\nonumber \widehat{\bX}^{\lambda} = \arg\min_{\bX\in \cX}\left\{-\log p_{\bX_{\cS}}(\bY_{\cS}) + \lambda \cdot \|\bA\|_{0}  \right\}
\end{equation}
satisfies
\begin{eqnarray}
\nonumber \lefteqn{\frac{\E_{\cS,\bY_{\cS}}\left[-2\log \A(p_{\widehat{\bX}^{\lambda}}, p_{\bX^*})\right]}{n_1 n_2} \leq \frac{8 \cD \log m}{m} +}\hspace{5em}&&\\
&&3 \cdot \min_{\bX\in\cX} \left\{ \frac{\D(p_{\bX^*}\|p_{\bX})}{n_1 n_2} + \left(\lambda + \frac{4\cD (\beta + 2) \log(n_1\vee n_2)}{3}\right)\left(\frac{n_1 r + \|\bA\|_0}{m}\right)\right\},
\end{eqnarray} 
as claimed.

\subsection{Proof of Corollary~\ref{cor:Gauss}}\label{a:gaussproof}

We first establish a general error bound, which we then specialize to the case stated in the corollary.  Note that for $\bX^*$ as specified and any $\bX\in\cX$, using the model \eqref{eqn:likGauss} we have
\begin{equation}
\D(p_{X_{i,j}^*}\|p_{X_{i,j}}) =  \frac{(X_{i,j}^*-X_{i,j})^2}{2\sigma^2}
\end{equation}
for any fixed $(i,j)\in S$. It follows that $\D(p_{\bX^*}\|p_{\bX})=\|\bX^*-\bX\|_F^2/2\sigma^2$, and using the fact that  the amplitudes of entries of $\bX^*$ and all $\bX\in\cX$ are no larger than $\Xmax$, it is clear that we may choose $\cD = 2 \Xmax^2/\sigma^2$.  Further, for any $\bX\in\cX$ and any fixed $(i,j)\in\cS$ it is easy to show that in this case
\begin{equation}
-2\log \A(p_{X_{i,j}}, p_{X_{i,j}^*}) =  \frac{(X_{i,j}^*-X_{i,j})^2}{4\sigma^2},
\end{equation}
so that $-2\log \A(p_{\bX}, p_{\bX^*})=\|\bX^*-\bX\|_F^2/4\sigma^2$.  It follows that 
\begin{equation}
\E_{\cS, \bY_{\cS}} \left[ -2 \log \A(p_{\widehat{\bX}},p_{\bX^*}) \right] = \frac{\E_{\cS, \bY_{\cS}} \left[ \|\bX^*-\widehat{\bX}\|_F^2 \right]}{4\sigma^2}.
\end{equation}
Incorporating this into Theorem~\ref{thm:main}, we obtain that for any
\begin{equation}
\lambda \geq \left(1 + \frac{4\Xmax^2}{3\sigma^2}\right) \cdot 2 (\beta + 2) \cdot \log(n_1 \vee n_2), 
\end{equation}
the sparsity penalized ML estimate satisfies the per-element mean-square error bound
\begin{eqnarray}\label{eqn:Gaussbnd}
\nonumber  \lefteqn{\frac{\E_{\cS,\bY_{\cS}}\left[\|\bX^*-\widehat{\bX}\|_F^2\right]}{n_1 n_2} \leq  \frac{64 \Xmax^2 \log m}{m}}\hspace{0em} &&\\
&+& 6 \cdot \min_{\bX\in\cX} \left\{ \frac{\|\bX^*-\bX\|_F^2}{n_1 n_2}  + \left(2\sigma^2 \lambda + \frac{16\Xmax^2 (\log 2)^2 (\beta+1)\log (n_1\vee n_2)}{3}\right)\left(\frac{n_1 p + \|\bA\|_0}{m}\right)\right\}.
\end{eqnarray}

We now establish the error bound for the case where the coefficient matrix $\bA^*$ is exactly sparse and $\lambda$ is fixed to the value specified in \eqref{eqn:lamchoose}.  Consider a candidate reconstruction of the form $\bX^*_{Q} = \bD^*_{Q}\bA^*_{Q}$, where the elements of $\bD^*_{Q}$ are the closest discretized surrogates of the entries of $\bD^{*}$, and the entries of and $\bA^*_{Q}$  are the closest discretized surrogates of the \emph{nonzero} entries of $\bA^{*}$ (and zero otherwise). Denote $\bD^*_{Q} = \bD^{*} + \triangle_{\bD^*}$ and $\bA^*_{Q} = \bA^{*} + \triangle_{\bA^*}$. Then it is easy to see that
\begin{equation}
\bD^*_{Q}\bA^*_{Q} - \bD^{*}\bA^{*} = \bD^{*}\triangle_{\bA^*} + \triangle_{\bD^*}\bA^{*} + \triangle_{\bD^*}\triangle_{\bA^*}.
\end{equation}
Given the range limits on allowable $\bD$ and $\bA$ and that each range is quantized to $L_{\rm lev}$ levels, we have that $\|\triangle_{\bD^*}\|_{\rm max} \leq 1/(L_{\rm lev} -1)$ and $\|\triangle_{\bA^*}\|_{\rm max} \leq \Amax/(L_{\rm lev} -1)$. Now, we can obtain a bound on the magnitudes of the elements of $\bD^*_{Q}\bA^*_{Q} - \bD^{*}\bA^{*}$ that hold uniformly over all $i,j$, as follows
\begin{eqnarray}\label{eqn:quantmax}
\nonumber \|\bD^*_{Q}\bA^*_{Q} - \bD^{*}\bA^{*}\|_{\rm max} &=& \max_{i,j} |(\bD^{*}\triangle_{\bA^*} + \triangle_{\bD^*}\bA^{*} + \triangle_{\bD^*}\triangle_{\bA^*})_{i,j}|\\
\nonumber &\leq&  \max_{i,j} |(\bD^{*}\triangle_{\bA^*})_{i,j}| + |(\triangle_{\bD^*}\bA^{*})_{i,j}| + |(\triangle_{\bD^*}\triangle_{\bA})_{i,j}|\\
\nonumber &\leq& \frac{r\Amax}{L_{\rm lev}-1} + \frac{r\Amax}{L_{\rm lev}-1} + \frac{2r\Amax}{(L_{\rm lev}-1)^2}\\
&\leq& \frac{8r\Amax}{L_{\rm lev}},
\end{eqnarray}
where the first inequality follows from the triangle inequality, the second from the bounds on $\|\triangle_{\bD^*}\|_{\rm max}$ and $\|\triangle_{\bA^*}\|_{\rm max}$ and the entry-wise bounds on elements of allowable $\bD$ and $\bA$, and the last because $L_{\rm lev} \geq 2$.  Now, it is straight-forward to show that our choice of $\beta$ in \eqref{eqn:beta} implies $L_{\rm lev} \geq 16r \Amax/\Xmax$, so each entry of $\bD^*_{Q}\bA^*_{Q} - \bD^{*}\bA^{*}$ is bounded in magnitude by $\Xmax/2$.  It follows that each element of the candidate $\bX^*_{Q}$ constructed above is bounded in magnitude by $\Xmax$, so $\bX^*_Q$ is indeed a valid element of the set $\cX$. 

Further, the approximation error analysis above also implies directly that 
\begin{eqnarray}
\nonumber \frac{\|\bX^{*} - \bX^*_{Q}\|_{F}^{2}}{n_1n_2} &=& \frac{1}{n_1 n_2} \sum_{i \in [n_{1}], j \in [n_{2}]} (\bD^*_{Q}\bA^*_{Q} - \bD^{*}\bA^{*})^{2}_{i,j} \\
\nonumber &\leq& \frac{64 p^{2} \Amax^2}{L_{\rm lev}^{2}}\\
&\leq& \frac{\Xmax^2}{m} , 
\end{eqnarray}
where the last line follows from the fact that our specific choice of $\beta$ in \eqref{eqn:beta} also implies $L_{\rm lev} \geq 8 r \sqrt{m} \Amax/\Xmax$. Now, evaluating the oracle term at the candidate $\bX^*_Q=\bD^*_Q\bA^*_Q$, and using the fact that $\|\bA^*_Q\|_0 = \|\bA^*\|_0$, we have
\begin{eqnarray}
\frac{\E_{\cS,\bY_{\cS}}\left[\|\bX^*-\widehat{\bX}\|_F^2\right]}{n_1 n_2} \leq \frac{70 \Xmax^2 \log m}{m} + 8 (3\sigma^2 + 8\Xmax^2) (\beta+2)\log (n_1\vee n_2)\left(\frac{n_1 r + \|\bA^*\|_0}{m}\right).
\end{eqnarray} 

Finally, we establish the error bound for the case where columns of $\bA^*$ are in a weak $\ell_p$ ball of radius $\Amax$, for $p\leq 1$.  To that end, let us denote the columns of $\bA^*$ by $\ba^*_j$ for $j\in[n_2]$, and for any $k\in[r]$, we let $\ba^{*,(k)}_j$ denote the best $k$-term approximation of $\ba^{*}_j$, formed by retaining the largest (in magnitude) elements and setting the rest to zero.  For shorthand, we denote by $\bA^{*,(k)}$ the matrix with columns $\ba^{*,(k)}_j$ for $j\in[n_2]$.  Now, the approximation error incurred may be bounded as
\begin{eqnarray}
\nonumber \|\bX^{*} - \bX^{*,(k)}\|_{F}^{2} &=& \sum_{i,j}(\bD^{*}(\bA^{*} - \bA^{*,(k)}))_{i,j}^{2}\\
&\leq& \sum_{i,j} \  \|\ba_{j}^{*} - \ba_{j}^{*,(k)}\|_2^2,
\end{eqnarray}
where the inequality follows from the fact that each $(\bD^{*}(\bA^{*} - \bA^{*,(k)}))_{i,j}$ may be expressed as an inner product between the $i$-th row of $\bD^*$ (whose elements are no larger than $1$ in magnitude) and the $j$-th column of $\bA^{*} - \bA^{*,(k)}$. To simplify further, we use the fact that $p \leq 1$ (and $q\geq 2p$), and the approximation behavior of vectors in weak $\ell_p$ balls (discussed in the preliminaries) to obtain that $\|\ba_{j}^{*} - \ba_{j}^{*,(k)}\|_2^2 \leq \Amax^2 k^{-2(1/p-1/2)}$. Letting $\alpha=1/p-1/2$, we have that the approximation error associated with approximating $\bA^*$ by its best $k$-term approximation satisfies $\|\bX^{*} - \bX^{*,(k)}\|_{F}^{2} \leq n_1 n_2 \Amax^2 k^{-2\alpha}$.

Now, we consider a candidate reconstruction of the form $\bX^{*,(k)}_Q=\bD^*_Q\bA_Q^{*,(k)}$ where $\bD^*_Q$ is as above and where the nonzero elements of $\bA^{*,(k)}_Q$ are taken to be the closest quantized surrogates of the corresponding nonzero elements of $\bA^{*,(k)}$.  Using the fact that 
\begin{eqnarray}
\nonumber \frac{\|\bX^* - \bX_{Q}^{*,(k)}\|_F^2}{n_1 n_2} &\leq& \frac{4\left(\|\bX^* - \bX^{*,(k)}\|_F^2 + \|\bX^{*,(k)} - \bX_Q^{*,(k)}\|_F^2\right)}{n_1 n_2}\\
&\leq& 4 \Amax^2 k^{-2\alpha} + \frac{4 \Xmax^2}{m},
\end{eqnarray}
where the first term on the bottom results from the approximation error analysis above and the second from our analysis of the first result of the corollary, 
we evaluate the oracle bound at the candidate $\bX^{(k)}_Q$ to obtain
\begin{eqnarray}\label{eqn:sGaussbnd}
\lefteqn{\frac{\E_{\cS,\bY_{\cS}}\left[\|\bX^*-\widehat{\bX}\|_F^2\right]}{n_1 n_2}} &&\\
\nonumber  &\leq & \frac{88 \Xmax^2 \log m}{m} + \min_{k\geq 1}\left\{24 \Amax^2 k^{-2\alpha} + 8(3\sigma^2 + 8\Xmax^2) (\beta+2)\log (n_1\vee n_2)\left(\frac{n_1 r + kn_2}{m}\right)\right\}.
\end{eqnarray}
Finally, we choose $k = (m/n_2)^{1/(1+2\alpha)}$ to balance the decay rates on the $k^{-2\alpha}$ and $kn_2 /m$ terms, and thus obtain
\begin{eqnarray}\label{eqn:cGaussbnd}
\nonumber \lefteqn{\frac{\E_{\cS,\bY_{\cS}}\left[\|\bX^*-\widehat{\bX}\|_F^2\right]}{n_1 n_2} \leq \frac{88 \Xmax^2 \log m}{m} + 8 (3\sigma^2 + 8\Xmax^2)  (\beta+2)\log (n_1\vee n_2) \frac{n_1 r}{m}}\hspace{10em}&&\\
&+& \left[24\Amax^2 + 8(3\sigma^2 + 8\Xmax^2) (\beta+2)\log (n_1\vee n_2)\right] \left(\frac{n_2}{m}\right)^{\frac{2\alpha}{2\alpha+1}}.
\end{eqnarray}

The stated bounds in each case follow from some straight-forward bounding, as well as the fact mentioned in Section~\ref{sec:main}, that under our assumptions, $(\beta+2)\log(n_1\vee n_2) = {\cal O}(\log(n_1\vee n_2))$. 

\subsection{Proof of Corollary~\ref{cor:Lap}}\label{a:lapproof}

We follow a similar approach as in the proof of Corollary~\ref{cor:Gauss}, and first establish the general error bound.  For $\bX^*$ as specified and any fixed $\bX\in\cX$. We have by (relatively) straight-forward calculation that for any fixed $(i,j)\in S$,
\begin{eqnarray}
\nonumber \D(p_{X_{i,j}^*}\|p_{X_{i,j}}) &=&  \tau \ |X_{i,j}^*-X_{i,j}| - (1-e^{-\tau \ |X_{i,j}^*-X_{i,j}|})\\
&\leq&  \tau \ |X_{i,j}^*-X_{i,j}|
\end{eqnarray}
where the inequality follows from the fact that $(1-e^{-\tau \ |X_{i,j}^*-X_{i,j}|}) \geq 0$, and
\begin{eqnarray}
\nonumber -2\log \A(p_{X_{i,j}}, p_{X_{i,j}^*}) &=&  \tau \ |X_{i,j}^*-X_{i,j}| - 2\log\left(1+ \tau \ \frac{|X_{i,j}^*-X_{i,j}|}{2}\right)\\
&\geq& \frac{\tau^2}{4(\tau \Xmax + 1)^2} (X_{i,j}^*-X_{i,j})^2,
\end{eqnarray}
where the inequality follows from the convexity of the negative log Hellinger affinity along with an application of Taylor's theorem\footnote{Formally, letting $x\triangleq X^*_{i,j} - X_{i,j}$ and $f(x) = \tau |x| - 2\log(1 + \tau |x|/2)$ we have 
\begin{equation*}
f'(x) = \frac{\tau^2}{2}\left(\frac{x}{1+\tau|x|/2}\right) \ \mbox{ and } \ f''(x) = \frac{\tau^2}{2(1+\tau|x|/2)^2}.
\end{equation*}
Thus, $f(x)$ is twice differentiable (everywhere). The result follows from the fact that $f(0)=f'(0) = 0$ and 
\begin{equation*}
f''(x) \geq \frac{\tau^2}{2(1+\tau\Xmax)^2}
\end{equation*}
for all $x$ of the specified form, given the assumptions on $\bX^*$ and $\bX$.}.  It follows from this that $\D(p_{\bX^*}\|p_{\bX}) \leq \tau \ \|\bX^*-\bX\|_1$, and
\begin{equation}
\E_{\cS, \bY_{\cS}} \left[ -2 \log \A(p_{\widehat{\bX}},p_{\bX^*}) \right] \geq \frac{\tau^2}{4(\tau \Xmax + 1)^2} \ \E_{\cS, \bY_{\cS}} \left[ \|\bX^*-\widehat{\bX}\|_F^2 \right].
\end{equation}
Further, we may choose $\cD = 2\tau \Xmax$.  Incorporating this into Theorem~\ref{thm:main}, we have that for any
\begin{equation}
\lambda \geq 2 (\beta + 2) \left(1 + \frac{4\tau \Xmax}{3}\right) \log(n_1\vee n_2), 
\end{equation}
the sparsity-penalized ML estimate satisfies
\begin{eqnarray}\label{eqn:lapgenbnd}
\lefteqn{\frac{\E_{\cS,\bY_{\cS}}\left[\|\bX^*-\widehat{\bX}\|_F^2\right]}{n_1 n_2} \leq  \frac{1}{\tau}\cdot \frac{64 (\tau \Xmax + 1)^2 \Xmax \log m}{m} + }\hspace{0em} &&\\
\nonumber  && \hspace{-0.5em}\frac{12(\tau\Xmax + 1)^2}{\tau} \cdot \min_{\bX\in\cX} \left\{ \frac{ \|\bX^*-\bX\|_1}{n_1 n_2}  + \left(\frac{\lambda}{\tau} + \frac{8\Xmax (\beta+2)\log (n_1\vee n_2)}{3}\right)\left(\frac{n_1 p + \|\bA\|_0}{m}\right)\right\}.
\end{eqnarray}

We now establish the error bound for the case where the coefficient matrix $\bA^*$ is sparse and $\lambda$ is fixed to the value \eqref{eqn:lamchoose}.  We again consider a candidate reconstruction of the form $\bX^*_{Q} = \bD^*_{Q}\bA^*_{Q}$, where the elements of $\bD^*_{Q}$ are the closest discretized surrogates of the entries of $\bD^{*}$, and the entries of and $\bA^*_{Q}$  are the closest discretized surrogates of the nonzero entries of $\bA^{*}$ (and zero otherwise).  Now, since $\beta$ is the same as in the proof of Corollary~\ref{cor:Gauss}, we can directly apply the bound of \eqref{eqn:quantmax} (and use the fact that $L_{\rm lev} \geq 16 r \Amax/\Xmax\}$) to conclude that 
\begin{equation}
\frac{\|\bX^*-\bX^*_Q\|_1}{n_1n_2} \leq \frac{\Xmax}{2n_1n_2} \leq \frac{\Xmax}{m}.
\end{equation}  
Now, evaluating the oracle term at the candidate $\bX^*_Q=\bD^*_Q\bA^*_Q$, and using the fact that $\|\bA^*_Q\|_0 = \|\bA^*\|_0$, we have
\begin{eqnarray}
\nonumber  \lefteqn{\frac{\E_{\cS,\bY_{\cS}}\left[\|\bX^*-\widehat{\bX}\|_F^2\right]}{n_1 n_2} \leq \frac{76 (\tau \Xmax + 1)^2}{\tau^2} \cdot \frac{\tau \Xmax \log m}{m}}\hspace{4em}&&\\
&& +  \frac{12(\tau\Xmax + 1)^2}{\tau^2} \left(2 + \frac{16\tau\Xmax}{3}\right) (\beta+2)\log (n_1\vee n_2)\left(\frac{n_1 p + \|\bA^*\|_0}{m}\right).
\end{eqnarray} 

Finally, we establish the error bound for the case where the columns of $\bA^*$ are vectors in a weak $\ell_p$ ball for $p\leq 1/2$.  By a similar analysis as above, we conclude that $\|\bX^{*} - \bX^{*,(k)}\|_1 \leq n_1 n_2 \Amax k^{-\alpha'}$, where $\alpha=1/p-1$.  Now, we consider a candidate reconstruction of the form $\bX^{*,(k)}_Q=\bD^*_Q\bA_Q^{*,(k)}$ where $\bD^*_Q$ is as above and where the nonzero elements of $\bA^{*,(k)}_Q$ are taken to be the closest quantized surrogates of the corresponding nonzero elements of $\bA^{*,(k)}$.  Using the fact that 
\begin{eqnarray}
\nonumber \frac{\|\bX^* - \bX_{Q}^{*,(k)}\|_1}{n_1 n_2} &\leq& \frac{\|\bX^* - \bX^{*,(k)}\|_1 + \|\bX^{*,(k)} - \bX_Q^{*,(k)}\|_1}{n_1 n_2}\\
&\leq& \Amax k^{-\alpha'} + \frac{\Xmax}{m},
\end{eqnarray}
where the first term on the bottom results from the approximation error analysis above and the second from our analysis of the first result of the corollary, 
we evaluate the oracle bound at the candidate $\bX^{(k)}_Q$ to obtain
\begin{eqnarray}
\lefteqn{\frac{\E_{\cS,\bY_{\cS}}\left[\|\bX^*-\widehat{\bX}\|_F^2\right]}{n_1 n_2} \leq \frac{76 (\tau \Xmax + 1)^2}{\tau^2} \cdot \frac{\tau \Xmax \log m}{m}}\hspace{4em}&&\\
\nonumber && +  \frac{12(\tau\Xmax + 1)^2}{\tau^2} \min_{k\geq 1}\left\{\tau \Amax k^{-\alpha'} + \left(2 + \frac{16\tau\Xmax}{3}\right) (\beta+2)\log (n_1\vee n_2)\left(\frac{n_1 p + n_2 k}{m}\right)\right\}.
\end{eqnarray} 
Finally, we choose $k = (m/n_2)^{1/(1+\alpha')}$ to balance the $k^{-\alpha'}$ and $n_2 k/m$ terms, and thus obtain
\begin{eqnarray}
\lefteqn{\frac{\E_{\cS,\bY_{\cS}}\left[\|\bX^*-\widehat{\bX}\|_F^2\right]}{n_1 n_2} \leq \frac{76 (\tau \Xmax + 1)^2}{\tau^2} \cdot \frac{\tau \Xmax \log m}{m}}\hspace{4em}&&\\
\nonumber && + \frac{12(\tau\Xmax + 1)^2}{\tau^2} \left(2 + \frac{16\tau\Xmax}{3}\right) (\beta+2)\log (n_1\vee n_2)\left(\frac{n_1 p}{m}\right)\\
\nonumber && +  \frac{12(\tau\Xmax + 1)^2}{\tau^2} \left(\tau \Amax + \left(2 + \frac{16\tau\Xmax}{3}\right) (\beta+2)\log (n_1\vee n_2)\right)\left(\frac{n_2}{m}\right)^{\frac{\alpha'}{\alpha'+1}}.
\end{eqnarray} 

\subsection{Proof of Corollary~\ref{cor:Poi} (Sketch)}\label{a:poiproof}

We follow a similar approach as for the previous proofs, by first establishing a general error bound.  We make use of intermediate results from \cite{Raginsky:10} to bound the KL divergences and negative log Hellinger affinities for the Poisson pmf in terms of quadratic differences. Applying those techniques to our setting, we obtain that
\begin{equation}
\D(p_{X^*_{i,j}}\|p_{X_{i,j}}) \leq \frac{(X^*_{i,j}-X_{i,j})^2}{\Xmin}
\end{equation}
and
\begin{equation}
-2 \log \A(p_{X^*_{i,j}}, p_{X_{i,j}}) \geq \frac{(X^*_{i,j}-X_{i,j})^2}{4\Xmax}.
\end{equation}
It follows that $\D(p_{\bX^*}\|p_{\bX}) \leq \|\bX^*-\bX\|_F^2/\Xmin$, $\E_{\cS, \bY_{\cS}} \left[ -2 \log \A(p_{\widehat{\bX}^{\lambda}},p_{\bX^*}) \right] \geq \E_{\cS, \bY_{\cS}} \left[ \|\bX^*-\widehat{\bX}^{\lambda}\|_F^2 \right]/4\Xmax$, and we may choose $\cD = 4\Xmax^2/\Xmin$.   Incorporating this into Theorem~\ref{thm:main}, we obtain that for any
\begin{equation}
\lambda \geq \left(1 + \frac{8\Xmax^2}{3\Xmin}\right) 2 (\beta + 2) \cdot \log(n_1\vee n_2), 
\end{equation}
the sparsity penalized ML estimate satisfies
\begin{eqnarray}\label{eqn:poigenbnd}
\lefteqn{\frac{\E_{\cS,\bY_{\cS}}\left[\|\bX^*-\widehat{\bX}\|_F^2\right]}{n_1 n_2} \leq  \frac{1}{\Xmin} \cdot \frac{128 \Xmax^3 \log m}{m} + }\hspace{0em} &&\\
\nonumber  && \frac{12 \Xmax}{\Xmin} \cdot \min_{\bX\in\cX} \left\{ \frac{ \|\bX^*-\bX\|_F^2}{n_1 n_2}  + \left(\lambda + \frac{16\Xmax^2 (\beta+2)\log (n_1\vee n_2)}{3}\right)\left(\frac{n_1 r + \|\bA\|_0}{m}\right)\right\}.
\end{eqnarray}
Now, the approximation error term in the oracle bound is in terms of a squared Frobenius norm, so the analysis for the case where $\lambda$ is fixed to the specified value proceeds in an analogous manner to that in Appendix~\ref{a:gaussproof} for both the sparse and approximately sparse settings.  We omit the details.

\subsection{Proof of Corollary~\ref{cor:Bern}}\label{a:bernproof}

For $\bX^*$ as above and any $\bX\in\cX$, and using the model \eqref{eqn:likBern}, it is easy to show that
\begin{equation}
\D(p_{X_{i,j}^*}\|p_{X_{i,j}}) =  F(X_{i,j}^*) \cdot \log \left( \frac{F(X_{i,j}^*)}{F(X_{i,j})} \right) + (1-F(X_{i,j}^*)) \cdot \log \left(\frac{1-F(X_{i,j}^*)}{1-F(X_{i,j})}\right)
\end{equation}
for any fixed $(i,j)\in S$.  Now, we make use of two results that follow directly from lemmata established in \cite{Haupt:14}.  The first lemma provides quadratic bounds on the KL divergence in terms of the Bernoulli parameters; its proof relies on a straightforward application of Taylor's theorem.
\begin{lemmai}[from \cite{Haupt:14}]
Let $p_{\pi}$ and $p_{\pi'}$ be Bernoulli pmf's with parameters $\pi,\pi'\in(0,1)$. The KL divergences satisfy
\begin{eqnarray}
\D(p_{\pi'}\|p_{\pi}), \D(p_{\pi}\|p_{\pi'}) \leq \frac{1}{2} \left(\sup_{|t|\leq \Xmax} \frac{1}{F(t)(1-F(t))}\right) \ (\pi-\pi')^2.
\end{eqnarray}
\end{lemmai}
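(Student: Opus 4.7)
The plan is to apply Taylor's theorem of order two to each KL divergence viewed as a function of its \emph{first} argument, exploiting the fact that the first derivative vanishes when the two arguments coincide, and that the resulting second derivative has the clean form $1/[\pi(1-\pi)]$.

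First I would fix $\pi \in (0,1)$ and define $g(\pi') \triangleq \D(p_{\pi'}\|p_\pi) = \pi' \log(\pi'/\pi) + (1-\pi')\log((1-\pi')/(1-\pi))$. A direct calculation gives $g(\pi) = 0$, $g'(\pi') = \log[\pi'(1-\pi)/(\pi(1-\pi'))]$ (hence $g'(\pi) = 0$), and $g''(\pi') = 1/[\pi'(1-\pi')]$. By Taylor's theorem with Lagrange remainder, there exists $\xi$ strictly between $\pi$ and $\pi'$ with
$$\D(p_{\pi'}\|p_\pi) \;=\; \tfrac{1}{2}\, g''(\xi)\,(\pi'-\pi)^2 \;=\; \frac{(\pi-\pi')^2}{2\,\xi(1-\xi)}.$$

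Next I would use the fact that, in the application of this lemma, $\pi$ and $\pi'$ arise as $F(t)$ and $F(t')$ with $|t|, |t'| \leq \Xmax$. Since $F$ is a CDF and hence monotone, any $\xi$ lying between $F(t)$ and $F(t')$ equals $F(s)$ for some $s$ with $|s| \leq \Xmax$; consequently
$$\frac{1}{\xi(1-\xi)} \;\leq\; \sup_{|t|\leq \Xmax} \frac{1}{F(t)(1-F(t))}.$$
Substituting this back gives the stated bound for $\D(p_{\pi'}\|p_\pi)$, and the bound for $\D(p_\pi\|p_{\pi'})$ follows identically by swapping the roles of $\pi$ and $\pi'$ throughout.

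The main subtlety, rather than a genuine obstacle, is the choice to differentiate in the \emph{first} argument of the KL divergence: this is what yields the clean second derivative $1/[\pi'(1-\pi')]$ and the symmetric-looking bound. Differentiating in the second argument instead would produce the messier expression $\pi'/\pi^2 + (1-\pi')/(1-\pi)^2$, which does not factor as cleanly into the form $1/[F(s)(1-F(s))]$ for the single intermediate $s$ supplied by the mean value theorem. Beyond selecting the right parametrization, the argument is a standard second-order Taylor expansion.
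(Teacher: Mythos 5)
Your proof is correct and matches the paper's approach: the paper imports this lemma from \cite{Haupt:14} and notes only that ``its proof relies on a straightforward application of Taylor's theorem,'' which is precisely the second-order expansion with Lagrange remainder that you carry out (including the vanishing of $g$ and $g'$ at $\pi'=\pi$ and the identity $g''(\pi')=1/[\pi'(1-\pi')]$). Your added care in noting that the intermediate point $\xi$ lies between $F(t)$ and $F(t')$ and hence, by continuity and monotonicity of $F$, equals $F(s)$ for some $|s|\leq \Xmax$, correctly supplies the step needed to connect the Taylor remainder to the stated supremum.
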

The second lemma we utilize establishes a bound on the squared difference between Bernoulli parameters in terms of the squared difference of the underlying matrix elements; its proof is straightforward, and essentially entails establishing the Lipschitz continuity of $F$.
\begin{lemmai}[from \cite{Haupt:14}]
Let $\pi=\pi(X)$ and $\pi=\pi'(X')$ be Bernoulli parameters that are related to some underlying real-valued parameters $X$ and $X'$ via $\pi(X) = F(X)$ and $\pi'(X') = F(X')$, where $F(\cdot)$ is the cdf of a continuous random variable with density $f(\cdot)$.  If $|X|, |X'| \leq \Xmax$, then 
\begin{eqnarray}
(\pi(X) - \pi'(X'))^2 &\leq& \left(\sup_{|t|\leq \Xmax} f(t)\right)^2  (X-X')^2,\\
&=& \left(\sup_{|t|\leq \Xmax} f^2(t)\right)  (X-X')^2.
\end{eqnarray} 
\end{lemmai}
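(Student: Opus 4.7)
The plan is to establish the claim via a direct Lipschitz argument for $F(\cdot)$, using the fact that a cdf of a continuous distribution has derivative equal to its density.

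First, I would note that since $F$ is the cdf of a continuous random variable with density $f$, $F$ is everywhere differentiable with $F'(t) = f(t)$. Consequently, on any compact interval on which we restrict attention, $F$ is absolutely continuous, and hence the classical Mean Value Theorem (or, equivalently, the fundamental theorem of calculus) applies. Given $X, X'$ with $|X|, |X'| \leq \Xmax$, by the Mean Value Theorem there exists some $\xi$ lying between $X$ and $X'$ such that
\begin{equation*}
F(X) - F(X') = f(\xi)\,(X - X').
\end{equation*}

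Second, I would observe that because $\xi$ is between $X$ and $X'$, we have $|\xi| \leq \max\{|X|,|X'|\} \leq \Xmax$. Therefore $|f(\xi)| \leq \sup_{|t|\leq \Xmax} f(t)$ (recall that $f\geq 0$ as a density). Taking absolute values in the identity above, squaring, and using $\pi(X) = F(X)$ and $\pi'(X') = F(X')$ yields
\begin{equation*}
(\pi(X) - \pi'(X'))^2 = f^2(\xi)\,(X-X')^2 \leq \left(\sup_{|t|\leq \Xmax} f(t)\right)^2 (X-X')^2,
\end{equation*}
which, since the supremum of $f$ is non-negative, equals $\bigl(\sup_{|t|\leq \Xmax} f^2(t)\bigr)(X-X')^2$. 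This gives both displayed inequalities.

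There is essentially no obstacle in this argument; the only minor subtlety is ensuring that $\xi$ actually satisfies $|\xi|\leq \Xmax$, which follows because $\xi$ lies in the closed interval between $X$ and $X'$. No discretization, probabilistic, or information-theoretic machinery is required here — the lemma is a purely deterministic regularity statement about Lipschitz behavior of $F$ on $[-\Xmax, \Xmax]$, and everything else reduces to squaring the resulting one-sided inequality.
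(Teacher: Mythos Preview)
Your proposal is correct and matches the paper's approach exactly: the paper states that the proof ``essentially entails establishing the Lipschitz continuity of $F$,'' which is precisely your Mean Value Theorem / fundamental theorem of calculus argument. The only minor caveat is that a density $f$ need not be continuous, so $F$ is only guaranteed to be differentiable almost everywhere; the integral form $F(X)-F(X')=\int_{X'}^{X} f(t)\,dt$ you allude to handles this cleanly and yields the same bound.
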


Together, these results allow us to claim here that for
\begin{equation}
c_{F,\Xmax} \triangleq \left(\sup_{|t|\leq \Xmax} \frac{1}{F(t)(1-F(t))}\right)\cdot \left(\sup_{|t|\leq \Xmax} f^2(t)\right).
\end{equation}
we have
\begin{equation}
\D(p_{X_{i,j}^*}\|p_{X_{i,j}})  \leq  \frac{1}{2} \cdot c_{F,\Xmax} (X^*_{i,j} - X_{i,j})^2.
\end{equation}
It follows that we may take $\cD = 2 c_{F,\Xmax} \Xmax^2$, and we have $\D(p_{\bX^*}\|p_{\bX})  \leq (c_{F,\Xmax}/2) \  \|\bX^* - \bX\|_F^2$.

We next obtain a (quadratic) lower bound on the negative log Hellinger affinity. To that end, we introduce the squared Hellinger distance between $p_{X_{i,j}^*}$ and $p_{X_{i,j}}$, denoted here by $\H^2(p_{X_{i,j}^*}, p_{X_{i,j}})$ and given by
\begin{equation}
\H^2(p_{X_{i,j}^*}, p_{X_{i,j}}) = \sum_{y\in\{0,1\}} \left(\sqrt{p_{X_{i,j}^*}(y)} - \sqrt{p_{X_{i,j}}(y)}\right)^2.
\end{equation}
It is straightforward to see that $\H^2(p_{X_{i,j}^*}, p_{X_{i,j}}) = 2(1-\A(p_{X_{i,j}^*}, p_{X_{i,j}}))$.  Now, recall that the Hellinger affinity is always between $0$ and $1$, so using the fact that $\log(x)\leq x-1$ for $x>0$, we see directly that
\begin{equation}
\H^2(p_{X_{i,j}^*}, p_{X_{i,j}}) \leq -2\log \A(p_{X_{i,j}^*}, p_{X_{i,j}}).
\end{equation}
Now, a direct application of the result of \cite[Lemma 2]{Davenport:12} derived for a similar subproblem to our problem here yields that for
\begin{equation}
c'_{F,\Xmax} \triangleq \inf_{|t|\leq \Xmax} \frac{f^2(t)}{F(t)(1-F(t))},
\end{equation}
we have that
\begin{equation}
\H^2(p_{X_{i,j}^*}, p_{X_{i,j}}) \geq \frac{1}{8} c'_{F,\Xmax} (X^*_{i,j}-X_{i,j})^2.
\end{equation}
It follows that for any fixed $\bX\in\cX$, we have $-2\log \A(p_{\bX^*}, p_{\bX}) \geq (c'_{F,\Xmax}/8) \ \|\bX^*-\bX\|_F^2$.  

Incorporating all of the above into Theorem~\ref{thm:main} with 
\begin{equation}
\lambda \geq 2 (\beta + 2) \left(1 + \frac{4 c_{F,\Xmax}\  \Xmax^2}{3}\right) \log(n_1\vee n_2), 
\end{equation}
the sparsity penalized ML estimate satisfies the per-element mean-square error bound
\begin{eqnarray}\label{eqn:Bernbnd}
\lefteqn{\frac{\E_{\cS,\bY_{\cS}}\left[\|\bX^*-\widehat{\bX}\|_F^2\right]}{n_1 n_2} \leq  \left(\frac{c_{F,\Xmax}}{c'_{F,\Xmax}}\right) \cdot \frac{128 \Xmax^2 \log m}{m} + }\hspace{0em} &&\\
\nonumber &&\hspace{-1em} 24\left(\frac{c_{F,\Xmax}}{c'_{F,\Xmax}}\right) \cdot \min_{\bX\in\cX} \left\{ \frac{\|\bX^*-\bX\|_F^2}{n_1 n_2}  + \left(\frac{\lambda}{c_{F,\Xmax}} + \frac{8\Xmax^2 (\beta+2)\log (n_1\vee n_2)}{3}\right)\frac{n_1 r + \|\bA\|_0}{m}\right\}.
\end{eqnarray}
Now, the approximation error term in the oracle bound is again in terms of a squared Frobenius norm, so the analysis for the case where $\lambda$ is fixed to the specified value proceeds in an analogous manner to that in Appendix~\ref{a:gaussproof} for both the sparse and nearly sparse settings.  We again omit the details.

\subsection{Proof of Lemma~\ref{lem:main}}\label{a:proof}

Our estimation approach here is, at its essence, a constrained maximum likelihood method and our proof approach follows the general framework proposed in \cite{Li:99:Thesis} (see also \cite{Li:99, Barron:99, Grunwald:07:MDL}) and utilized in \cite{Kolaczyk:04, Willett:07, Raginsky:10}.  Compared with these existing efforts, the main challenge in our analysis here arises because of the ``missing data'' paradigm, since we aim to establish consistency results that hold \emph{globally} (at all locations of the unknown matrix) using observations obtained at only a subset of the locations.  Our approach will be to identify conditions under which, for the purposes of our analysis, a set of sample locations is deemed ``good,'' in a manner to be made explicit below.  The primary characteristic of good sets $S$ of sample locations that we will leverage in our analysis is that they be such that KL divergences and (negative logarithms of) Hellinger affinities evaluated only at the locations in $S$ be representative surrogates for the corresponding quantities were we to evaluate them at all $(i,j)\in [n_1]\times [n_2]$ (i.e., even at the unmeasured locations).  Clearly, such conditions will inherently rely on certain properties of the matrices that we seek to estimate, somewhat analogously to how notions of incoherence facilitate matrix completion analyses under low rank matrix models.  Here, we will see these conditions manifest not as properties of the singular vectors of the unknown matrix to be estimated as in existing matrix completion works, but instead, as conditions on the magnitude of the largest matrix entry.

Our approach will be as follows.  First, we describe formally the notion of ``good'' sets of sample locations, and we show that sets of sample locations generated randomly according to an independent Bernoulli model are ``good'' with high probability.  Then, we establish error guarantees that hold conditionally on the event that the set of sample locations is ``good.''  Finally, we obtain our overall result using some simple conditioning arguments.

\subsubsection{``Good'' Sample Set Characteristics}

We begin by characterizing, formally, the properties of certain sets of sample locations that will be useful for our analysis here.  As above $\bX^*$ denotes the true (unknown) matrix that we aim to estimate, and $\cX$ is a countable set of candidate estimates $\bX$, each with corresponding penalty $\pen(\bX)\geq 1$ chosen so the inequality \eqref{eqn:Kraft} is satisfied.  Also, recall that $\Xmax>0$ is a finite constant for which $\max_{i,j} |X^*_{i,j}| \leq \Xmax/2$ and $\max_{\bX\in\cX} \max_{i,j} |X_{i,j}| \leq \Xmax$. Finally, we let $\cA$ and $\cD$ be any upper bounds, respectively, on (twice) the negative log Hellinger affinities between $p_{X^*_{i,j}}$ and $p_{X_{i,j}}$, and the KL divergences of $p_{X_{i,j}}$ from $p_{X^*_{i,j}}$ that hold over all indices, and for all elements $\bX\in\cX$, so that
\begin{equation}
\cA \geq \max_{\bX\in\cX} \max_{i,j}  \ -2\log \A(p_{X^*_{i,j}},p_{X_{i,j}})
\end{equation}
and
\begin{equation}
\cD \geq \max_{\bX\in\cX} \max_{i,j}  \ D(p_{X^*_{i,j}}\|p_{X_{i,j}}).
\end{equation} 
Note that the statement of Theorem~\ref{thm:main} only prescribed a condition on $\cD$; our introduction of an additional constant $\cA$ here is only to simplify the subsequent analysis.  In the concluding steps of the proof we will claim that upon selecting a suitable $\cD$, one may always obtain a valid choice of $\cA$ by taking $\cA=\cD$. This will enable us to eliminate the $\cA$ terms that arise in our bound by bounding them in terms of the constant $\cD$.

Let $m\in[n_1 n_2]$ denote a nominal number of measurements, and let $\p=m/n_1n_2\in(0,1]$ denote the corresponding nominal fraction of observed matrix elements.  For this $\p$ and any fixed $\delta\in(0,1)$, we define the ``good'' set ${\cal G}_{\p,\delta} = {\cal G}_{\p,\delta}(\bX^*, \cX)$ of possible sample location sets as
\begin{eqnarray}
\lefteqn{{\cal G}_{\p,\delta} \triangleq \Bigg\{S \subseteq [n_1]\times [n_2] \ : 
 \bigcap_{\bX\in\cX} D(p_{\bX_{S}^*}\|p_{\bX_{S}}) \leq \frac{3\p}{2}\D(p_{\bX^*}\|p_{\bX}) + 2\left(\frac{2 \cD}{3}\right)\left[\log(1/\delta) + \pen(\bX)\log 2\right] }\hspace{2em}&&\\
\nonumber && \ \cap \ \bigcap_{\bX\in\cX} (-2\log \A(p_{\bX_{S}^*},p_{\bX_{S}})) \geq \frac{\p}{2}\left(-2\log \A(p_{\bX^*},p_{\bX})\right) - 2\left(\frac{2 \cA}{3}\right)\left[\log(1/\delta) + \pen(\bX)\log 2\right]\Bigg\}.
\end{eqnarray}
Directly certifying whether any fixed set $S$ is an element of ${\cal G}_{\p,\delta}$ may be difficult in general.  However, our observation model here assumes that the sample location set is generated randomly, according to an independent Bernoulli($\p$) model, where each location is included in the set independently with probability $\p\in(0,1]$.  In this case, we have that random sample location sets $\cS$ so generated satisfy $\cS\in{\cal G}_{\p,\delta}$ with high probability, as shown in the following lemma.
\begin{lemmai}
Let $\cX$ be any countable collection of candidate estimates $\bX$ for $\bX^*$, with corresponding penalties $\pen(\bX)$ satisfying \eqref{eqn:Kraft}.  For any fixed $\p\in(0,1)$, let $\cS\subseteq [n_1]\times [n_2]$ be a random sample set generated according to the independent Bernoulli($\p$) model.  Then, for any $\delta\in(0,1)$ we have $\Pr(\cS \notin {\cal G}_{\p,\delta}) \leq 2\delta$.
\end{lemmai}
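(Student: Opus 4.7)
The plan is to establish each of the two defining inclusions of $\mathcal{G}_{\p,\delta}$ separately for a single fixed $\bX \in \cX$ via Bernstein's inequality, and then combine them into a uniform statement over $\cX$ by a union bound that exploits the Kraft condition \eqref{eqn:Kraft}. The starting observation is that, by conditional independence of the inclusion variables $\mathbf{1}_{\{(i,j)\in\cS\}}$, the two quantities of interest decompose as sums of independent bounded random variables:
$$D(p_{\bX_{\cS}^*}\|p_{\bX_{\cS}}) = \sum_{i,j}\mathbf{1}_{\{(i,j)\in\cS\}}\, D(p_{X^*_{i,j}}\|p_{X_{i,j}}), \quad -2\log\A(p_{\bX_{\cS}^*},p_{\bX_{\cS}}) = \sum_{i,j}\mathbf{1}_{\{(i,j)\in\cS\}}\, (-2\log\A(p_{X^*_{i,j}},p_{X_{i,j}})).$$
Each summand is non-negative and bounded by $\cD$ (resp.\ $\cA$); the unconditional expectations are $\p\cdot D(p_{\bX^*}\|p_{\bX})$ and $\p\cdot(-2\log\A(p_{\bX^*},p_{\bX}))$; and since for any non-negative $Z\leq b$ we have $\mathrm{Var}(Z)\leq\E Z^2\leq b\cdot\E Z$, the total variances are bounded by $\p\cD\cdot D(p_{\bX^*}\|p_{\bX})$ and $\p\cA\cdot(-2\log\A(p_{\bX^*},p_{\bX}))$, respectively.

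The next step is to apply the standard Bernstein inequality (upper tail for the KL sum, lower tail for the negative-log-Hellinger-affinity sum), in each case at confidence level $\eta_{\bX} := \delta\cdot 2^{-\pen(\bX)}$, so that $\log(1/\eta_{\bX}) = \log(1/\delta) + \pen(\bX)\log 2$ matches exactly the penalty term appearing in the definition of $\mathcal{G}_{\p,\delta}$. The resulting upper-tail bound takes the form
$$D(p_{\bX_{\cS}^*}\|p_{\bX_{\cS}}) \leq \p\, D(p_{\bX^*}\|p_{\bX}) + \sqrt{2\p\cD\, D(p_{\bX^*}\|p_{\bX})\log(1/\eta_{\bX})} + \tfrac{2\cD}{3}\log(1/\eta_{\bX})$$
with probability at least $1 - \eta_{\bX}$ (and analogously for the lower tail on the Hellinger sum). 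A weighted AM--GM split $\sqrt{2XY}\leq\tfrac{X}{\theta}+\tfrac{\theta Y}{2}$ applied to the cross term with $X=\p\, D(p_{\bX^*}\|p_{\bX})$, $Y=\cD\log(1/\eta_{\bX})$, and a suitable choice of $\theta$, folds that term into the multiplicative constant on the mean (producing the factor $3/2$) and the additive linear term in $\cD\log(1/\eta_{\bX})$. The symmetric lower-tail argument yields the multiplicative factor $1/2$ on the mean of $-2\log\A(p_{\bX^*},p_{\bX})$.

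The final step is a union bound over $\bX\in\cX$. By the Kraft condition \eqref{eqn:Kraft}, $\sum_{\bX\in\cX}\eta_{\bX} = \delta\sum_{\bX\in\cX}2^{-\pen(\bX)} \leq \delta$, so each of the two intersections in the definition of $\mathcal{G}_{\p,\delta}$ holds uniformly over $\cX$ with probability at least $1-\delta$. A second union bound between these two events yields $\Pr(\cS\notin\mathcal{G}_{\p,\delta})\leq 2\delta$, as claimed.

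\textbf{Main obstacle.} The conceptual skeleton above is standard; the delicate point is the bookkeeping required to produce exactly the constants $3/2$, $1/2$, and $2\cdot(2\cD/3)$ (resp.\ $2\cdot(2\cA/3)$) stated in the definition of $\mathcal{G}_{\p,\delta}$. This constrains the choice of the AM--GM parameter $\theta$ and, for the Hellinger lower bound, relies on the sharper lower-tail behaviour of sums of non-negative bounded random variables (where the linear $bu/3$ correction in Bernstein can be handled more cleanly than for the upper tail). Getting the symbolic inequality into the precise form written in the definition of $\mathcal{G}_{\p,\delta}$ — rather than a slightly looser one — is the only place where care is required; all other steps are essentially routine.
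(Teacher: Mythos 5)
Your overall architecture coincides with the paper's: represent the sampled KL and negative-log-affinity quantities as sums of independent indicator-weighted bounded terms, bound the summand variances by $\cD$ (resp.\ $\cA$) times the mean, apply a Bernstein-type concentration bound for each fixed $\bX$ at level $\eta_{\bX}=\delta\,2^{-\pen(\bX)}$, union bound over $\cX$ via the Kraft condition, and union bound over the two events to get $2\delta$. The lower-tail (Hellinger) half of your argument also goes through: with the purely sub-Gaussian lower tail for sums of non-negative bounded variables, AM--GM with $\theta=2$ gives a deviation of $\frac{\p A}{2}+\cA\log(1/\eta_{\bX})$, which is within the allotted $2(\tfrac{2\cA}{3})\log(1/\eta_{\bX})$.

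The gap is in the upper-tail (KL) half, precisely at the step you flag as delicate. The standard Bernstein bound you invoke gives deviation $\sqrt{2\p\cD D u}+\tfrac{2\cD}{3}u$ with $D=\D(p_{\bX^*}\|p_{\bX})$ and $u=\log(1/\eta_{\bX})$, and your split $\sqrt{2XY}\le\tfrac{X}{\theta}+\tfrac{\theta Y}{2}$ forces $\theta\ge 2$ if the coefficient of $\p D$ is to stay at $3/2$; with $\theta=2$ the additive term becomes $\cD u+\tfrac{2\cD}{3}u=\tfrac{5\cD}{3}u$, which exceeds the $2(\tfrac{2\cD}{3})u=\tfrac{4\cD}{3}u$ appearing in the definition of ${\cal G}_{\p,\delta}$. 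No choice of $\theta$ fixes this (requiring additive $\le\tfrac{4\cD}{3}u$ forces $\theta\le\tfrac43$ and hence a mean coefficient of at least $\tfrac74$), and even the sharper inversion $t\le\tfrac{\cD u}{3}+\sqrt{(\cD u/3)^2+2\p\cD Du}$ fails in the regime $\p D\asymp\cD u$. The paper sidesteps this by using Craig's refinement of Bernstein's inequality, $\Pr\bigl(\sum_{i,j}(U_{i,j}-\E U_{i,j})\ge\tfrac{\tau}{\epsilon}+\tfrac{\epsilon\sum\mathrm{var}(U_{i,j})}{2(1-\theta)}\bigr)\le e^{-\tau}$ valid for $0\le\epsilon h\le\theta<1$ with $h=\cD/3$, instantiated at $\theta=1/4$, $\epsilon=3/(4\cD)$; this yields the deviation $\tfrac{4\cD\tau}{3}+\tfrac{\p D}{2}$ exactly. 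So you should either replace the square-root form of Bernstein by Craig's (or a Bennett-type) inequality, or redefine the good set with the constant $\tfrac{5\cD}{3}$ and propagate the change through the downstream lemma --- as written, the claim that a suitable $\theta$ produces both the factor $3/2$ and the stated additive constant is false.
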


\begin{proof}
Write $\{\cS\in{\cal G}_{\kappa,\delta}\} = {\cal E}_u \cap {\cal E}_l$, where 
\begin{equation}
{\cal E}_u \triangleq \left\{ \bigcap_{\bX\in\cX} D(p_{\bX_{S}^*}\|p_{\bX_{S}}) \leq \frac{3\p}{2}\D(p_{\bX^*}\|p_{\bX}) + 2\left(\frac{2 \cD}{3}\right)\left[\log(1/\delta) + \pen(\bX)\log 2\right]\right\},
\end{equation}
and 
\begin{equation}
{\cal E}_l \triangleq \left\{ \bigcap_{\bX\in\cX} (-2\log \A(p_{\bX_{S}^*},p_{\bX_{S}})) \geq \frac{\p}{2}\left(-2\log \A(p_{\bX^*},p_{\bX})\right) - 2\left(\frac{2 \cA}{3}\right)\left[\log(1/\delta) + \pen(\bX)\log 2\right]\right\},
\end{equation}
Then, by straight-forward union bounding, $\Pr(\cS\notin{\cal G}_{\p,\delta}(\cX))\leq \Pr({\cal E}_u^c) + \Pr({\cal E}_l^c)$. The proof of the lemma entails bounding each term on the right-hand side, in turn.

We focus first on bounding the probability of the complement of ${\cal E}_u$. To proceed, we will find it convenient to consider an alternative (but equivalent) representation of the sampling operator described explicitly in terms of a collection $\{B_{i,j}\}_{(i,j)}\in[n_1]\times[n_2]$ of independent Bernoulli($\p$) random variables, so that $\cS = \{(i,j) : B_{i,j} = 1\}$.  On account of our assumption that the observations be conditionally independent given $\cS$, we have that for any fixed $\bX\in\cX$,
\begin{equation}
D(p_{\bX_{\cS}^*}\|p_{\bX_{\cS}}) = \sum_{(i,j)\in\cS} \D(p_{X^*_{i,j}},p_{X_{i,j}}) = \sum_{i,j} B_{i,j} \cdot \D(p_{X^*_{i,j}},p_{X_{i,j}}).
\end{equation}
Thus, our analysis reduces to quantifying the concentration behavior of random sums of these forms. For this, we employ a powerful version of Bernstein's Inequality established by Craig \cite{Craig:33} that, for our purposes, may be stated as follows: let $\{U_{i,j}\}$ be a collection of independent random variables indexed by $(i,j)$, each satisfying the moment condition that for some $h>0$,
\begin{equation*}
\E\left[|U_{i,j} - \E[U_{i,j}]|^k \right] \leq \frac{\mbox{var}(U_{i,j})}{2} \ k! \ h^{k-2}, 
\end{equation*} 
for all integers $k\geq 2$.  Then, for any $\tau>0$ and $0 \leq \epsilon h \leq \theta < 1$, the probability that
\begin{equation}\label{eqn:cb}
\sum_{i,j} (U_{i,j} - \E\left[U_{i,j}\right]) \geq \frac{\tau}{\epsilon} + \frac{\epsilon \ \sum_{i,j} \mbox{var}\left(U_{i,j} \right)}{2(1-\theta)} 
\end{equation} 
is no larger than $e^{-\tau}$.  A useful (and easy to verify) fact is that whenever $|U_{i,j} - \E[U_{i,j}]|\leq \beta$, the moment condition is satisfied by the choice $h=\beta/3$.  

Now, fix $\bX\in\cX$, and let 
$U_{i,j}(\bX) = B_{i,j} \cdot \D(p_{X^*_{i,j}},p_{X_{i,j}})$ and $\E\left[U_{i,j}(\bX)\right] = \p \cdot \D(p_{X^*_{i,j}},p_{X_{i,j}})$. Applying Craig's version of Bernstein's inequality with $\theta = 1/4$, $h=\cD/3$, and $\epsilon = \theta/h = 3/(4\cD)$, and using the fact that
\begin{equation}
\mbox{var}(U_{i,j}(\bX)) = \p(1-\p) \left(\D(p_{X^*_{i,j}},p_{X_{i,j}}) \right)^2 \leq \p\left(\D(p_{X^*_{i,j}},p_{X_{i,j}})\right)^2
\end{equation}
we obtain that for any $\tau>0$,
\begin{equation}
\Pr\left(\sum_{i,j} (B_{i,j}-\p) \D(p_{X^*_{i,j}},p_{X_{i,j}}) \geq 
\frac{4\cD \tau}{3} + \frac{\sum_{i,j} \p \left(\D(p_{X^*_{i,j}},p_{X_{i,j}})\right)^2}{2\cD} \right)\leq e^{-\tau}.
\end{equation}

Now, since $\D(p_{X^*_{i,j}},p_{X_{i,j}}) \leq \cD$ by definition, the above result ensures that for any $\tau>0$, 
\begin{equation}
\Pr\left(\sum_{i,j} (B_{i,j}-\p) \D(p_{X^*_{i,j}},p_{X_{i,j}}) \geq 
\frac{4\cD \tau}{3} + \frac{\p}{2} \D(p_{\bX^*},p_{\bX}) \right)\leq e^{-\tau}.
\end{equation}
Letting $\delta = e^{-\tau}$ and simplifying a bit, we obtain that for any $\delta\in(0,1)$,
\begin{equation}
\Pr\left(\D(p_{\bX^*_{\cS}},p_{\bX_{\cS}}) \geq 
\frac{4\cD \log(1/\delta)}{3} + \frac{3\p}{2} \D(p_{\bX^*},p_{\bX}) \right) \leq \delta.
\end{equation}
Now, if for each $\bX\in\cX$ we let $\delta_{\bX} = \delta\cdot 2^{-\pen(\bX)}$, we can apply the union bound to obtain that 
\begin{equation}\label{eqn:Dpiece}
\Pr\left(\bigcup_{\bX\in\cX} \D(p_{\bX^*_{\cS}},p_{\bX_{\cS}}) \geq 
\frac{3\p}{2} \D(p_{\bX^*},p_{\bX}) + 2\left(\frac{2 \cD}{3}\right) \left[\log(1/\delta) + \pen(\bX)\cdot \log 2\right] \right) \leq \delta.
\end{equation}
Following a similar approach for the affinity terms (with $U_{i,j}(\bX) = -B_{i,j} \cdot (-2\log \A(p_{X^*_{i,j}},p_{X_{i,j}}))$ for all $i,j$), we obtain that for any $\delta\in(0,1)$, 
\begin{equation}\label{eqn:Apiece}
\Pr\left(\bigcup_{\bX\in\cX} \left(-2\log\A(p_{\bX^*_{\cS}},p_{\bX_{\cS}})\right) \leq 
\frac{\p}{2}\left(-2\log \A(p_{\bX^*},p_{\bX})\right) - 2\left(\frac{2 \cA}{3}\right) \left[\log(1/\delta) + \pen(\bX)\cdot \log 2\right] \right) \leq \delta.
\end{equation}
The overall result now follows by combining equations \eqref{eqn:Dpiece} and \eqref{eqn:Apiece} using a union bound.
\end{proof}
Next, we show how the implications of a sample set being ``good'' can be incorporated into the analysis of \cite{Li:99:Thesis} to provide (conditional) error guarantees for completion tasks.

\subsubsection{A Conditional Error Guarantee}

Next, we establish the consistency of complexity penalized maximum likelihood estimators, conditionally on the event that the sample set $\cS$ is a fixed set $S$, such that for fixed $\p\in(0,1)$ and $\delta\in(0,1)$, $S\in{\cal G}_{\p,\delta}$ (i.e., $S$ is ``good'' according to the criteria outlined above). Our analysis then proceeds along the lines of the approach of \cite{Li:99:Thesis}, but with several key differences that arise because of our subsampling model.

As above, $\cX$ is a countable set of candidate estimates $\bX$ for $\bX^*$, with corresponding penalties $\pen(\bX)$ satisfying \eqref{eqn:Kraft}. Now, for any choice of $\mu$ satisfying $\mu \geq 1 + 2\cA/3$, we form an estimate $\widehat{\bX}^{\mu} = \widehat{\bX}^{\mu}(\bY_{S})$ according to
\begin{eqnarray}
\nonumber \widehat{\bX}^{\mu} &=& \arg \min_{\bX\in\cX} \left\{-\log p_{\bX_{S}}(\bY_{S}) + 2\mu \cdot \pen(\bX) \log 2\right\}\\
&=& \arg \max_{\bX\in\cX} \left\{\sqrt{p_{\bX_{S}}(\bY_{S})} \cdot 2^{-\mu \cdot \pen(\bX)}\right\}.
\end{eqnarray}
By this choice, we have that for any $\bX\in\cX$,
\begin{equation}
\sqrt{p_{\widehat{\bX}^{\mu}_{S}}(\bY_{S})} \cdot 2^{-\mu \cdot \pen(\widehat{\bX}^{\mu})} \geq \sqrt{p_{\bX_{S}}(\bY_{S})} \cdot 2^{-\mu \cdot \pen(\bX)}.
\end{equation}
This implies that for the particular (deterministic, and $\mu$-dependent) candidate
\begin{equation}
\widetilde{\bX}^{\mu} = \arg \min_{\bX\in\cX}  \left\{ \D(p_{\bX^*}\|p_{\bX}) + \frac{2}{\p}\cdot \left(\mu + \frac{2\cD}{3}\right) \pen(\bX) \log 2 \right\},
\end{equation}
(whose specification will become clear shortly) we have
\begin{equation}
\frac{\sqrt{p_{\widehat{\bX}^{\mu}_{S}}(\bY_{S})} \cdot 2^{-\mu \cdot \pen(\widehat{\bX}^{\mu})}}{\sqrt{p_{\widetilde{\bX}^{\mu}_{S}}(\bY_{S})} \cdot 2^{-\mu \cdot \pen(\widetilde{\bX}^{\mu})}} \geq 1.
\end{equation}
Using this, along with some straight-forward algebraic manipulations, we have
\begin{eqnarray}\label{eqn:step1}
\nonumber \lefteqn{-2\log \A(p_{\widehat{\bX}^{\mu}_{S}}, p_{\bX^*_{S}}) = 2\log \left(\frac{1}{\A(p_{\widehat{\bX}^{\mu}_{S}}, p_{\bX^*_{S}})}\right)}\hspace{3em}&&\\
\nonumber  &\leq& 2\log \left(\frac{\sqrt{p_{\widehat{\bX}^{\mu}_{S}}(\bY_{S})} \cdot 2^{-\mu \cdot \pen(\widehat{\bX}^{\mu})}}{\sqrt{p_{\widetilde{\bX}^{\mu}_{S}}(\bY_{S})} \cdot 2^{-\mu \cdot \pen(\widetilde{\bX}^{\mu})}} \cdot \frac{1}{\A(p_{\widehat{\bX}^{\mu}_{S}}, p_{\bX^*_{S}})}\right)\\
\nonumber  &=& 2\log \left(\frac{\sqrt{p_{\widehat{\bX}^{\mu}_{S}}(\bY_{S})}}{\sqrt{p_{\widetilde{\bX}^{\mu}_{S}}(\bY_{S})}}\cdot \frac{\sqrt{p_{\bX^*_{S}}(\bY_{S})}}{\sqrt{p_{\bX^*_{S}}(\bY_{S})}}\cdot \frac{2^{-\mu \cdot \pen(\widehat{\bX}^{\mu})}}{2^{-\mu \cdot \pen(\widetilde{\bX}^{\mu})}} \cdot \frac{1}{\A(p_{\widehat{\bX}^{\mu}_{S}}, p_{\bX^*_{S}})}\right)\\
&=& \log\left(\frac{p_{\bX^*_{S}}(\bY_{S})}{p_{\widetilde{\bX}^{\mu}_{S}}(\bY_{S})}\right) + 2\mu \cdot \pen(\widetilde{\bX}^{\mu})\log 2 + 
2 \log\left(
\frac{\sqrt{p_{\widehat{\bX}^{\mu}_{S}}(\bY_{S})/p_{\bX^*_{S}}(\bY_{S})}}
{\A(p_{\widehat{\bX}^{\mu}_{S}}, p_{\bX^*_{S}})} \cdot 2^{-\mu \cdot \pen(\widehat{\bX}^{\mu})}\right).
\end{eqnarray}

At this point, we make our first use of the implications of the ``good'' sample set condition.  In particular, since $S\in{\cal G}_{\p,\delta}$ and $\widehat{\bX}^{\mu}\in\cX$, we have that
\begin{equation}
-2\log \A(p_{\bX_{S}^*},p_{\widehat{\bX}^{\mu}_{S}}) \geq \frac{\p}{2}\left(-2\log \A(p_{\bX^*},p_{\widehat{\bX}^{\mu}})\right) - 2\left(\frac{2 \cA}{3}\right)\left[\log(1/\delta) + \pen(\widehat{\bX}^{\mu})\log 2\right].
\end{equation}
Incorporating this into \eqref{eqn:step1}, we have
\begin{eqnarray}
\nonumber \lefteqn{\frac{\p}{2} \left(-2\log \A(p_{\widehat{\bX}^{\mu}}, p_{\bX^*})\right) \leq  \log\left(\frac{p_{\bX^*_{\cS}}(\bY_{S})}{p_{\widetilde{\bX}^{\mu}_{S}}(\bY_{S})}\right) + 2\mu \cdot \pen(\widetilde{\bX}^{\mu})\log 2 + 2\left(\frac{2 \cA}{3}\right) \log(1/\delta) }\hspace{10em}&&\\
&&  +2 \log\left(
\frac{\sqrt{p_{\widehat{\bX}^{\mu}_{S}}(\bY_{S})/p_{\bX^*_{S}}(\bY_{S})}}
{\A(p_{\widehat{\bX}^{\mu}_{S}}, p_{\bX^*_{S}})} \cdot 2^{-\left(\mu- \frac{2 \cA}{3}\right) \pen(\widehat{\bX}^{\mu})}\right).
\end{eqnarray}
Now, we take expectations (formally, with respect to the conditional distribution of $\bY_{\cS}$ given $\{\cS=S, S\in{\cal G}_{\p,\delta}\}$) on both sides to obtain that
\begin{eqnarray}
\nonumber \lefteqn{\frac{\p}{2} \E\left[-2\log \A(p_{\widehat{\bX}^{\mu}}, p_{\bX^*}) \ \bigg| \ \cS = S, S\in{\cal G}_{\p,\delta}\right] \leq  \D(p_{\bX^*_{S}}\|p_{\widetilde{\bX}^{\mu}_{S}}) + 2\mu \cdot \pen(\widetilde{\bX}^{\mu})\log 2 + 2\left(\frac{2 \cA}{3}\right) \log(1/\delta)}\hspace{7em}&&\\
&& + 2 \E\left[\log\left(
\frac{\sqrt{p_{\widehat{\bX}^{\mu}_{S}}(\bY_{S})/p_{\bX^*_{S}}(\bY_{S})}}
{\A(p_{\widehat{\bX}^{\mu}_{S}}, p_{\bX^*_{S}})} \cdot 2^{-\left(\mu - \frac{2 \cA}{3}\right) \pen(\widehat{\bX}^{\mu})}\right) \ \bigg| \ \cS=S, S\in{\cal G}_{\p,\delta}\right].
\end{eqnarray}
Using again the implications of $S\in{\cal G}_{\p,\delta}$, that 
\begin{equation}
D(p_{\bX_{S}^*}\|p_{\widetilde{\bX}^{\mu}_{S}}) \leq \frac{3\p}{2}\D(p_{\bX^*}\|p_{\widetilde{\bX}^{\mu}}) + 2\left(\frac{2 \cD}{3}\right)\left[\log(1/\delta) + \pen(\widetilde{\bX}^{\mu})\log 2\right]
\end{equation}
since $\widetilde{\bX}^{\mu}\in\cX$, we have that 
\begin{eqnarray}\label{eqn:withlastterm}
\nonumber \lefteqn{\E\left[-2\log \A(p_{\widehat{\bX}^{\mu}}, p_{\bX^*}) \ \bigg| \ \cS=S, S\in{\cal G}_{\p,\delta}\right] \leq}\hspace{5em}&&\\
\nonumber && 3 \D(p_{\bX^*}\|p_{\widetilde{\bX}^{\mu}}) + \frac{4}{\p}\left(\mu  + \frac{2\cD}{3}\right)\pen(\widetilde{\bX}^{\mu})\log 2 + \frac{4}{\p}\left(\frac{2 (\cA+ \cD)}{3}\right) \log(1/\delta) \\
&& + \frac{4}{\p} \E\left[\log\left(\frac{\sqrt{p_{\widehat{\bX}^{\mu}_{S}}(\bY_{S})}/\sqrt{p_{\bX^*_{S}}(\bY_{S})}}{\A(p_{\widehat{\bX}^{\mu}_{S}}, p_{\bX^*_{S}})} \cdot 2^{-\left(\mu - \frac{2 \cA}{3}\right) \pen(\widehat{\bX}^{\mu})}\right) \ \bigg| \ \cS=S, S\in{\cal G}_{\p,\delta}\right].
\end{eqnarray}

Turning our attention to the last term on the right-hand side, we have that
\begin{eqnarray}
\nonumber \lefteqn{\E\left[\log\left(\frac{\sqrt{p_{\widehat{\bX}^{\mu}_{S}}(\bY_{S})}/\sqrt{p_{\bX^*_{S}}(\bY_{S})}}{\A(p_{\widehat{\bX}^{\mu}_{S}}, p_{\bX^*_{S}})} \cdot 2^{-\left(\mu - \frac{2 \cA}{3}\right) \pen(\widehat{\bX}^{\mu})}\right) \ \bigg| \ \cS=S, S\in{\cal G}_{\p,\delta}\right]}\hspace{8em}&&\\
\nonumber && \stackrel{(a)}{\leq} \log\left( \E\left[\frac{\sqrt{p_{\widehat{\bX}^{\mu}_{S}}(\bY_{S})}/\sqrt{p_{\bX^*_{S}}(\bY_{S})}}{\A(p_{\widehat{\bX}^{\mu}_{S}}, p_{\bX^*_{S}})} \cdot 2^{-\left(\mu - \frac{2 \cA}{3}\right) \pen(\widehat{\bX}^{\mu})} \ \bigg| \ \cS=S, S\in{\cal G}_{\p,\delta}\right]\right)\\
\nonumber && \stackrel{(b)}{\leq} \log\left( \E\left[ \sum_{\bX\in\cX} \frac{\sqrt{p_{\bX_{S}}(\bY_{S})}/\sqrt{p_{\bX^*_{S}}(\bY_{S})}}{\A(p_{\bX_{S}}, p_{\bX^*_{S}})} \cdot 2^{-\left(\mu - \frac{2 \cA}{3}\right) \pen(\bX)} \ \bigg| \ \cS=S, S\in{\cal G}_{\p,\delta}\right]\right)\\
\nonumber && = \log\left(  \sum_{\bX\in\cX} 2^{-\left(\mu - \frac{2 \cA}{3}\right) \pen(\bX)} \ \E\left[ \frac{\sqrt{p_{\bX_{S}}(\bY_{S})}/\sqrt{p_{\bX^*_{S}}(\bY_{S})}}{\A(p_{\bX_{S}}, p_{\bX^*_{S}})} \ \bigg| \ \cS=S, S\in{\cal G}_{\p,\delta}\right]\right)\\
&& \stackrel{(c)}{=} \log\left(  \sum_{\bX\in\cX} 2^{-\left(\mu - \frac{2 \cA}{3}\right) \pen(\bX)}\right).
\end{eqnarray}
In the above, $(a)$ follows from Jensen's Inequality, $(b)$ from the facts that $\widehat{\bX}^{\mu}\in\cX$ and each term in the sum is non-negative, and $(c)$ from the definition of the Hellinger affinity.  Now, because $\pen(\bX) \geq 1$ and $\mu \geq 1+2\cA/3$ we have that 
\begin{equation}
\sum_{\bX\in\cX} 2^{-\left(\mu - \frac{2 \cA}{3}\right) \pen(\bX)} \leq \sum_{\bX\in\cX} 2^{-\pen(\bX)} \leq 1.
\end{equation}
Thus, since the expectation term on the right-hand side of \eqref{eqn:withlastterm} is not positive, we can disregard it in the upper bound to obtain that
\begin{eqnarray}
\nonumber \lefteqn{\E\left[-2\log \A(p_{\widehat{\bX}^{\mu}}, p_{\bX^*}) \ \bigg| \ \cS=S, S\in{\cal G}_{p,\delta}\right]}\hspace{6em}&&\\
&\leq& 3 \D(p_{\bX^*}\|p_{\widetilde{\bX}^{\mu}}) + \frac{6}{\p}\left(\lambda + \frac{2\cD}{3}\right)\pen(\widetilde{\bX}^{\mu})\log 2 + \frac{4}{\p}\left(\frac{2 (\cA+ \cD)}{3}\right) \log(1/\delta),
\end{eqnarray}
where we have also inflated (slightly) the leading constant on the second term on the right-hand side to simplify subsequent analysis.   Now, recalling the definition of $\widetilde{\bX}^{\mu}$, we can state the result equivalently as an oracle bound, as
\begin{eqnarray}\label{eqn:condresult}
\nonumber \lefteqn{\E\left[-2\log \A(p_{\widehat{\bX}^{\mu}}, p_{\bX^*}) \ \bigg| \ \cS=S, S\in{\cal G}_{\p,\delta}\right]}\hspace{2em}&&\\
&\leq&  3 \cdot \min_{\bX\in\cX} \left\{ \D(p_{\bX^*}\|p_{\bX}) + \frac{2}{\p}\left(\mu + \frac{2\cD}{3}\right)\pen(\bX)\log 2\right\} 
+ \frac{4}{\p}\left(\frac{2 (\cA+ \cD)}{3}\right) \log(1/\delta) .
\end{eqnarray}

\subsubsection{Putting the Pieces Together}

The last steps of the analysis entail straightforward applications of conditioning arguments, along with the use of a well-known (and easy to verify) information inequality.  First, note that
\begin{eqnarray}
\nonumber \lefteqn{\E\left[-2\log \A(p_{\widehat{\bX}^{\mu}}, p_{\bX^*}) \ \bigg| \ \cS\in{\cal G}_{\p,\delta}\right]}\hspace{2em}&&\\
\nonumber &=& \sum_{S\in[n_1]\times [n_2]} \E\left[-2\log \A(p_{\widehat{\bX}^{\mu}}, p_{\bX^*}) \ \bigg| \ \cS=S, S\in{\cal G}_{\p,\delta}\right]\cdot \Pr(\cS=S | \cS\in{\cal G}_{\p,\delta})\\
&\leq& 3 \cdot \min_{\bX\in\cX} \left\{ \D(p_{\bX^*}\|p_{\bX}) + \frac{2}{\p}\left(\mu + \frac{2\cD}{3}\right)\pen(\bX)\log 2\right\} 
+ \frac{4}{\p}\left(\frac{2 (\cA+ \cD)}{3}\right) \log(1/\delta),
\end{eqnarray}
where the last step follows from using the bound in \eqref{eqn:condresult} and bringing that term outside of the sum since it does not depend on $S$, and using the fact that the conditional probability mass function $\Pr(\cS=S | \cS\in{\cal G})$ sums to $1$.  Now, using the fact that
\begin{eqnarray}
\lefteqn{\E\left[-2\log \A(p_{\widehat{\bX}^{\mu}}, p_{\bX^*})\right] = }&&\\
\nonumber && \E\left[-2\log \A(p_{\widehat{\bX}^{\mu}}, p_{\bX^*}) \ \bigg| \ \cS\in{\cal G}_{\p,\delta}\right] \cdot \Pr(\cS\in{\cal G}_{\p,\delta}) + \E\left[-2\log \A(p_{\widehat{\bX}^{\mu}}, p_{\bX^*}) \ \bigg| \ \cS\notin{\cal G}_{\p,\delta}\right] \cdot \Pr(\cS\notin{\cal G}_{\p,\delta}),
\end{eqnarray}
where the expectation on the left-hand side is with respect to the joint distribution of $\bY_{\cS}$ and $\cS$, we obtain that
\begin{eqnarray}
\lefteqn{\E\left[-2\log \A(p_{\widehat{\bX}^{\mu}}, p_{\bX^*})\right] \leq }&&\\
\nonumber && 3 \cdot \min_{\bX\in\cX} \left\{ \D(p_{\bX^*}\|p_{\bX}) + \frac{2}{\p}\left(\mu + \frac{2\cD}{3}\right)\pen(\bX)\log 2\right\} + \frac{4}{\p}\left(\frac{2 (\cA+ \cD)}{3}\right) \log(1/\delta)  + 2\delta \cdot n_1 n_2 \cA,
\end{eqnarray}
where we use the trivial upper bound $\E [-2\log \A(p_{\widehat{\bX}^{\mu}}, p_{\bX^*}) \ | \ \cS\notin{\cal G}_{\p,\delta}] \leq n_1 n_2 \cA$.  Now, since the result holds for any choice of $\delta\in(0,1)$, we can choose $\delta$ judiciously to ``balance'' the last two terms.  The particular choice $\delta = m^{-1} = (\p n_1n_2)^{-1}$ yields
\begin{eqnarray}
\nonumber \lefteqn{\E\left[-2\log \A(p_{\widehat{\bX}^{\mu}}, p_{\bX^*})\right]}&&\\
&\leq& 3 \cdot \min_{\bX\in\cX} \left\{ \D(p_{\bX^*}\|p_{\bX}) + \frac{2}{\p}\left(\mu + \frac{2\cD}{3}\right)\pen(\bX)\log 2\right\} + 
\frac{8 (\cA+ \cD) \log m}{3\p} + \frac{2\cA}{\p},
\end{eqnarray}
which implies the simpler (but slightly looser) bound
\begin{equation}
\E\left[-2\log \A(p_{\widehat{\bX}^{\mu}}, p_{\bX^*})\right] \leq 3 \cdot \min_{\bX\in\cX} \left\{ \D(p_{\bX^*}\|p_{\bX}) + \frac{2}{\p}\left(\mu + \frac{2\cD}{3}\right)\pen(\bX)\log 2\right\} + \frac{4\left(\cA+ \cD\right) \log m}{\p}.
\end{equation}

Finally, we make use of the fact that for each $i,j$, we have $-2\log \A(p_{X^*_{i,j}},p_{X_{i,j}}) \leq \D(p_{X^*_{i,j}},p_{X_{i,j}})$, which is readily verified with one application of Jensen's inequality.  It follows that upon identifying a suitable $\cD$, we may always take $\cA=\cD$.  Thus, it is sufficient to choose $\mu > 1+2\cD/3$ when forming our complexity regularized maximum likelihood estimator. We conclude that the error of any estimator formed using an appropriate regularization parameter $\mu$ satisfies
\begin{equation}
\frac{\E\left[-2\log \A(p_{\widehat{\bX}^{\mu}}, p_{\bX^*})\right]}{n_1 n_2} \leq 3 \cdot \min_{\bX\in\cX} \left\{ \frac{\D(p_{\bX^*}\|p_{\bX})}{n_1n_2} + \left(\mu + \frac{2\cD}{3}\right)\frac{\pen(\bX)2\log 2}{m}\right\} + \frac{8\cD \log(m)}{m},
\end{equation}
where we have divided both sides by $n_1 n_2$ and used the fact that $m=\p n_1 n_2$. Finally, making the substitution $\xi = 2\mu \log2$ yields the stated version of the result.

\bibliographystyle{IEEEbib}
\bibliography{NMCbib}

\begin{thebibliography}{10}

\bibitem{Bell:07}
R.~M. Bell and Y.~Koren,
\newblock ``Lessons from the netflix prize challenge,''
\newblock {\em ACM SIGKDD Explorations Newsletter}, vol. 9, no. 2, pp. 75--79,
  2007.

\bibitem{Candes:09:Exact}
E.~J. Cand{\`e}s and B.~Recht,
\newblock ``Exact matrix completion via convex optimization,''
\newblock {\em Foundations of Computational Mathematics}, vol. 9, no. 6, pp.
  717--772, 2009.

\bibitem{Candes:10:Power}
E.~J. Cand{\`e}s and T.~Tao,
\newblock ``The power of convex relaxation: {N}ear-optimal matrix completion,''
\newblock {\em IEEE Transactions on Information Theory}, vol. 56, no. 5, pp.
  2053--2080, 2010.

\bibitem{Keshavan:10}
R.~H. Keshavan, A.~Montanari, and S.~Oh,
\newblock ``Matrix completion from a few entries,''
\newblock {\em IEEE Transactions on Information Theory}, vol. 56, no. 6, pp.
  2980--2998, 2010.

\bibitem{Recht:11}
B.~Recht,
\newblock ``A simpler approach to matrix completion,''
\newblock {\em The Journal of Machine Learning Research}, vol. 12, pp.
  3413--3430, 2011.

\bibitem{Koren:09}
Y.~Koren, R.~Bell, and C.~Volinsky,
\newblock ``Matrix factorization techniques for recommender systems,''
\newblock {\em Computer}, vol. 42, no. 8, pp. 30--37, 2009.

\bibitem{Dai:10:SET}
W.~Dai and O.~Milenkovic,
\newblock ``{SET}: {A}n algorithm for consistent matrix completion,''
\newblock in {\em Proc. IEEE International Conference on Acoustics Speech and
  Signal Processing}, 2010, pp. 3646--3649.

\bibitem{Ma:11}
S.~Ma, D.~Goldfarb, and L.~Chen,
\newblock ``Fixed point and {B}regman iterative methods for matrix rank
  minimization,''
\newblock {\em Mathematical Programming}, vol. 128, no. 1-2, pp. 321--353,
  2011.

\bibitem{Mairal:08:color}
J.~Mairal, M.~Elad, and G.~Sapiro,
\newblock ``Sparse representation for color image restoration,''
\newblock {\em IEEE Transactions on Image Processing}, vol. 17, no. 1, pp.
  53--69, 2008.

\bibitem{Lan:13}
A.~S. Lan, A.~E. Waters, C.~Studer, and R.~G. Baraniuk,
\newblock ``Sparse factor analysis for learning and content analytics,''
\newblock {\em arXiv preprint arXiv:1303.5685}, 2013.

\bibitem{Agrawal:98:Subspace}
R.~Agrawal, J.~Gehrke, D.~Gunopulos, and P.~Raghavan,
\newblock ``Automatic subspace clustering of high dimensional data for data
  mining applications,''
\newblock in {\em ACM SIGMOD Int. Conf. Management of Data}, 1998, pp. 94--105.

\bibitem{Tseng:00}
P.~Tseng,
\newblock ``Nearest $q$-flat to $m$ points,''
\newblock {\em Journal of Optimization Theory and Applications}, vol. 105, no.
  1, pp. 249--252, 2000.

\bibitem{Vidal:05:GPCA}
R.~Vidal, Y.~Ma, and S.~Sastry,
\newblock ``Generalized principal component analysis ({GPCA}),''
\newblock {\em IEEE Transactions on Pattern Analysis and Machine Intelligence},
  vol. 27, no. 12, pp. 1945--1959, 2005.

\bibitem{Soltanolkotabi:12}
M.~Soltanolkotabi and E.~J. Cand{\`e}s,
\newblock ``A geometric analysis of subspace clustering with outliers,''
\newblock {\em The Annals of Statistics}, vol. 40, no. 4, pp. 2195--2238, 2012.

\bibitem{Elhamifar:13:Sparse}
E.~Elhamifar and R.~Vidal,
\newblock ``Sparse subspace clustering: {A}lgorithm, theory, and
  applications,''
\newblock {\em IEEE Transactions on Pattern Analysis and Machine Intelligence},
  vol. 35, no. 11, pp. 2765--2781, 2013.

\bibitem{Soltanolkotabi:13}
M.~Soltanolkotabi, E.~Elhamifar, and E.~J. Cand{\`e}s,
\newblock ``Robust subspace clustering,''
\newblock {\em arXiv preprint arXiv:1301.2603}, 2013.

\bibitem{Keshavan:10:Noisy}
R.~H. Keshavan, A.~Montanari, and S.~Oh,
\newblock ``Matrix completion from noisy entries,''
\newblock {\em Journal of Machine Learning Research}, vol. 11, pp. 2057--2078,
  2010.

\bibitem{Lee:10}
K.~Lee and Y.~Bresler,
\newblock ``{ADM}i{RA}: {A}tomic decomposition for minimum rank
  approximation,''
\newblock {\em IEEE Transactions on Information Theory}, vol. 56, no. 9, pp.
  4402--4416, 2010.

\bibitem{Candes:10:MCNoise}
E.~J. Cand{\`e}s and Y.~Plan,
\newblock ``Matrix completion with noise,''
\newblock {\em Proceedings of the IEEE}, vol. 98, no. 6, pp. 925--936, 2010.

\bibitem{Koltch:11}
V.~Koltchinskii, K.~Lounici, and A.~B. Tsybakov,
\newblock ``Nuclear-norm penalization and optimal rates for noisy low-rank
  matrix completion,''
\newblock {\em The Annals of Statistics}, vol. 39, no. 5, pp. 2302--2329, 2011.

\bibitem{Negahban:12}
S.~Negahban and M.~J. Wainwright,
\newblock ``Restricted strong convexity and weighted matrix completion:
  {O}ptimal bounds with noise,''
\newblock {\em The Journal of Machine Learning Research}, vol. 13, no. 1, pp.
  1665--1697, 2012.

\bibitem{Srebro:04}
N.~Srebro, N.~Alon, and T.~S. Jaakkola,
\newblock ``Generalization error bounds for collaborative prediction with
  low-rank matrices,''
\newblock in {\em Advances In Neural Information Processing Systems}, 2004, pp.
  1321--1328.

\bibitem{Davenport:12}
M.~A. Davenport, Y.~Plan, E.~van~den Berg, and M.~Wootters,
\newblock ``1-bit matrix completion,''
\newblock {\em arXiv preprint arXiv:1209.3672}, 2012.

\bibitem{Plan:14}
Y.~Plan, R.~Vershynin, and E.~Yudovina,
\newblock ``High-dimensional estimation with geometric constraints,''
\newblock {\em arXiv preprint arXiv:1404.3749}, 2014.

\bibitem{Olshausen:97}
B.~A. Olshausen and D.~J. Field,
\newblock ``Sparse coding with an overcomplete basis set: {A} strategy employed
  by {V}1?,''
\newblock {\em Vision Research}, vol. 37, pp. 3311--3325, 1997.

\bibitem{Aharon:06}
M.~Aharon, M.~Elad, and A.~Bruckstein,
\newblock ``K-{SVD}: {A}n algorithm for designing overcomplete dictionaries for
  sparse representation,''
\newblock {\em IEEE Trans. Signal Proc.}, vol. 54, no. 11, pp. 4311--4322,
  2006.

\bibitem{Mairal:09}
J.~Mairal, F.~Bach, J.~Ponce, and G.~Sapiro,
\newblock ``Online dictionary learning for sparse coding,''
\newblock in {\em Proc. ICML}, 2009.

\bibitem{Boyd:11}
S.~Boyd, N.~Parikh, E.~Chu, B.~Peleato, and J.~Eckstein,
\newblock ``Distributed optimization and statistical learning via the
  alternating direction method of multipliers,''
\newblock {\em Foundations and Trends{\textregistered} in Machine Learning},
  vol. 3, no. 1, pp. 1--122, 2011.

\bibitem{Li:99:Thesis}
J.~Q. Li,
\newblock {\em Estimation of Mixture Models},
\newblock Ph.D. thesis, Dept. of Statistics, Yale University, New Haven, CT,
  1999.

\bibitem{Barron:99}
A.~Barron, L.~Birg{\'e}, and P.~Massart,
\newblock ``Risk bounds for model selection via penalization,''
\newblock {\em Probability theory and related fields}, vol. 113, no. 3, pp.
  301--413, 1999.

\bibitem{Kolaczyk:04}
E.~D. Kolaczyk and R.~D. Nowak,
\newblock ``Multiscale likelihood analysis and complexity penalized
  estimation,''
\newblock {\em Annals of Statistics}, pp. 500--527, 2004.

\bibitem{Willett:07}
R.~M. Willett and R.~D. Nowak,
\newblock ``Multiscale {P}oisson intensity and density estimation,''
\newblock {\em IEEE Transactions on Information Theory}, vol. 53, no. 9, pp.
  3171--3187, 2007.

\bibitem{Raginsky:10}
M.~Raginsky, R.~Willett, Z.~T. Harmany, and R.~F. Marcia,
\newblock ``Compressed sensing performance bounds under {P}oisson noise,''
\newblock {\em IEEE Transactions on Signal Processing}, vol. 58, no. 8, pp.
  3990--4002, 2010.

\bibitem{Soni:14:Poisson}
A.~Soni and J.~Haupt,
\newblock ``Estimation error guarantees for {P}oisson denoising with sparse and
  structured dictionary models,''
\newblock in {\em Proc. International Symposium on Information Theory}, 2014,
\newblock to appear.

\bibitem{Haupt:14:bit}
J.~D. Haupt, N.~D. Sidiropoulos, and G.~B. Giannakis,
\newblock ``Sparse dictionary learning from 1-bit data,''
\newblock in {\em Proc. International Conference on Acoustics, Speech and
  Signal Processing}, 2014.

\bibitem{Rak:13}
A.~Rakotomamonjy,
\newblock ``Applying alternating direction method of multipliers for
  constrained dictionary learning,''
\newblock {\em Neurocomputing}, vol. 106, pp. 126--136, 2013.

\bibitem{Xing:12:IncompleteHyperspectral}
Z.~Xing, M.~Zhou, A.~Castrodad, G.~Sapiro, and L.~Carin,
\newblock ``Dictionary learning for noisy and incomplete hyperspectral
  images,''
\newblock {\em SIAM Journal on Imaging Sciences}, vol. 5, no. 1, pp. 33--56,
  2012.

\bibitem{Zhou:12:NonparametricBayesIncomplete}
M.~Zhou, H.~Chen, J.~Paisley, L.~Ren, L.~Li, Z.~Xing, D.~Dunson, G.~Sapiro, and
  L.~Carin,
\newblock ``Nonparametric bayesian dictionary learning for analysis of noisy
  and incomplete images,''
\newblock {\em IEEE Transactions on Image Processing}, vol. 21, no. 1, pp.
  130--144, 2012.

\bibitem{Udell:14}
M.~Udell, C.~Horn, R.~Zadeh, and S.~Boyd,
\newblock ``Generalized low rank models,''
\newblock {\em arXiv preprint arXiv:1410.0342}, 2014.

\bibitem{Donoho:03}
D.~Donoho and V.~Stodden,
\newblock ``When does non-negative matrix factorization give a correct
  decomposition into parts?,''
\newblock in {\em Advances in Neural Information Processing Systems}, 2003.

\bibitem{Arora:12}
S.~Arora, R.~Ge, R.~Kannan, and A.~Moitra,
\newblock ``Computing a nonnegative matrix factorization--provably,''
\newblock in {\em Proc. ACM Symp. on Theory of Computing}, 2012, pp. 145--162.

\bibitem{Esser:12}
E.~Esser, M.~Moller, S.~Osher, G.~Sapiro, and J.~Xin,
\newblock ``A convex model for nonnegative matrix factorization and
  dimensionality reduction on physical space,''
\newblock {\em IEEE Transactions on Image Processing}, vol. 21, no. 7, pp.
  3239--3252, 2012.

\bibitem{Recht:12}
B.~Recht, C.~Re, J.~Tropp, and V.~Bittorf,
\newblock ``Factoring nonnegative matrices with linear programs,''
\newblock in {\em Advances in Neural Information Processing Systems}, 2012, pp.
  1214--1222.

\bibitem{Aharon:06:Uniqueness}
M.~Aharon, M.~Elad, and A.~M. Bruckstein,
\newblock ``On the uniqueness of overcomplete dictionaries, and a practical way
  to retrieve them,''
\newblock {\em Linear algebra and its applications}, vol. 416, no. 1, pp.
  48--67, 2006.

\bibitem{Gribonval:10:Identification}
R.~Gribonval and K.~Schnass,
\newblock ``Dictionary identification -- {S}parse matrix-factorization via
  $l_1$minimization,''
\newblock {\em IEEE Transactions on Information Theory}, vol. 56, no. 7, pp.
  3523--3539, 2010.

\bibitem{Geng:11}
Q.~Geng, H.~Wang, and J.~Wright,
\newblock ``On the local correctness of $\ell^1$ minimization for dictionary
  learning,''
\newblock {\em Submitted}, 2011,
\newblock online at: \url{arxiv.org/abs/1101.5672}.

\bibitem{Spielman:13:Exact}
D.~A. Spielman, H.~Wang, and J.~Wright,
\newblock ``Exact recovery of sparsely-used dictionaries,''
\newblock in {\em Proceedings of the Twenty-Third international joint
  conference on Artificial Intelligence}, 2013, pp. 3087--3090.

\bibitem{Schnass:13:Identifiability}
K.~Schnass,
\newblock ``On the identifiability of overcomplete dictionaries via the
  minimisation principle underlying {K}-{SVD},''
\newblock {\em Submitted}, 2013,
\newblock online at: \url{arxiv.org/abs/1301.3375}.

\bibitem{Agarwal:13}
A.~Agarwal, A.~Anandkumar, P.~Jain, P.~Netrapalli, and R.~Tandon,
\newblock ``Learning sparsely used overcomplete dictionaries via alternating
  minimization,''
\newblock {\em arXiv preprint arXiv:1310.7991}, 2013.

\bibitem{Jenatton:12}
R.~Jenatton, R.~Gribonval, and F.~Bach,
\newblock ``Local stability and robustness of sparse dictionary learning in the
  presence of noise,''
\newblock {\em Submitted}, 2012,
\newblock online at: \url{arxiv.org/abs/1210.0685}.

\bibitem{Zou:06}
H.~Zou, T.~Hastie, and R.~Tibshirani,
\newblock ``Sparse principal component analysis,''
\newblock {\em Journal of computational and graphical statistics}, vol. 15, no.
  2, pp. 265--286, 2006.

\bibitem{Moghaddam:05}
B.~Moghaddam, Y.~Weiss, and S.~Avidan,
\newblock ``Spectral bounds for sparse {PCA}: {E}xact and greedy algorithms,''
\newblock in {\em Advances in Neural Information Processing Systems}, 2005, pp.
  915--922.

\bibitem{dA:07}
A.~d'Aspremont, L.~El Ghaoui, M.~I. Jordan, and G.~R.~G. Lanckriet,
\newblock ``A direct formulation for sparse {PCA} using semidefinite
  programming,''
\newblock {\em SIAM review}, vol. 49, no. 3, pp. 434--448, 2007.

\bibitem{Shen:08}
H.~Shen and J.~Z. Huang,
\newblock ``Sparse principal component analysis via regularized low rank matrix
  approximation,''
\newblock {\em Journal of Multivariate Analysis}, vol. 99, no. 6, pp.
  1015--1034, 2008.

\bibitem{Zhang:12}
Y.~Zhang, A.~dÕAspremont, and L.~El Ghaoui,
\newblock ``Sparse {PCA}: {C}onvex relaxations, algorithms and applications,''
\newblock in {\em Handbook on Semidefinite, Conic and Polynomial Optimization},
  pp. 915--940. Springer, 2012.

\bibitem{Vu:12}
V.~Q. Vu and J.~Lei,
\newblock ``Minimax rates of estimation for sparse {PCA} in high dimensions,''
\newblock in {\em International Conference on Artificial Intelligence and
  Statistics}, 2012, pp. 1278--1286.

\bibitem{Lounici:13}
K.~Lounici,
\newblock ``Sparse principal component analysis with missing observations,''
\newblock in {\em High Dimensional Probability VI}, pp. 327--356. Springer,
  2013.

\bibitem{Gruber:04}
A.~Gruber and Y.~Weiss,
\newblock ``Multibody factorization with uncertainty and missing data using the
  {EM} algorithm,''
\newblock in {\em Proc. Computer Vision and Pattern Recognition}, 2004.

\bibitem{Vidal:08}
R.~Vidal, R.~Tron, and R.~Hartley,
\newblock ``Multiframe motion segmentation with missing data using
  {P}ower{F}actorization and {GPCA},''
\newblock {\em International Journal of Computer Vision}, vol. 79, no. 1, pp.
  85--105, 2008.

\bibitem{Eriksson:11:Highrank}
B.~Eriksson, L.~Balzano, and R.~Nowak,
\newblock ``High-rank matrix completion and subspace clustering with missing
  data,''
\newblock {\em arXiv preprint arXiv:1112.5629}, 2011.

\bibitem{Singh:12}
A.~Singh, A.~Krishnamurthy, S.~Balakrishnan, and M.~Xu,
\newblock ``Completion of high-rank ultrametric matrices using selective
  entries,''
\newblock in {\em Proc. IEEE International Conference on Signal Processing and
  Communications}, 2012, pp. 1--5.

\bibitem{Baraniuk:10}
R.~G. Baraniuk, V.~Cevher, and M.~B. Wakin,
\newblock ``Low-dimensional models for dimensionality reduction and signal
  recovery: {A} geometric perspective,''
\newblock {\em Proceedings of the IEEE}, vol. 98, no. 6, pp. 959--971, 2010.

\bibitem{Candes:07:Dantzig}
E.~J. Cand{\`e}s and T.~Tao,
\newblock ``The {D}antzig selector: {S}tatistical estimation when $p$ is much
  larger than $n$,''
\newblock {\em The Annals of Statistics}, pp. 2313--2351, 2007.

\bibitem{Haupt:06}
J.~Haupt and R.~Nowak,
\newblock ``Signal reconstruction from noisy random projections,''
\newblock {\em IEEE Transactions on Information Theory}, vol. 52, no. 9, pp.
  4036--4048, 2006.

\bibitem{Needell:09}
D.~Needell and J.~A. Tropp,
\newblock ``{C}o{S}a{MP}: {I}terative signal recovery from incomplete and
  inaccurate samples,''
\newblock {\em Applied and Computational Harmonic Analysis}, vol. 26, no. 3,
  pp. 301--321, 2009.

\bibitem{Anscombe:48}
F.~J. Anscombe,
\newblock ``The transformation of {P}oisson, binomial and negative-binomial
  data,''
\newblock {\em Biometrika}, vol. 35, no. 3-4, pp. 246--254, 1948.

\bibitem{Jiang:14}
X.~Jiang, G.~Raskutti, and R.~Willett,
\newblock ``Minimax optimal rates for {P}oisson inverse problems with physical
  constraints,''
\newblock {\em arXiv preprint arXiv:1403.6532}, 2014.

\bibitem{Luo:05}
Z.-Q. Luo,
\newblock ``Universal decentralized estimation in a bandwidth constrained
  sensor network,''
\newblock {\em IEEE Transactions on Information Theory}, vol. 51, no. 6, pp.
  2210--2219, 2005.

\bibitem{Ribiero:06}
A.~Ribeiro and G.~B. Giannakis,
\newblock ``Bandwidth-constrained distributed estimation for wireless sensor
  networks-part i: Gaussian case,''
\newblock {\em IEEE Transactions on Signal Processing}, vol. 54, no. 3, pp.
  1131--1143, 2006.

\bibitem{McCullagh:89}
P.~McCullagh and J.~A. Nelder,
\newblock {\em Generalized linear models},
\newblock Chapman \& Hall, 1989.

\bibitem{Lu:12}
Z.~Lu,
\newblock ``Iterative hard thresholding methods for $\ell_0$ regularized convex
  cone programming,''
\newblock {\em Mathematical Programming}, pp. 1--30, 2012.

\bibitem{VanDeGeer:00}
S.~van~de Geer,
\newblock {\em Empirical Processes in {M}-estimation}, vol. 105,
\newblock Cambridge University Press, 2000.

\bibitem{Koltch:11book}
V.~Koltchinskii,
\newblock {\em Oracle Inequalities in Empirical Risk Minimization and Sparse
  Recovery Problems}, vol. 2033,
\newblock Springer, 2011.

\bibitem{Boucheron:13}
S.~Boucheron, G.~Lugosi, and P.~Massart,
\newblock {\em Concentration inequalities: {A} nonasymptotic theory of
  independence},
\newblock Oxford University Press, 2013.

\bibitem{Beck:09}
A.~Beck and M.~Teboulle,
\newblock ``A fast iterative shrinkage-thresholding algorithm for linear
  inverse problems,''
\newblock {\em SIAM Journal on Imaging Sciences}, vol. 2, no. 1, pp. 183--202,
  2009.

\bibitem{Chand:12}
V.~Chandrasekaran, B.~Recht, P.~A. Parrilo, and A.~S. Willsky,
\newblock ``The convex geometry of linear inverse problems,''
\newblock {\em Foundations of Computational Mathematics}, vol. 12, no. 6, pp.
  805--849, 2012.

\bibitem{Richard:14}
E.~Richard, G.~Obozinski, and J.-P. Vert,
\newblock ``Tight convex relaxations for sparse matrix factorization,''
\newblock {\em arXiv preprint arXiv:1407.5158}, 2014.

\bibitem{Cover:12}
T.~M. Cover and J.~A. Thomas,
\newblock {\em Elements of information theory},
\newblock John Wiley \& Sons, 2006.

\bibitem{Haupt:14}
J.~D. Haupt, N.~D. Sidiropoulos, and G.~B. Giannakis,
\newblock ``Sparse dictionary learning from $1$-bit data,''
\newblock in {\em Proc. International Conference on Acoustics, Speech and
  Signal Processing}, 2014.

\bibitem{Li:99}
J.~Q. Li and A.~R. Barron,
\newblock ``Mixture density estimation,''
\newblock in {\em Advances in Neural Information Processing Systems}, 1999.

\bibitem{Grunwald:07:MDL}
P.~D. Gr{\"u}nwald,
\newblock {\em The minimum description length principle},
\newblock MIT press, 2007.

\bibitem{Craig:33}
C.~C. Craig,
\newblock ``On the {T}chebychef inequality of {B}ernstein,''
\newblock {\em The Annals of Mathematical Statistics}, vol. 4, no. 2, pp.
  94--102, 1933.

\end{thebibliography}

\end{document}